\documentclass{article}

\PassOptionsToPackage{numbers}{natbib}
\usepackage[final,main]{neurips_2026}
\usepackage[utf8]{inputenc} 
\usepackage[T1]{fontenc}    
\usepackage{hyperref}       
\usepackage{url}            
\usepackage{booktabs}       
\usepackage{nicefrac}       
\usepackage{microtype}      
\usepackage{amsmath, amssymb, amsthm}
\usepackage{mathtools}
\usepackage{multirow}
\usepackage{graphicx}
\usepackage{subcaption}
\usepackage{enumitem}
\usepackage[table]{xcolor}
\usepackage{bm}
\usepackage{algorithm}
\usepackage[noend]{algorithmic}
\usepackage{pifont}
\usepackage[textsize=tiny]{todonotes}
\usepackage{wrapfig}
\usepackage{float}
\usepackage{etoc}
\usepackage[capitalize,nameinlink]{cleveref}

\usepackage{minitoc} 
\usepackage[title,titletoc]{appendix}
\mtcsetfont{minitoc}{section}{\small\rmfamily}
\mtcsetfont{minitoc}{subsection}{\small\rmfamily}

\definecolor{dodgerblue}{RGB}{30, 144, 255}
\newcommand{\cmark}{\ding{51}}%
\newcommand{\xmark}{\ding{55}}%
\newtheorem{theorem}{Theorem}

\newtheorem{proposition}[theorem]{Proposition}

\newtheorem{assumption}{Assumption}
\newtheorem{remark}{Remark}

\title{Deconstructing Multi-Task Active Learning: \\
The Paradox of Gradient Conflict and Orthogonal Decomposition}

\author{%
  Juho Jung\thanks{Equal contribution.} \\
  Sungkyunkwan University \\
  \texttt{jhjeon9@g.skku.edu} \\
  \And
  Seongjae Kang\footnotemark[1] \\
  KAIST \\
  \texttt{tjdwo2744@kaist.ac.kr} \\
  \And
  Jinyoung Han\thanks{Corresponding author.} \\
  Sungkyunkwan University \\
  \texttt{jinyounghan@skku.edu} \\
}

\begin{document}
\etocdepthtag.toc{mainpaper}
\maketitle

\begin{abstract}
  Existing multi-task active learning (MTL-AL) methods predominantly rely on the linear scalarization of heterogeneous objectives, such as uncertainty, diversity, and gradient conflict. We argue that this entangled formulation is fundamentally limited, as linear combination fails to capture Pareto-optimal trade-offs under conflicting objectives. To address this limitation, we propose a paradigm shift from \emph{combination} to \emph{deconstruction} and introduce a principled framework that rethinks MTL-AL along three orthogonal dimensions. First, we identify the \textit{Conflict Paradox}: although the optimization literature treats gradient conflict as interference to be suppressed, carrying that intuition into the acquisition score is counterproductive --- penalizing conflict when ranking candidates is measurably worse than not scoring it at all. We therefore decouple the two concerns: the acquisition function does not score conflict, and the interference it causes is resolved separately by gradient surgery during training. Second, we introduce \textit{Orthogonal Decomposition}, separating the acquisition objective into two distinct axes: \textit{Informativeness} and \textit{Exploration}. This two-stage process, consisting of informativeness filtering followed by coverage maximization, prevents ``\textit{diversity dilution}'' by excluding uninformative outliers before enforcing diversity. Third, we propose \textit{Cross-Task Mutual Information (CTMI)}, a covariance-based statistic of cross-task dependency computed from MC-dropout task entropies. 
  Extensive experiments on COCO, Cityscapes, and NYUv2 --- including comparisons against recent multi-task active learning and data-selection baselines --- demonstrate consistent improvements over state-of-the-art MTL-AL methods, highlighting that structural rethinking of acquisition yields larger gains than incremental heuristic design. Our code is publicly available at: \href{https://github.com/Juho-Jung/multitask_active_learning_official}{\texttt{github.com/Juho-Jung/multitask\_active\_learning\_official}}
\end{abstract}

\section{Introduction}
\label{sec:intro}

Multi-task learning (MTL) has emerged as a fundamental paradigm for building efficient and generalizable representations by leveraging shared structures across related tasks~\citep{caruana1997multitask,ruder2017overview}. However, its success highly depends on the availability of densely annotated data for every task, resulting in prohibitive annotation costs as the number and complexity of tasks grow~\citep{zhang2021survey,standley2020tasks}. Active Learning (AL) offers an approach to mitigate this burden by selectively querying the most valuable samples~\citep{settles2009active}. Yet, extending AL to MTL settings introduces a fundamental challenge: the notion of ``\emph{informativeness}'' becomes inherently multi-faceted, requiring the joint consideration of uncertainty, diversity, and inter-task relationships~\citep{xiao2020efficient}.

\paragraph{The Fundamental Problem: Linear Combination.}

Most existing MTL active learning methods address this challenge by formulating sample selection as a multi-objective optimization problem solved via linear scalarization~\citep{xiao2020efficient,hekimoglu2024active,harpale2012multi,hekimoglu2023multi,reichart2008multi,rotman2022multi,chen2022active}. Representative approaches such as BADGE~\citep{ash2019deep} and BatchBALD~\citep{kirsch2019batchbald} combine uncertainty and diversity through weighted summation. In multi-task settings, this strategy is commonly extended by aggregating task-wise uncertainties~\citep{sinha2019variational,yoo2019learning} or incorporating penalties for gradient conflict~\citep{yu2020gradient,chen2018gradnorm}.
We argue this reliance on linear combination reflects a fundamental geometric limitation rather than a mere modeling choice. Since the objectives are incommensurable and measured in different units (e.g., entropy vs. feature distance vs. gradient cosine), linear scalarization fails to discover non-convex regions of the Pareto frontier~\citep{sener2018multi, lin2019pareto}, often leading to sub-optimal trade-offs driven by arbitrary weighting. This limitation manifests in two failure modes: ``\textit{Diversity Dilution},'' where outliers dominate due to extreme distance scores~\citep{sener2017active}, and the ``\textit{Conflict Paradox},'' where samples informative for inter-task trade-offs are prematurely discarded.

\paragraph{Deconstruction as a paradigm.} Rather than seeking better weighting schemes, we rethink the problem structure via two principles: (i) \textbf{Decoupling Selection from Optimization} — keep gradient conflict out of the acquisition score entirely, and resolve the interference it causes at training time via gradient surgery, instead of letting an optimization-time concern filter which samples are ever labelled; (ii) \textbf{Orthogonal Decomposition} — decompose acquisition into two sequential, conceptually orthogonal axes (\textit{Informativeness} and \textit{Exploration}) and apply them hierarchically so that diversity is enforced only within a subspace of genuinely informative candidates.


Our contributions are summarized as follows:
\begin{itemize}[leftmargin=1.2em,nosep]
    \item We identify the \textbf{Conflict Paradox}: penalizing gradient conflict inside the acquisition score is measurably worse than not scoring conflict at all, and we introduce a principled decoupling that leaves conflict to the optimizer rather than to the selector.
    \item We introduce \textbf{Orthogonal Decomposition}, a two-stage selection strategy that first filters candidates by informativeness and subsequently maximizes coverage within the retained set, preventing diversity dilution.
    \item We propose \textbf{Cross-Task Mutual Information (CTMI)}, a forward-pass-only statistic of cross-task dependency, and show that any non-zero cross-task covariance in our architecture is structurally attributable to the shared representation; we validate CTMI as an acquisition signal by ablation rather than by a distributional bound.
    \item We demonstrate consistent improvements on three benchmarks --- COCO (natural images), Cityscapes (autonomous driving) and NYUv2 (indoor scenes) --- reaching $+10.0\%$ over Random and $+2.4\%$ to $+3.0\%$ over prior active-learning methods under our main protocol (Table~\ref{tab:main_results}), and the highest paired gain against five recent multi-stage selection methods under a separate unified multi-seed protocol (Table~\ref{tab:unified_baselines}).
\end{itemize}

\section{Related Work}
\label{sec:related_work}

\paragraph{Active learning and the linear-combination trap.}
Deep AL is commonly categorized into uncertainty-based methods (entropy~\citep{shannon1948mathematical,gal2017deep}, margin~\citep{roth2006margin}, MC-Dropout~\citep{gal2016dropout}, deep ensembles~\citep{lakshminarayanan2017simple}), diversity-based methods (CoreSet~\citep{sener2017active}, discriminative coverage~\citep{gissin2019discriminative,sinha2019variational}), and hybrids that linearly combine the two: BADGE~\citep{ash2019deep} combines gradient magnitude with k-means++ clustering, BatchBALD~\citep{kirsch2019batchbald} maximizes batch-level MI, and VAAL~\citep{sinha2019variational}/Learning Loss~\citep{yoo2019learning} learn auxiliary scalar predictors. As we show in Section~\ref{sec:experiments} and Appendix~\ref{app:qualitative}, this linear-combination paradigm suffers from \emph{``Diversity Dilution''} — uninformative outliers with extreme distance scores dominate selection.

\begin{figure}[t]
    \begin{minipage}{0.5\textwidth}
        \centering
        \includegraphics[width=1.0\linewidth]{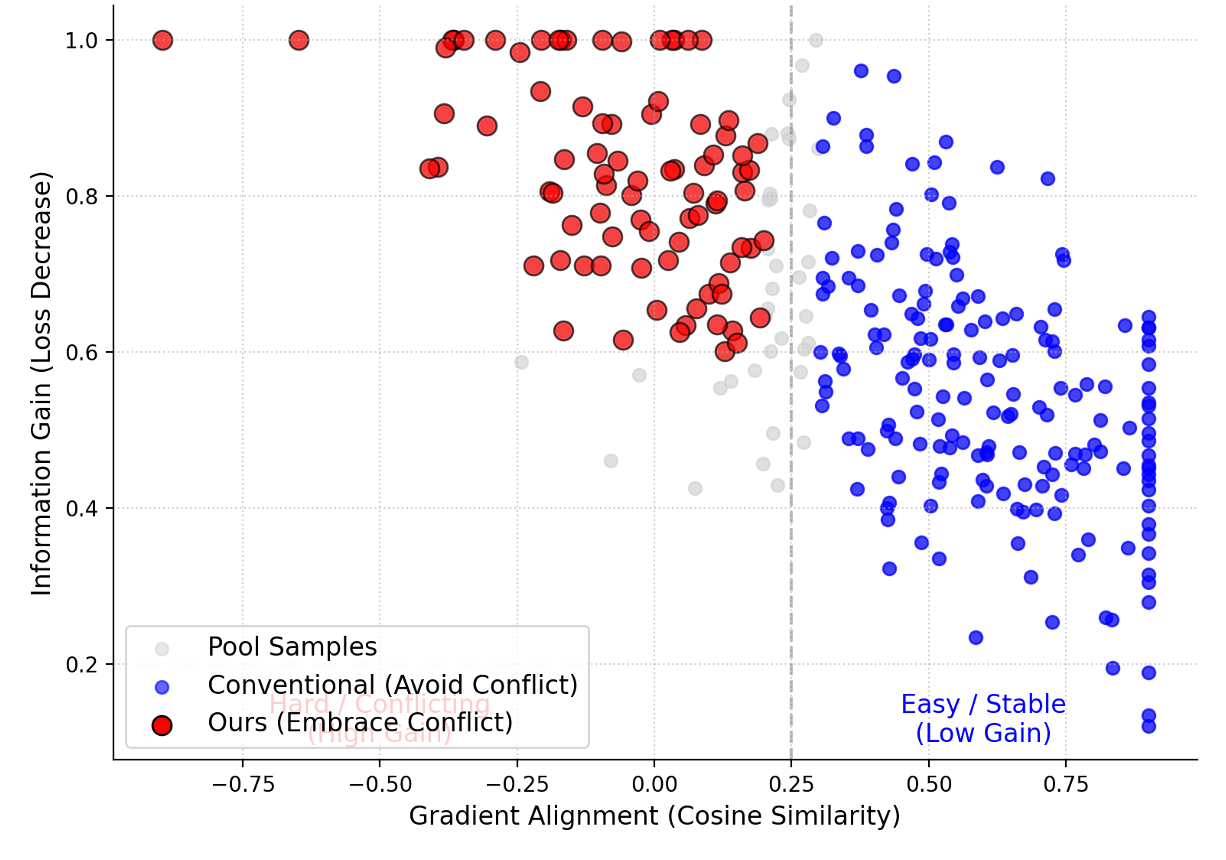}
    \end{minipage}\hfill
    \begin{minipage}{0.46\textwidth}
        \caption{\textbf{Visualizing the Conflict Paradox.} Retrospective oracle analysis on \textit{1,000 COCO benchmark} samples using a multi-task model pretrained on a \textit{1\% seed set}. Gradient alignment (cosine similarity, x-axis) is negatively correlated with actual loss reduction after training (y-axis) under AdamW with PCGrad off, ruling out the resolution mechanism as a confound: high-conflict samples yield larger information gain once basic features are learned.}
        \label{fig:paradox}
    \end{minipage}
\end{figure}

\paragraph{Multi-task learning, gradient conflict, and MTL active learning.} MTL suffers from gradient conflict between tasks, and every established remedy acts at \emph{optimization} time: task balancing (GradNorm~\citep{chen2018gradnorm}, MGDA~\citep{sener2018multi}) and gradient surgery (PCGrad~\citep{yu2020gradient}, CAGrad~\citep{liu2021conflict}). Cross-task conflict has, to our knowledge, never been used as an \emph{acquisition} signal: multi-task AL scores candidates by aggregated per-task uncertainty~\citep{reichart2008multi} or by extending single-task heuristics~\citep{shaikh2020examining,sharma2017learning,rotman2022multi}, leaving the selection stage blind to inter-task structure. We call the resulting situation the \emph{\textbf{Conflict Paradox}}: when a conflict term \emph{is} introduced into the acquisition score, penalizing it --- the reading that the optimization literature invites --- is measurably worse than omitting it altogether (Table~\ref{tab:ablation_conflict}). The closest recent methods, Hekimoglu et al.~\citep{hekimoglu2023multi,hekimoglu2024active} (PAD-Net inconsistency $+$ diversity) and PartAL~\citep{durasov2022partal} (per-task partial labeling), depend on specific task pairings or architectural components;
our CTMI is architecture-agnostic and uses only forward passes through the shared encoder. For mutual-information estimation, classical $k$-NN~\citep{kraskov2004estimating} and neural estimators (MINE~\citep{belghazi2018mutual}, CPC~\citep{oord2018representation}) are impractical inside the AL inner loop; CTMI instead requires only the empirical
covariance of MC-Dropout entropies, with no auxiliary network and no inner optimization.

\paragraph{Two-stage selection is established practice.}
Filtering a candidate pool by an informativeness score and then selecting for coverage within the filtered set is a well-established template. FASS~\citep{wei2015submodularity} filters by uncertainty before submodular subset selection; Cluster-Margin~\citep{citovsky2021batch} margin-filters before cluster-diverse sampling; PPAL~\citep{yang2024plug} applies difficulty-calibrated uncertainty sampling followed by category-conditioned diversity; $\mathbb{D}^2$-Pruning~\citep{maharana2024mathbb} and SES~\citep{xie2025structural} balance difficulty against diversity through graph message passing and structural entropy respectively; and related
submodular and coverage-based formulations are surveyed in~\citep{sener2017active,ash2019deep}. Our contribution is the multi-task-specific quantity that drives its first stage --- a cross-task dependency statistic computed from the shared encoder --- together with the analysis motivating it. Section~\ref{sec:experiments} isolates exactly this by running the same two-stage template with single-task signals under one identical multi-seed protocol.


\section{The Conflict Paradox}
\label{sec:conflict_paradox}

Existing multi-task active learning methods penalize gradient conflict, treating it as optimization noise~\citep{liu2021conflict,yu2020gradient,shi2023recon,chen2018gradnorm}.
As illustrated in Figure~\ref{fig:paradox}, this assumption induces a systemic bias toward the ``aligned'' region (blue area), where samples follow the model's current optimization trajectory. However, our retrospective analysis shows that after early training, samples yielding the largest joint-loss reduction concentrate in high-conflict regions (red area). Penalizing conflict thus discards high-gain samples. Motivated by this, we propose the \textbf{Decoupling Principle}, which explicitly targets this informative yet neglected region.

\subsection{Formalizing the Paradox}
Let $\mathcal{L}_t(x; \theta)$ be the loss for task $t$ on sample $x$. Gradient conflict is typically quantified by the cosine similarity:
\begin{equation}
    C_g(x) = \cos(\mathbf{g}_i, \mathbf{g}_j) = \frac{\mathbf{g}_i \cdot \mathbf{g}_j}{\|\mathbf{g}_i\| \|\mathbf{g}_j\|}, \quad \text{where } \mathbf{g}_t = \nabla_\theta \mathcal{L}_t(x)
\end{equation}


\begin{figure*}[t]
    \centering
    \includegraphics[width=1.0\textwidth]{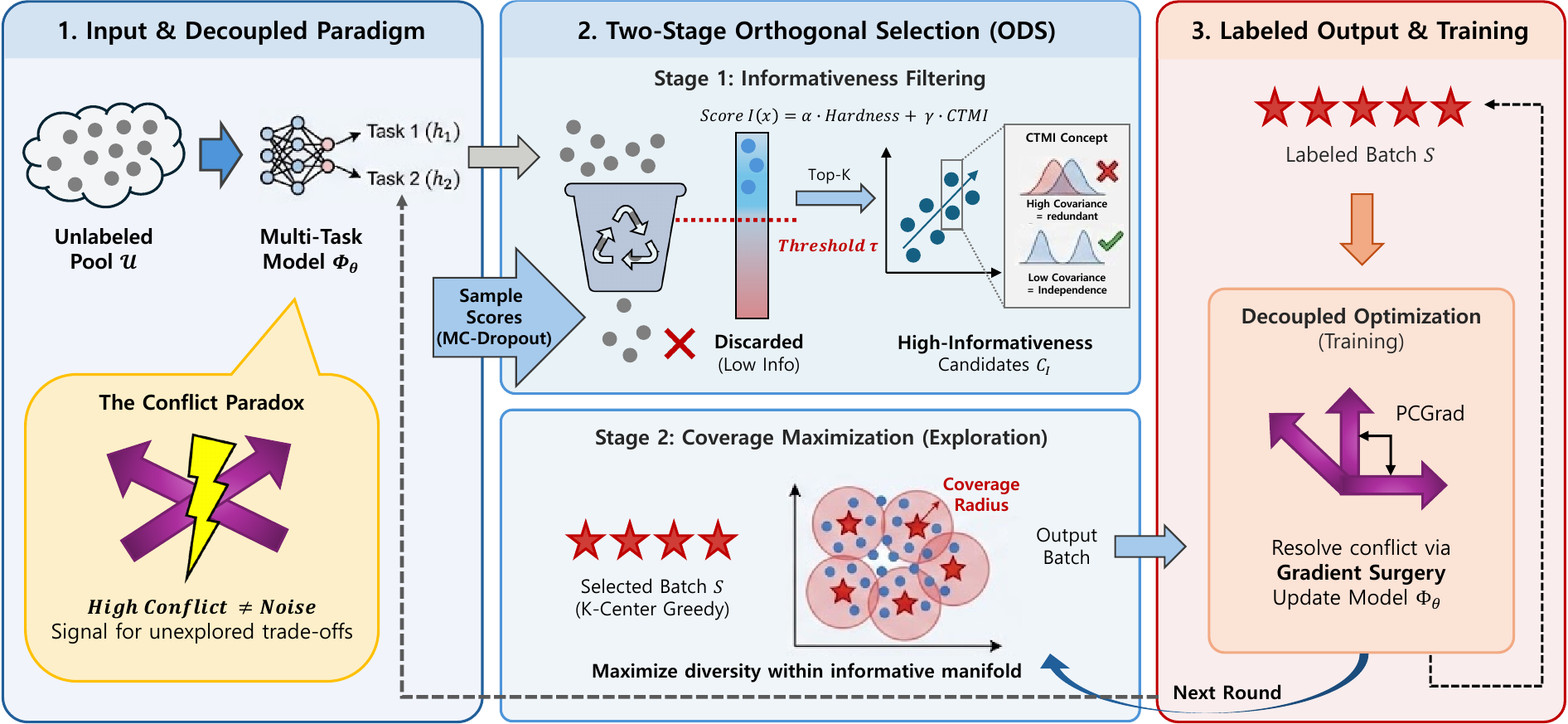}    
    \caption{\textbf{Overview of the Orthogonal Decomposition Selection (ODS) Framework.} 
    \textbf{(Left) The Conflict Paradox:} Conflict-aware selection treats gradient conflict as an informative signal.
    \textbf{(Middle) Orthogonal Decomposition:} Acquisition is decomposed into \textit{Informativeness Filtering} (Hardness and CTMI) followed by \textit{Coverage Maximization} within the retained candidates.
    \textbf{(Right) Decoupled Optimization:} Whatever gradient interference the selected batch contains is resolved at training time by \emph{PCGrad}.}
    \label{fig:ods_architecture}
\end{figure*}

\begin{proposition}[Sample-Level Conflict and Local Rank Deficiency]
\label{thm:conflict_paradox}
Let $J(x) = [\mathbf{g}_i(x), \mathbf{g}_j(x)] \in \mathbb{R}^{d \times 2}$ denote the per-sample task Jacobian. Under standard local-linearization assumptions on the shared encoder, samples with extreme gradient conflict ($C_g(x) \to -1$, i.e., $\mathbf{g}_i \approx -k \mathbf{g}_j$, $k>0$) satisfy:
\begin{enumerate}
    \item \textbf{Sample-level Pareto-critical structure.} There exists $\alpha \in (0,1)$ with $\alpha \mathbf{g}_i + (1-\alpha) \mathbf{g}_j = \mathbf{0}$, i.e., the per-sample loss vector $(\mathcal{L}_i(x), \mathcal{L}_j(x))$ admits no joint descent direction at the current $\theta$, and $J(x)$ is locally rank-deficient.
    \item \textbf{Bridge to active learning.} Acquiring such samples imposes localized constraints that the current shared representation cannot reconcile; resolving them via gradient projection (PCGrad~\citep{yu2020gradient}) forces the encoder to expand into directions orthogonal to the dominant trajectory, yielding feature-space coverage gain not attainable from low-conflict (cooperative rank-1) samples.
\end{enumerate}
\end{proposition}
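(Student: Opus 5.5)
Part 1 is handled as an exact statement in the limiting case, with a perturbation remark for the approximate case. Part 2 is interpretive and is argued only through the local-linear model.

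\textbf{Part 1, exact case.} Suppose $C_g(x)=-1$ exactly. By Cauchy--Schwarz, equality $\mathbf{g}_i\cdot\mathbf{g}_j=-\|\mathbf{g}_i\|\|\mathbf{g}_j\|$ holds iff $\mathbf{g}_i=-k\mathbf{g}_j$ with $k=\|\mathbf{g}_i\|/\|\mathbf{g}_j\|>0$. This assumes both gradients are nonzero; otherwise $C_g$ is undefined.

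\begin{itemize}
\item \textbf{Explicit multiplier.} Set $\alpha=1/(1+k)\in(0,1)$. Then
\[
\alpha\mathbf{g}_i+(1-\alpha)\mathbf{g}_j=\tfrac{1}{1+k}\bigl(-k\mathbf{g}_j+k\mathbf{g}_j\bigr)=\mathbf{0}.
\]
This is precisely the first-order Pareto-criticality (Fritz--John/MGDA) condition of \citet{sener2018multi}.
\item \textbf{No joint descent.} Assume some direction $\mathbf{d}$ had $\mathbf{g}_i\cdot\mathbf{d}<0$ and $\mathbf{g}_j\cdot\mathbf{d}<0$. The convex combination would give $0=(\alpha\mathbf{g}_i+(1-\alpha)\mathbf{g}_j)\cdot\mathbf{d}<0$, a contradiction. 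This is the Gordan alternative.
\item \textbf{Rank deficiency.} The columns of $J(x)$ are linearly dependent, so $\operatorname{rank}J(x)=1<2$. Equivalently, $\det(J^\top J)=\|\mathbf{g}_i\|^2\|\mathbf{g}_j\|^2(1-C_g^2)=0$.
\end{itemize}

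\textbf{Part 1, approximate case.} For $C_g(x)\to-1$ I will read ``locally rank-deficient'' quantitatively. The smallest singular value of the normalized Jacobian $[\hat{\mathbf g}_i,\hat{\mathbf g}_j]$ is $\sqrt{1-|C_g|}$, which tends to $0$. The minimum-norm convex combination $\min_\alpha\|\alpha\mathbf{g}_i+(1-\alpha)\mathbf{g}_j\|$ also tends to $0$. The argument uses only the first-order Taylor expansion of $\mathcal L_t(x;\theta+\eta\mathbf d)$, which is the ``local-linearization'' hypothesis.

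\textbf{Part 2, approach.} This is not a theorem in the strict sense; I would prove a formal surrogate inside the linearized model. Write the shared-encoder update as $\Delta\theta$ and the induced feature change as $\Delta\phi\approx J_\phi\Delta\theta$. PCGrad replaces $\mathbf g_i$ with $\mathbf g_i-\frac{\mathbf g_i\cdot\mathbf g_j}{\|\mathbf g_j\|^2}\mathbf g_j$ whenever $\mathbf g_i\cdot\mathbf g_j<0$.

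\begin{itemize}
\item \textbf{Near-conflicting sample.} Decompose $\mathbf g_i=-k\mathbf g_j+\mathbf r$ with $\mathbf r\perp\mathbf g_j$. Projection removes the component along $\mathbf g_j$ and leaves exactly $\mathbf r$. So the surgically combined update lies in the orthogonal complement of the dominant direction $\mathbf g_j$.
\item \textbf{Cooperative sample.} Here $\mathbf g_i\approx k\mathbf g_j$, so the update stays in $\operatorname{span}(\mathbf g_j)$.
\item \textbf{Coverage gain.} Measure coverage as the rank, or log-det volume, of the span of accumulated update directions. Conflicting samples contribute a new orthogonal direction, while cooperative rank-1 samples add none beyond the current trajectory.
\end{itemize}

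\textbf{Main obstacle.} The hardest step is Part 2. ``Coverage gain'' must be tied to a precise quantity, and I would commit to the span/volume surrogate just described. I would also need to acknowledge that $\mathbf r$ may be small, so the claim is qualitative (nonzero orthogonal component) rather than a magnitude bound. A further caveat is that these are per-sample, instantaneous statements, not guarantees on the expected loss after retraining.
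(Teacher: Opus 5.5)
\textbf{Part 1} is correct and is essentially the paper's argument. The paper likewise rewrites $\mathbf{g}_i+k\mathbf{g}_j=\mathbf{0}$ as $\tfrac{1}{1+k}\mathbf{g}_i+\tfrac{k}{1+k}\mathbf{g}_j=\mathbf{0}$ and reads off $\alpha=1/(1+k)$. Your Cauchy--Schwarz identification of $k$, the Gordan step and the $\det(J^\top J)$ identity only add rigor. Your Gordan reading of ``no joint descent direction'' is sounder than the paper's gloss that $\mathbf{g}_i+\mathbf{g}_j$ vanishes, which needs $k=1$. Keep in mind that for every $C_g>-1$ the vector $-(\hat{\mathbf{g}}_i+\hat{\mathbf{g}}_j)$ is a strict joint descent direction, so the exact conclusions hold only at $C_g=-1$.

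\textbf{Part 2} has a genuine gap: the asymmetry you assert (conflicting samples add a new direction, cooperative ones add none) does not follow from your computation. First, PCGrad also projects $\mathbf{g}_j$ onto the normal plane of $\mathbf{g}_i$, so the combined update is
\[
\mathbf{g}_i^{\mathrm{PC}}+\mathbf{g}_j^{\mathrm{PC}}=\frac{\|\mathbf{r}\|^2}{\|\mathbf{g}_i\|^2}\,\mathbf{g}_j+\Bigl(1+\frac{k\|\mathbf{g}_j\|^2}{\|\mathbf{g}_i\|^2}\Bigr)\mathbf{r},
\]
which lies in $\mathbf{g}_j^\perp$ only asymptotically. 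Second, at $C_g=-1$, the only point where Part 1 holds exactly, $\mathbf{r}=\mathbf{0}$ and both projected gradients vanish. PCGrad then returns the zero update and contributes no direction at all; the paper's verbal claim that PCGrad ``breaks the deadlock'' has the same blind spot. Third, your contrast pits a perturbed conflicting sample against an exactly cooperative one. With the same residual, a cooperative sample $\mathbf{g}_i=k\mathbf{g}_j+\mathbf{r}$ leaves PCGrad inactive and yields $(1+k)\mathbf{g}_j+\mathbf{r}$. Its component orthogonal to $\mathbf{g}_j$ is again $\mathbf{r}$, versus $\mathbf{r}$ scaled by a factor of at most $1+1/k$ in the conflicting case. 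Your rank surrogate therefore increases by one in both cases, and an unnormalized log-det separates them only through that bounded factor.

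What you can honestly prove is a statement about the \emph{direction} of the update. Writing $C_g=-\cos\psi$ with fixed gradient norms, the conflicting PCGrad update has norm $\Theta(\psi)$ and lies within angle $O(\psi)$ of $\mathbf{g}_j^\perp$. The matched cooperative update lies within angle $O(\psi)$ of $\mathbf{g}_j$. Finally, you introduce $\Delta\phi\approx J_\phi\Delta\theta$ but never use it. For encoder parameters, $\mathbf{g}_t=J_\phi^\top\nabla_\phi\mathcal{L}_t\in\ker(J_\phi)^\perp$, so for $\mathbf{r}\neq\mathbf{0}$ you get $J_\phi\mathbf{r}\notin\operatorname{span}(J_\phi\mathbf{g}_j)$: linear independence transfers to feature space, but orthogonality does not.
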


\begin{remark}[Sample-level vs.\ objective-level Pareto stationarity]
\emph{Objective-level} Pareto stationarity is a global statement about $\theta$ over the data distribution and is strictly stronger than the sample-level structure above. We therefore deliberately phrase Proposition~\ref{thm:conflict_paradox} as a local, sample-conditional property — a structural heuristic that motivates our acquisition function — rather than a global optimization guarantee. The claim we draw from this is narrow and concerns the acquisition score only: with the optimizer held fixed, \emph{penalizing} conflict in the score (Row~A) is worse than \emph{omitting} it (Row~B), so the penalty removes candidates the model had something to learn from (Section~\ref{sec:experiments}).
\end{remark}

\begin{proof}[Proof Sketch]
In high-conflict regions where $\mathbf{g}_i \approx -k \mathbf{g}_j$ ($k>0$), the joint descent direction $\mathbf{g}_i+\mathbf{g}_j$ vanishes and $J(x)$ collapses to rank one but with antiparallel orientation (\emph{adversarial} rather than \emph{cooperative} rank-1). Standard SGD stalls at this point for sample $x$. Selecting and applying PCGrad breaks the deadlock by projecting gradients onto the normal plane, forcing the encoder to expand its feature space to satisfy the conflicting constraints. The formal statement, full assumptions, and complete derivation appear in Appendix~\ref{app:proof_paradox}.
\end{proof}

\subsection{The Decoupling Principle}


Based on Proposition~\ref{thm:conflict_paradox}, we resolve the tension between selection and optimization by acting on each separately: at \emph{selection} the acquisition score contains no conflict term at all, so conflicting samples are neither sought out nor filtered away; at \emph{optimization} we apply gradient surgery (PCGrad~\citep{yu2020gradient}) to project conflicting gradients onto the normal plane and stabilize updates without discarding the data. 

\section{Orthogonal Decomposition of Acquisition}
\label{sec:orthogonal_decomposition}

We propose \textbf{Orthogonal Decomposition Selection (ODS)}, a framework that departs from linear scalarization by decomposing acquisition into two structurally distinct axes: \textbf{Informativeness ($\mathcal{I}$)} and \textbf{Exploration ($\mathcal{E}$)}. ODS operates as a three-phase decoupled
pipeline (Figure~\ref{fig:ods_architecture}): \emph{(1) Scoring} — compute per-task predictive uncertainty and its cross-task structure under MC-dropout; \emph{(2) Two-Stage Selection} — filter the pool to the high-$\mathcal{I}$ candidates, then maximize coverage within them; \emph{(3) Decoupled Optimization} — train the selected batch under PCGrad~\citep{yu2020gradient}, which resolves whatever gradient conflict the batch happens to contain.

\paragraph{Notation.}
$\Phi_\theta(x) \in \mathbb{R}^d$ denotes the shared encoder representation of sample $x$; it is the feature used both for the cross-task statistic of Section~\ref{sec:ctmi} and for the Stage-2 coverage metric.
$\operatorname{Hardness}(x)$ denotes per-task predictive uncertainty aggregated over tasks: predictive entropy for classification, the top-$K$ box-entropy sum for detection, and the $90$th percentile of per-pixel entropy for segmentation. Both quantities are estimated from the same $K$ MC-dropout forward passes; full definitions are in Appendix~\ref{app:implementation}.

\subsection{Orthogonality of Objectives}
\label{sec:orthogonality_of_objecitves}


\begin{assumption}[Weak Dependence of $\mathcal{I}$ and $\mathcal{E}$]
\label{assum:orthogonality}
We do \emph{not} assume that \textit{Informativeness} $\mathcal{I}(x)$
(uncertainty magnitude and its cross-task structure) and \textit{Exploration}
$\mathcal{E}(x)$ (feature-space coverage) are statistically independent. We
assume only that their dependence is weak enough that neither is a useful
surrogate for the other:
\begin{equation}
    \big|\operatorname{Corr}\!\big(\mathcal{I}(x), \mathcal{E}(x)\big)\big| \ll 1 ,
    \qquad x \sim P .
\end{equation}
Under this condition a scalarized combination $\alpha\mathcal{I} +
\beta\mathcal{E}$ trades one axis against the other instead of reinforcing a
single underlying quantity, which is precisely the regime in which linear
scalarization is fragile.
\end{assumption}

\begin{wraptable}{r}{0.5\textwidth}
\vspace{-0.18in}
\centering
\caption{\textbf{Correlation between Informativeness $\mathcal{I}$ and Exploration $\mathcal{E}$ on COCO} (1{,}500 unlabeled samples). Both Pearson $r$ and Spearman $\rho$ stay near zero across the full 50-round trajectory and decay toward zero as the model converges, supporting Assumption~\ref{assum:orthogonality}.}
\label{tab:ie_correlation}
\footnotesize
\setlength{\tabcolsep}{3pt}
\resizebox{0.5\textwidth}{!}{%
    \begin{tabular}{lccccccc}
    \toprule
    Round & 1 & 5 & 10 & 20 & 30 & 40 & 49 \\
    \midrule
    Pearson $r$ & $-0.18$ & $-0.11$ & $-0.18$ & $-0.09$ & $-0.12$ & $-0.02$ & $-0.09$ \\
    Spearman $\rho$ & $-0.15$ & $-0.07$ & $-0.17$ & $-0.07$ & $-0.09$ & $+0.00$ & $-0.07$ \\
    \bottomrule
    \end{tabular}
    }
\end{wraptable}

\emph{Empirical support (COCO, $1{,}500$ unlabeled samples).} We measure $\operatorname{Corr}(\mathcal{I}, \mathcal{E})$ on saved checkpoints across the entire 50-round trajectory. As shown in Table~\ref{tab:ie_correlation}, both Pearson and Spearman correlations stay small in absolute value at every round ($|r| \le 0.18$); they fluctuate rather than decrease monotonically, and we make no claim that they decay. Because a near-zero correlation does not by itself rule out non-linear dependence, we additionally report distance correlation~\citep{szekely2007measuring}, which vanishes only under genuine
independence; it likewise remains small throughout (Appendix~\ref{app:ie_correlation}, Appendix~\ref{app:dcorr}). The persistent \emph{negative} sign in early and mid rounds is itself instructive:
$\mathcal{I}$ and $\mathcal{E}$ \emph{disagree}, so a linear scalarizer that pushes one up trades the other off --- exactly the diversity-dilution failure mode.

\emph{Justification.} This decomposition is a strategic design choice. Traditional linear combination ($\alpha \mathcal{I} + \beta \mathcal{E}$) creates a diagonal decision boundary that often leads to \textbf{Diversity Dilution}, where uninformative outliers dominate selection \citep{sener2017active}. By treating them orthogonally, ODS sequentially selects samples that satisfy \emph{both} criteria.


\begin{figure}[t]
    \centering
    \includegraphics[width=0.90\columnwidth]{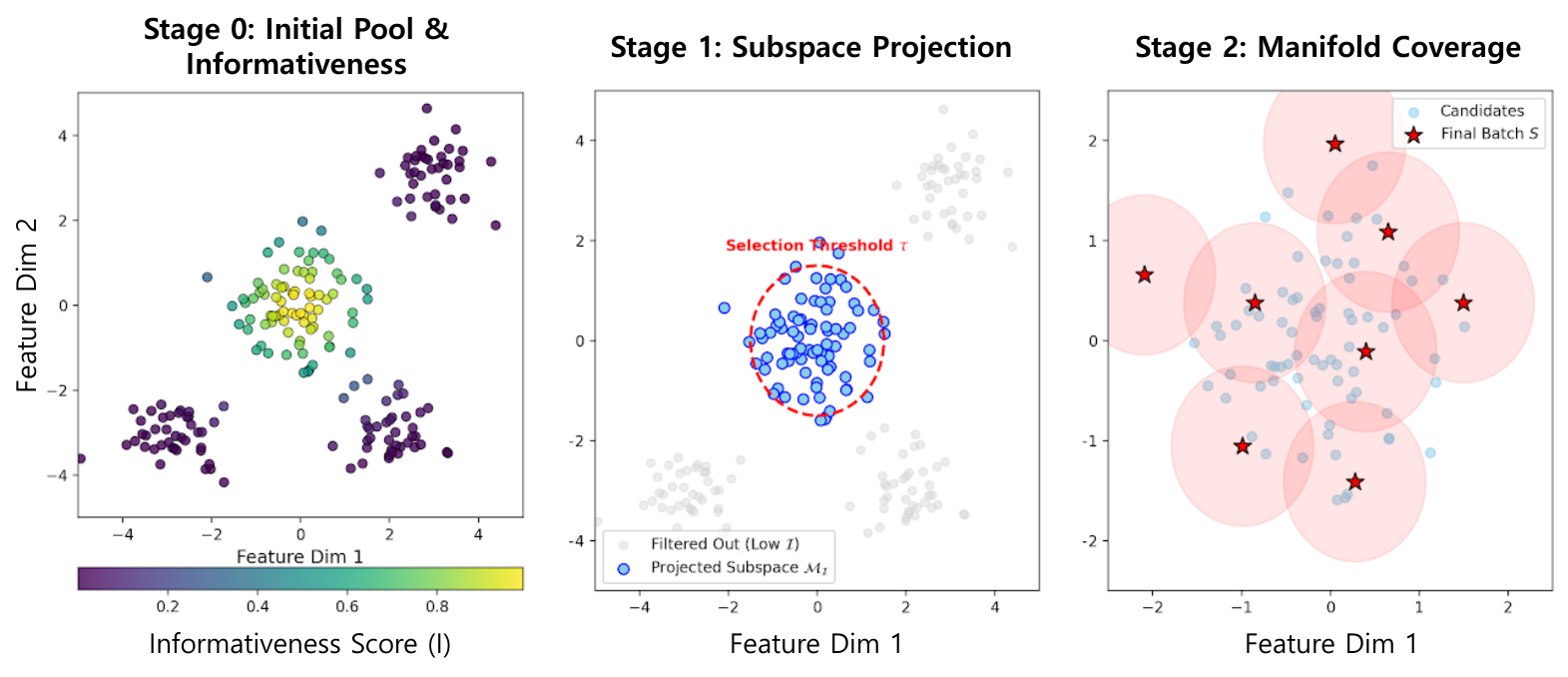}
    \caption{\textbf{The Orthogonal Decomposition Framework.} Instead of a single weighted sum, we process objectives sequentially. \textbf{Stage 1} projects data onto the High-Informativeness Subspace ($\mathcal{M}_{\mathcal{I}}$) (filtering out easy/redundant samples). \textbf{Stage 2} maximizes coverage within this subspace, preventing ``\emph{Diversity Dilution}.''}
    \label{fig:framework}
\end{figure}

\subsection{Two-Stage Orthogonal Selection (ODS)}
\label{sec:two_stage_ods}

To enforce Assumption~\ref{assum:orthogonality}, we operationalize the selection process into two sequential steps (Figure~\ref{fig:framework}).

\paragraph{Stage 1: Informativeness Filtering.}
We first restrict the candidate pool $\mathcal{U}$ to its high-informativeness subset $\mathcal{C}_{\mathcal{I}}$ by thresholding out ``\emph{easy}'' samples: $\mathcal{C}_{\mathcal{I}} = \{ x \in \mathcal{U} \mid \mathcal{I}(x) \ge \tau_k \}$ where $\tau_k$ corresponds to the top-$k \times m$ samples. Here, $\mathcal{I}(x)$ combines Hardness and our proposed \emph{CTMI} (Section~\ref{sec:ctmi}). This step explicitly removes uninformative outliers (gray points in Figure~\ref{fig:ods_architecture}), ensuring that diversity maximization focuses only on valuable regions.

\paragraph{Stage 2: Manifold Coverage (Maximizing $\mathcal{E}$).}
Within the projected subspace $\mathcal{M}_{\mathcal{I}}$, we solve for the batch $S$ that maximizes coverage using the \textit{k-Center Greedy} algorithm \citep{sener2017active}:
\begin{equation}
    \min_{S \subset \mathcal{C}_{\mathcal{I}},\, |S|=B}\ \max_{x \in \mathcal{C}_{\mathcal{I}}}\ \min_{s \in S}\ \big\| \Phi_\theta(x) - \Phi_\theta(s) \big\|_2
\end{equation}
Stage~1 shrinks the candidate set by $m\times$, so the cost is comparable to standard diversity-based methods. The full procedure is in Algorithm~\ref{alg:ods}.

\section{Cross-Task Mutual Information (CTMI)}
\label{sec:ctmi}


To quantify the \emph{Informativeness} ($\mathcal{I}$) of a candidate in Stage 1, we introduce CTMI. Per-task uncertainty alone is insufficient for this purpose. Two samples with the same mean entropy can have entirely different annotation value. If all tasks are uncertain due to a single shared bottleneck, their labels yield redundant supervisory signals. If they are uncertain for independent reasons, a single annotation resolves multiple distinct ambiguities. We therefore complement the uncertainty \emph{magnitude} (Hardness) with a statistic of its cross-task \emph{structure}, and prioritize samples whose task uncertainties are \emph{not} explained by a single shared cause.


\subsection{Definition}
\label{sec:ctmi_definition}

Let $\{\hat{y}^{(k)}(x)\}_{k=1}^{K}$ be the predictions of $K$ MC-dropout passes
and let $H_t^{(k)}(x)$ denote the entropy of task $t$ on pass $k$
(Appendix~\ref{app:entropy_regression} for continuous tasks). For each task pair
$(i,j) \in \mathcal{P}$, we compute the absolute correlation between the two
entropy sequences across passes, average over pairs, and take its complement:
\begin{equation}
\label{eq:ctmi}
\bar{\rho}(x) = \frac{1}{|\mathcal{P}|}\!\!\sum_{(i,j)\in\mathcal{P}}\!\!
\left|\frac{\operatorname{Cov}_k\!\big(H_i^{(k)}(x),H_j^{(k)}(x)\big)}
{\sqrt{\operatorname{Var}_k\!\big(H_i^{(k)}(x)\big)\operatorname{Var}_k\!\big(H_j^{(k)}(x)\big)}}\right|,
\qquad
\operatorname{CTMI}(x) = 1 - \widetilde{\bar{\rho}}(x),
\end{equation}
where $\widetilde{(\cdot)}$ denotes min--max normalization over the current
candidate pool. A sample attains a high CTMI score when its task uncertainties
fluctuate \emph{independently} under dropout, and a low score when they rise and
fall together. The complement is deliberate: tightly coupled task uncertainties
indicate that a single factor is responsible for all of them, so the $T$
annotations acquired for that sample resolve one ambiguity rather than $T$.

The Stage-1 informativeness score combines the two statistics with fixed
weights:
\begin{equation}
\label{eq:informativeness}
\mathcal{I}(x) = \alpha\cdot\operatorname{Hardness}(x) + \gamma\cdot\operatorname{CTMI}(x).
\end{equation}
Both terms are min--max normalized to $[0,1]$ prior to combination. Default
weights and the values used for each protocol are reported in
Appendix~\ref{app:implementation}.

\begin{table*}[t]
\centering
\caption{\textbf{Main Results on COCO Multi-Task Benchmark (Round 30, $3$ seeds: $\{42, 123, 456\}$).} 
All methods use the same shared MTL backbone (\textasciitilde2.40M params) and identical training protocol; only the acquisition function differs. We report mean$\pm$std on the constituent tasks, the \emph{Best} and \emph{Last} MTL Composite Score, and the \emph{Average Rank} across tasks. Note that \citet{hekimoglu2024active} additionally trains an acquisition-only autoencoder (\textasciitilde1.04M params per round); all other methods incur no extra trainable module.}
\label{tab:main_results}
\resizebox{\textwidth}{!}{%
    \begin{tabular}{l|ccc|ccc}
    \toprule
    \textbf{Method} & \textbf{Cls Acc (\%)} & \textbf{Det mAP} & \textbf{Seg mIoU} & \textbf{Best MTL Score ($\Delta m\%$)} $\uparrow$ & \textbf{Last MTL Score ($\Delta m\%$)} $\uparrow$ & \textbf{Avg Rank} $\downarrow$ \\
    \midrule
    \multicolumn{7}{l}{\textbf{\textit{Uncertainty-based}}} \\
    Random Sampling & 38.42 \scriptsize{$\pm$ 0.15} & 25.21 \scriptsize{$\pm$ 0.20} & 15.10 \scriptsize{$\pm$ 0.35} & 0.000 (-) & 0.000 (-) & 6.33 \\
    Entropy Sampling~\cite{shannon1948mathematical} & 38.30 \scriptsize{$\pm$ 0.12} & 25.13 \scriptsize{$\pm$ 0.18} & 14.97 \scriptsize{$\pm$ 0.22} & 0.000 (-) & -0.005 (-0.5\%) & 9.67 \\
    Margin & 38.25 \scriptsize{$\pm$ 0.15} & 25.16 \scriptsize{$\pm$ 0.21} & 15.06 \scriptsize{$\pm$ 0.25} & 0.000 (-) & -0.003 (-0.3\%) & 8.67 \\
    Learning Loss~\cite{yoo2019learning} & 38.36 \scriptsize{$\pm$ 0.15} & 25.18 \scriptsize{$\pm$ 0.21} & 15.05
 \scriptsize{$\pm$ 0.25} & 0.045 (+4.5\%) & 0.038 (+3.8\%) & 7.67 \\
    \midrule
    \multicolumn{7}{l}{\textbf{\textit{Diversity-based}}} \\
    CoreSet \cite{sener2017active} & 38.26 \scriptsize{$\pm$ 0.20} & 25.14 \scriptsize{$\pm$ 0.25} & 15.01 \scriptsize{$\pm$ 0.30} & 0.000 (-) & -0.002 (-0.2\%) & 9.33 \\
    BADGE \cite{ash2019deep} & 40.09 \scriptsize{$\pm$ 0.18} & 26.11 \scriptsize{$\pm$ 0.15} & 15.45 \scriptsize{$\pm$ 0.20} & 0.041 (+4.1\%) & 0.034 (+3.4\%) & 4.00 \\
    \midrule
    \multicolumn{7}{l}{\textbf{\textit{Hybrid / MTL-aware}}} \\
    BatchBALD \cite{kirsch2019batchbald} & 39.41 \scriptsize{$\pm$ 0.22} & 25.42 \scriptsize{$\pm$ 0.19} & 16.40 \scriptsize{$\pm$ 0.25} & 0.050 (+5.0\%) & 0.040 (+4.0\%) & 4.33 \\
    MT-BADGE (Ext.) & 41.22 \scriptsize{$\pm$ 0.15} & 25.81 \scriptsize{$\pm$ 0.12} & 16.83 \scriptsize{$\pm$ 0.18} & 0.075 (+7.5\%) & 0.070 (+7.0\%) & 2.67 \\
    PartAL~\citep{durasov2022partal} (adapted) & 38.43 \scriptsize{$\pm$ 0.13} & 20.62 \scriptsize{$\pm$ 0.11} & 16.34 \scriptsize{$\pm$ 0.02} & 0.049 (+4.9\%) & 0.038 (+3.8\%) & 7.33 \\
    Hekimoglu~\citep{hekimoglu2024active} (adapted) & 40.41 \scriptsize{$\pm$ 0.07} & 21.04 \scriptsize{$\pm$ 0.16} & 16.95 \scriptsize{$\pm$ 0.02} & 0.048 (+4.8\%) & 0.037 (+3.7\%) & 5.00 \\
    \midrule
    \rowcolor{gray!15} \textbf{ODS (Ours)} & \textbf{42.14 \scriptsize{$\pm$ 0.10}} & \textbf{26.52 \scriptsize{$\pm$ 0.15}} & \textbf{17.44 \scriptsize{$\pm$ 0.15}} & \textbf{0.105} (\textbf{+10.5\%}) & \textbf{0.100} (\textbf{+10.0\%}) & \textbf{1.00} \\
    \bottomrule
    \end{tabular}%
    }
\end{table*}

\subsection{What CTMI Measures}
\label{sec:ctmi_semantics}

$\bar\rho$ admits a \emph{cross-task} reading not because it bounds mutual
information, but by virtue of an attribution property of the estimator itself.

\begin{proposition}[Shared-Representation Attribution]
\label{prop:attribution}
Let $z = \Phi_\theta(x)$ be the shared representation and let the task
predictions be produced by task-private heads, $\hat{y}_t = f_t(z;\,\xi_t)$,
where the dropout masks $\xi_t$ are mutually independent given $z$. Then
$\operatorname{Cov}\!\big(H_i, H_j \mid z\big) = 0$ for $i \neq j$, and by the
law of total covariance
\begin{equation*}
\operatorname{Cov}(H_i, H_j) \;=\;
\underbrace{\mathbb{E}_z\!\big[\operatorname{Cov}(H_i,H_j \mid z)\big]}_{=\,0}
\;+\;\operatorname{Cov}_z\!\big(\mathbb{E}[H_i \mid z],\, \mathbb{E}[H_j \mid z]\big).
\end{equation*}
Every non-zero cross-task covariance is therefore attributable to variation in
the shared representation: head-private stochasticity alone cannot produce it.
(Proof in Appendix~\ref{app:proof_attribution}.)
\end{proposition}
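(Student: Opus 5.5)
We make the probabilistic setup explicit, then combine conditional independence with the law of total covariance. The randomness in one MC-dropout pass is a triple $(\xi_0,\xi_i,\xi_j)$. Here $\xi_0$ is the dropout mask acting inside the shared encoder, so $z=\Phi_\theta(x;\xi_0)$ is random across passes. The masks $\xi_i,\xi_j$ act inside the task-private heads. The entropies are $H_t = h_t(f_t(z;\xi_t))$ for measurable maps $h_t$. Thus each $H_t$ is a measurable function of $(z,\xi_t)$ alone. We assume $H_i,H_j$ have finite second moments, which holds for bounded entropies of classification and segmentation outputs. We read the hypothesis ``masks mutually independent given $z$'' as $\xi_i \perp\!\!\!\perp \xi_j \mid z$. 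The standard construction, with $\xi_0,\xi_i,\xi_j$ drawn independently, gives exactly this.

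\emph{Step 1 (conditional covariance vanishes).} We fix a regular conditional distribution given $z$. Conditionally on $z$, the variable $H_i$ is a function of $\xi_i$ only and $H_j$ is a function of $\xi_j$ only. Since $\xi_i$ and $\xi_j$ are conditionally independent, so are $H_i$ and $H_j$. This uses the fact that measurable functions of independent variables are independent, applied under the conditional law. Hence
\begin{equation*}
\mathbb{E}[H_iH_j\mid z]=\mathbb{E}[H_i\mid z]\,\mathbb{E}[H_j\mid z]\quad\text{a.s.},
\end{equation*}
and therefore $\operatorname{Cov}(H_i,H_j\mid z)=0$ almost surely.

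\emph{Step 2 (total covariance).} We next invoke the law of total covariance,
\begin{equation*}
\operatorname{Cov}(H_i,H_j)=\mathbb{E}_z[\operatorname{Cov}(H_i,H_j\mid z)]+\operatorname{Cov}_z(\mathbb{E}[H_i\mid z],\mathbb{E}[H_j\mid z]).
\end{equation*}
This identity follows from the tower property applied to $\mathbb{E}[H_iH_j]$ after adding and subtracting $\mathbb{E}[H_i\mid z]\mathbb{E}[H_j\mid z]$. By Step 1 the first term vanishes, which yields the displayed identity.

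\emph{Step 3 (attribution).} The remaining term $\operatorname{Cov}_z(\cdot,\cdot)$ is a covariance of functions of $z$. It is zero whenever $z$ is almost surely constant, in which case neither conditional mean can vary. So a nonzero $\operatorname{Cov}(H_i,H_j)$ forces $z$ to be non-degenerate, with both conditional means varying along it. In other words, the nonzero covariance is attributable to variation in the shared representation.

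We can make the last assertion precise by comparing two regimes. With $\xi_0$ switched off, head-only dropout makes $z$ deterministic and the covariance is identically zero.

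\emph{Anticipated difficulties.} Two points need care. The first is the modeling subtlety: the independence hypothesis must be stated conditionally on $z$, not merely marginally. If the heads shared a dropout mask, or if the encoder's mask entered a head directly rather than through $z$, Step 1 would fail. We will therefore spell out that each $H_t$ depends on the randomness only through $(z,\xi_t)$.

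The second point is that CTMI uses the empirical covariance over $K$ passes, not the population one. The identity is a population statement. The empirical estimator is unbiased for it under i.i.d.\ passes, with the usual $1/(K-1)$ normalization. So the attribution holds in expectation rather than exactly for any finite $K$, and we will add a remark saying so.
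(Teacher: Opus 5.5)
Your proposal is correct and follows essentially the same route as the paper: conditional on $z$, each $H_t$ is a function of $z$ and the conditionally independent mask $\xi_t$, so $\operatorname{Cov}(H_i,H_j\mid z)=0$, and the law of total covariance then leaves only $\operatorname{Cov}_z(\mathbb{E}[H_i\mid z],\mathbb{E}[H_j\mid z])$. Your additions go slightly beyond the paper's proof without changing the argument: the finite-second-moment assumption, the explicit requirement that the encoder mask reach the heads only through $z$, and the remark that the $K$-pass estimator recovers this population identity only in expectation.
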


Proposition~\ref{prop:attribution} identifies the origin of the signal: $\bar\rho$ isolates the component of a sample's joint uncertainty for which the shared encoder is responsible. This motivates the orientation of the complement in Eq.~\eqref{eq:ctmi}. A large $\bar\rho$ indicates that the shared encoder is the single bottleneck underlying every task's uncertainty on that sample, so the $T$ annotations resolve one common ambiguity. In contrast, a small $\bar\rho$ indicates that the tasks are uncertain for reasons the shared representation does not hold in common, and the same annotation budget yields several independent corrections. Selecting on $1 - \bar\rho$ is therefore the sample-level counterpart of the redundancy penalty that batch active learning applies \emph{between} samples~\citep{kirsch2019batchbald}; there, redundancy is avoided across images, whereas here it is avoided across tasks within a single image.

This orientation also prevents the two terms of $\mathcal{I}$ from measuring the same quantity. Uncertainty concentrated in a shared factor tends to be uncertainty that is simply large, so a score rewarding high $\bar\rho$ would largely restate Hardness and collapse Stage~1 onto a single axis, precisely the failure this paper sets out to avoid. Our empirical claim is restricted to the contribution of the term itself: under an identical Stage~2, adding CTMI to Hardness raises the MTL Composite from $0.094$ to $0.105$ (Appendix~\ref{app:component_ablation}), and this gain persists across combination rules (Section~\ref{sec:combination_rule}).




\subsection{On Combining Hardness and CTMI}
\label{sec:combination_rule}

Equation~\eqref{eq:informativeness} is a weighted sum, which may appear to sit uneasily beside our critique of linear scalarization. The two operate at different levels. Our Diversity-Dilution argument concerns sums across
\emph{heterogeneous axes}, namely model uncertainty against feature-space coverage, where one axis is measured in entropy and the other in feature distance, so that a single extreme coordinate can dominate and induce a
non-Pareto trade-off. Hardness and CTMI are instead two functionals of the \emph{same} MC-dropout predictive distribution, both normalized to $[0,1]$ over the same pool and both monotone in ``unresolved by the current model''. Their sum therefore acts as a soft conjunction of two screening conditions rather than a trade-off between competing objectives: no sample can rank highly on $\mathcal{I}$ without being uncertain.

While this weighted sum is ultimately a heuristic, its exact formulation matters little. Alternative combinations—such as a rank-sum, $z$-score sum, or a product—yield statistically indistinguishable results over three seeds ($0.723$--$0.731$). The acquisition gain therefore originates in the cross-task signal itself rather than the specific combination rule.
\section{Experiments}
\label{sec:experiments}


We validate our framework across three diverse benchmarks: \textbf{COCO} (general scenes), \textbf{Cityscapes} (autonomous driving), and \textbf{NYUv2} (indoor) to answer three questions:
\begin{enumerate}[leftmargin=*]
    \item \textbf{Performance:} Does ODS outperform state-of-the-art linear combination methods in data efficiency?
    \item \textbf{Paradox Verification:} Does embracing gradient conflict actually improve learning compared to avoiding it?
    \item \textbf{Theoretical Validity:} Does CTMI serve as a valid proxy for inter-task mutual information?
\end{enumerate}

\subsection{Experimental Setup}
\paragraph{Datasets.}
We consider datasets with varying task affinity (additional details are provided in Appendix~\ref{app:datasets}.):
\begin{itemize}[leftmargin=*]
    \item \textbf{COCO~\citep{lin2014microsoft}}: Object detection, instance segmentation, and image-level classification.
    \item \textbf{Cityscapes~\citep{cordts2016cityscapes}}: Semantic segmentation, depth estimation, and instance segmentation.
    \item \textbf{NYUv2~\citep{silberman2012indoor}}: Indoor semantic segmentation, depth, and surface normal estimation.
\end{itemize}



\paragraph{Baselines.}
We compare against representative methods spanning three categories: \textbf{(1) Uncertainty-based:} \emph{Random}, \emph{Entropy} (mean task entropy), \emph{Margin}, and \emph{Learning Loss}~\citep{yoo2019learning}, \textbf{(2) Diversity-based:} \emph{CoreSet}~\citep{sener2017active} (feature distance maximization), and \textbf{(3) Hybrid / MTL-Extended:} \emph{BADGE}~\citep{ash2019deep}, \emph{BatchBALD}~\citep{kirsch2019batchbald}, and \emph{MT-BADGE (Ext.)} (our extension of BADGE via gradient concatenation).
Note that we focus on architecture-agnostic and task-general active learning baselines to ensure fair comparison across diverse multi-task settings. Recent task-specific or architecture-dependent methods are excluded, as they do not directly apply to our unified evaluation protocol. We provide a detailed justification for including specific recent work in Appendix~\ref{app:baseline_justification}.

\paragraph{Implementation Details \& Fairness Control.}
We use lightweight, non-pretrained architectures (Appendix~\ref{app:implementation}) to isolate acquisition effects from pretraining. All methods share the same backbone, optimizer (PCGrad~\citep{yu2020gradient}), and AL protocol (1\% seed, 1\% budget per round, 30 rounds). For regression tasks (e.g., depth), uncertainty is the log-variance of MC-Dropout predictions (Appendix~\ref{app:entropy_regression}).


\paragraph{Evaluation Metric: MTL Composite Score.}
To handle heterogeneous metrics (e.g., Acc $\uparrow$, RMSE $\downarrow$), we report the \textbf{MTL Composite Score} ($\Delta m\%$), which measures the average relative improvement over Random Sampling:
\begin{equation}
    \Delta m\% = \frac{1}{T} \sum_{t=1}^{T} (-1)^{s_t} \frac{P_t^{\text{method}} - P_t^{\text{random}}}{P_t^{\text{random}}} \times 100
\end{equation}
where $s_t=1$ if the metric degrades with value (e.g., Error/RMSE), else $0$.
This formulation normalizes task-specific metrics by measuring relative improvement over a random sampling, ensuring that no single task dominates the composite score despite heterogeneous metric scales.

\subsection{Main Results}

\paragraph{Learning Efficiency and Robustness.}
Figure~\ref{fig:curves} shows the MTL Composite Score ($\Delta m\%$) on COCO across active learning rounds.
Specifically, our method (\emph{ODS}, red solid line) consistently outperforms all baselines, establishing an early lead and maintaining it throughout training. As shown qualitatively in Appendix~\ref{app:qualitative}, this early advantage arises from avoiding redundant and low-density outlier sampling in the initial rounds.
As confirmed in Table~\ref{tab:main_results}, \emph{ODS} achieves a final gain of \textbf{+10.0\%} over random sampling at Round 30, remaining the only method with sustained gains above $10\%$.
While the gap narrows as models converge (Rounds 20--30), \emph{ODS} still significantly outperforms the strongest baselines, \textit{MT-BADGE} (+7.0\%) and \textit{BatchBALD} (+4.0\%).
In contrast, \textit{BADGE} exhibits unstable convergence in MTL, dropping from a mid-training peak to a final gain of only +3.4\%.


\begin{figure}[t]
    \begin{minipage}{0.52\textwidth}
        \centering
        \includegraphics[width=0.90\linewidth]{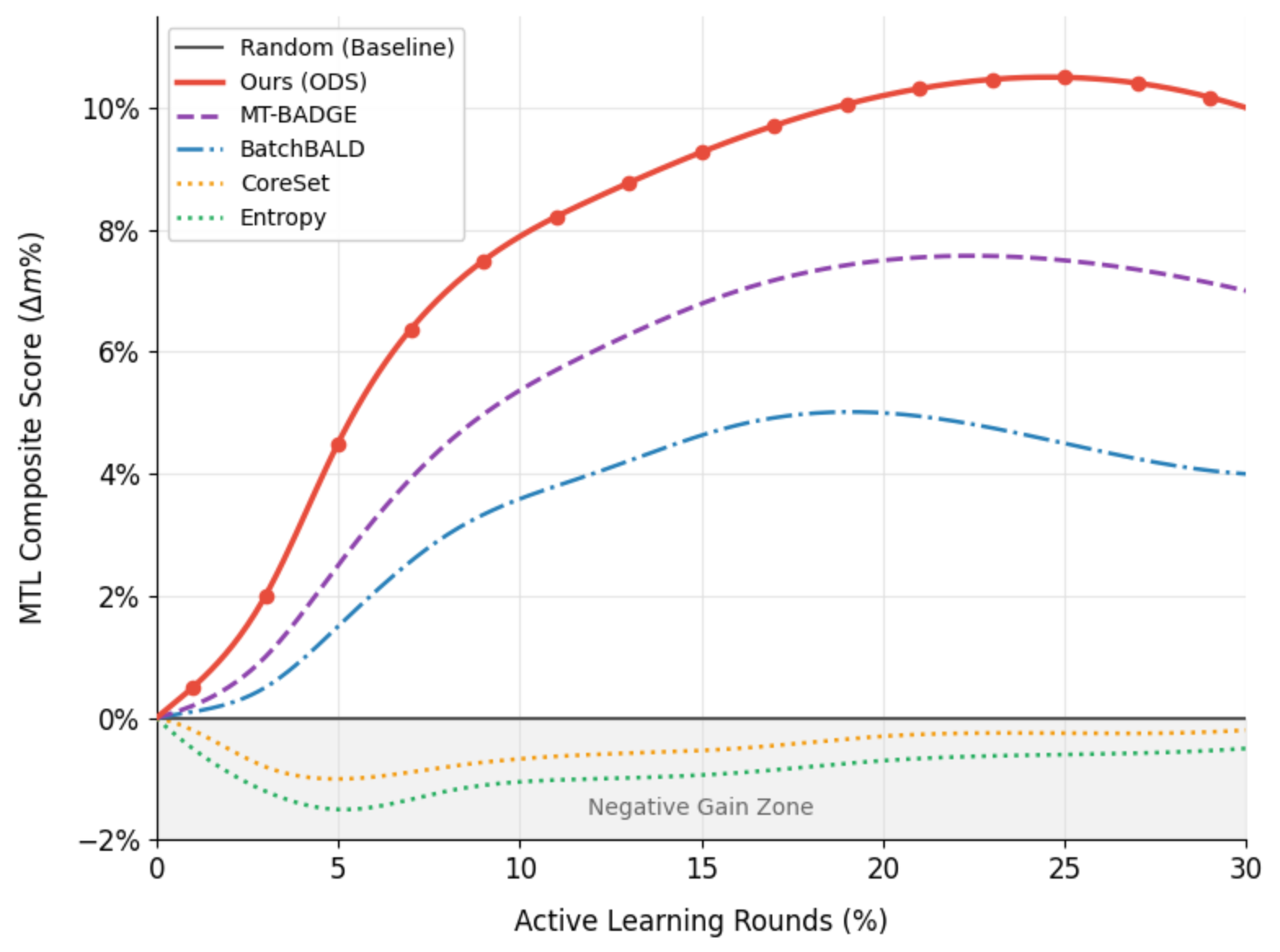}
    \end{minipage}\hfill
    \begin{minipage}{0.45\textwidth}
        \caption{\textbf{Evolution of MTL Composite Score ($\Delta m\%$) on COCO.} 
        The plot visualizes the relative improvement over the Random Sampling baseline ($y=0$). 
        Our method (ODS) consistently leads performance throughout training. 
        In contrast, standard baselines like \textit{Entropy} and \textit{CoreSet} fall into the negative gain zone ($y<0$), failing to outperform random sampling.}
        \label{fig:curves}
    \end{minipage}
\end{figure}

\paragraph{Evidence of Diversity Dilution.}
Uncertainty- and diversity-only baselines (Entropy, CoreSet) fall into the \emph{negative gain zone} in Figure~\ref{fig:curves}, underperforming random. Without task-aware filtering they prioritize visually distinctive but low-informativeness outliers, manifesting as both distributional collapse (t-SNE) and visual redundancy (Appendix~\ref{app:qualitative}). ODS removes such candidates in Stage~1 so that Stage~2 diversity is enforced only within the high-gain subspace (Appendix~\ref{app:case_study}).

\paragraph{Cross-Domain Robustness.}
ODS attains the largest gain on COCO (+10.0\% over Random) and also leads on Cityscapes (+8.8\%) and NYUv2 (+7.5\%), confirming that conflict-aware selection generalizes across object-centric and geometric multi-task settings. Per-task numbers (Cls/Det/Seg, Seg/Depth/Instance, Seg/Depth/Normal) are tabulated in Appendix~\ref{app:robustness_detail}.

\paragraph{Comparison with recent multi-stage selection methods.}
The two-stage template we use is shared with a line of recent active-learning and data-selection work (Section~\ref{sec:related_work}). To test whether the gain comes from the template or from the cross-task signal that drives its first stage, we adapt five such methods to our multi-task setting and run all of them under a single protocol: identical backbone, identical training schedule, three seeds, and $\Delta m$ computed \emph{per seed} against the same-seed Random run. Adaptation choices are documented in Appendix~\ref{app:baseline_adapt}.

\begin{table}[ht]
\centering
\caption{\textbf{Adapted multi-stage baselines under a unified protocol} (COCO, seeds $\{42,123,456\}$, 31 rounds). $\Delta m$ is measured per seed against the same-seed Random run; \emph{Best} is the peak over rounds and \emph{Trajectory} the mean over all rounds (area under the label-efficiency curve). \textbf{This protocol differs from Table~\ref{tab:main_results}}, which reports the original single-protocol comparison against classical baselines; the two sets of numbers are not interchangeable.}
\label{tab:unified_baselines}
\small
\setlength{\tabcolsep}{5pt}
\begin{tabular}{l|ccc|c|c}
\toprule
\textbf{Method} & \multicolumn{3}{c|}{\textbf{Best $\Delta m$ (\%) per seed}} & \textbf{Best $\Delta m$ (\%)} & \textbf{Trajectory $\Delta m$ (\%)} \\
 & 42 & 123 & 456 & mean $\pm$ std & mean $\pm$ std \\
\midrule
FASS~\citep{wei2015submodularity} (adapted)            & $+6.00$ & $+3.17$ & $+2.75$ & $3.97 \pm 1.44$ & $-0.15 \pm 0.12$ \\
Cluster-Margin~\citep{citovsky2021batch} (adapted)     & $+5.70$ & $+2.27$ & $+2.50$ & $3.49 \pm 1.57$ & $+0.32 \pm 0.10$ \\
PPAL~\citep{yang2024plug} (adapted)                    & $+5.98$ & $+1.68$ & $+1.11$ & $2.92 \pm 2.17$ & $+0.08 \pm 0.52$ \\
$\mathbb{D}^2$-Pruning~\citep{maharana2024mathbb} (adapted) & $+7.12$ & $+1.55$ & $+1.64$ & $3.44 \pm 2.61$ & $+0.09 \pm 0.02$ \\
SES~\citep{xie2025structural} (adapted)                & $+7.42$ & $+0.91$ & $+4.12$ & $4.15 \pm 2.66$ & $+0.41 \pm 0.62$ \\
\midrule
\rowcolor{gray!15}\textbf{ODS (Ours)} & $\mathbf{+7.35}$ & $\mathbf{+2.34}$ & $\mathbf{+3.52}$ & $\mathbf{4.41 \pm 2.14}$ & $\mathbf{+1.03 \pm 0.56}$ \\
\bottomrule
\end{tabular}
\end{table}

Read as unpaired means, the six methods overlap within one standard deviation. 
At peak, ODS gives the larger gain in $12$ of the $15$ paired (seed, baseline) comparisons; over the whole trajectory, which uses every round rather than a single peak, it leads in $13$ of $15$ and is the only method positive on all three seeds. The adapted single-task methods sit between $-0.15$ and $+0.41$ on the trajectory mean, i.e.\ close to Random, while ODS reaches $+1.03$. 

\subsection{Analytic Experiments}

\begin{table}[ht]
    \centering
    \caption{\textbf{Verifying the Conflict Paradox.} Comparing conflict handling strategies. \textbf{(A)} Penalizing conflict (conventional wisdom) yields the worst performance. \textbf{(B)} Simply ignoring it is better. \textbf{(C)} Embracing high-conflict samples without Stage~2 already surpasses both. \textbf{(D)} The full ODS pipeline (embrace + Stage~2 K-Center) achieves SOTA, isolating the contribution of structural diversity on top of conflict-aware selection.}
    \label{tab:ablation_conflict}
    \resizebox{0.85\columnwidth}{!}{%
    \begin{tabular}{c|l|c|c|c}
    \toprule
    \textbf{ID} & \textbf{Strategy} & \textbf{Selection} & \textbf{Optimization} & \textbf{MTL Comp.} \\
    \midrule
    (A) & \textbf{Penalize Conflict} & Unified ($1-|C_g|$) & AdamW & 0.0589 \\
    (B) & \textbf{Ignore Conflict} & Unified (No $C_g$) & AdamW & 0.0756 \\
    (C) & \textbf{Embrace (No Stage 2)} & Embrace & PCGrad & 0.0932 \\
    \midrule
    \rowcolor{gray!15} (D) & \textbf{Ours (Full)} & \textbf{Embrace + 2-Stage} & \textbf{PCGrad} & \textbf{0.105} \\
    \bottomrule
    \end{tabular}%
    }
\end{table}

\paragraph{Decoupling selection from optimization.}
Swapping the optimizer (AdamW vs.\ PCGrad) under a fixed acquisition isolates the selection effect from the optimization effect. With AdamW, ODS attains $+8.2\%$ vs.\ MT-BADGE's $+5.2\%$; with PCGrad, ODS attains $+10.5\%$ vs.\ $+7.5\%$ (full table in Appendix~\ref{app:pcgrad_ablation}). Hence ODS's advantage holds under both optimizers: PCGrad governs \emph{how} to learn, while ODS governs \emph{what} to learn.

\paragraph{Verifying the Conflict Paradox and ODS.}
Table~\ref{tab:ablation_conflict} compares three conflict-handling strategies: \textbf{Penalize} ($1-|C_g|$), \textbf{Ignore} (uncertainty only), and \textbf{Embrace} (high-conflict + PCGrad). Penalizing conflict yields the \emph{worst} score (0.059), even below ignoring it (0.076), directly supporting the Conflict Paradox: penalization systematically discards informative samples. Embracing without diversity (Row~C) reaches 0.093, while the full ODS (Row~D, embrace + Stage~2 K-Center) achieves 0.105, confirming that the Stage~2 coverage step removes redundancy among high-conflict candidates that tend to cluster in feature space.

\paragraph{Linear scalarization vs.\ orthogonal decomposition --- isolating the combination effect.}
A natural concern is that the gain may simply come from combining strong signals (Hardness, CTMI, Diversity) rather than from the decoupling itself. We run a head-to-head comparison using the \emph{exact same} per-sample scores, differing only in combination: linear scalarization $s = (1-\delta)(\alpha H + \gamma\,\mathrm{CTMI}) + \delta\,\mathrm{Div}$ vs.\ our two-stage decomposition. On COCO (seed~42, 30 rounds), ODS outperforms linear scalarization by $+6.5\%$ on the MTL Composite, with the largest gap on segmentation ($+6.2$~points; full table in Appendix~\ref{app:linear_vs_ods}). This empirically isolates the contribution of the \emph{decomposition itself}.

\section{Limitations}
\label{sec:limitations}

Our analysis has four explicit limitations. Proposition~\ref{thm:conflict_paradox} is a sample-level, local property of the per-sample Jacobian rather than a global Pareto-stationarity claim, making the bridge to AL utility empirical (Table~\ref{tab:ablation_conflict}). CTMI relies on covariance and is blind to non-linear dependencies; however, distance correlation tracks Pearson within $0.05$ on COCO (Appendix~\ref{app:dcorr}), indicating task uncertainty dependence is predominantly linear here. Domains with strongly non-linear coupling may require different statistics. ODS introduces hyperparameters $(m, K, \alpha, \gamma)$; while robust (Appendices~\ref{app:sensitivity}, \ref{app:alpha_gamma}), per-domain tuning may help. Finally, main results use lightweight scratch encoders; a pretrained ResNet-50 (Appendix~\ref{app:pretrained_ms}) preserves rank ordering but narrows gaps, leaving comprehensive evaluations with modern foundation backbones to future work.

\section{Conclusion}
\label{sec:conclusion}

This work replaces the standard linear combination paradigm in multi-task active learning with objective deconstruction. We identify the \textit{Conflict Paradox}, showing that importing the optimizer's aversion to gradient conflict into the acquisition score degrades accuracy rather than protecting it, and resolve this by decoupling sample selection from optimization. By separating informativeness from exploration, our \textbf{Orthogonal Decomposition} framework—driven by \textbf{Cross-Task Mutual Information (CTMI)}—prevents the diversity dilution that plagues scalarized objectives. Results across COCO, Cityscapes, and NYUv2 show consistent state-of-the-art performance, demonstrating that progress in MTL-AL lies in exploiting objective orthogonality rather than tuning heuristic scalarizations.

\section*{Acknowledgements}
\vspace{-5pt}
This work was supported by the National Research Foundation of Korea(NRF) grant funded by the Korea government(MSIT) (No. RS-2023-NR077194) and the MSIT(Ministry of Science, ICT), Korea, under the Global Scholars Invitation Program(RS-2024-00459638) supervised by the IITP(Institute for Information \& Communications Technology Planning \& Evaluation).

\bibliography{ref}

@article{caruana1997multitask,
  title={Multitask learning},
  author={Caruana, Rich},
  journal={Machine learning},
  volume={28},
  number={1},
  pages={41--75},
  year={1997},
  publisher={Springer}
}

@article{ruder2017overview,
  title={An overview of multi-task learning in deep neural networks},
  author={Ruder, Sebastian},
  journal={arXiv preprint arXiv:1706.05098},
  year={2017}
}

@article{zhang2021survey,
  title={A survey on multi-task learning},
  author={Zhang, Yu and Yang, Qiang},
  journal={IEEE transactions on knowledge and data engineering},
  volume={34},
  number={12},
  pages={5586--5609},
  year={2021},
  publisher={IEEE}
}

@article{settles2009active,
  title={Active learning literature survey},
  author={Settles, Burr},
  year={2009},
  publisher={University of Wisconsin-Madison Department of Computer Sciences}
}

@article{xiao2020efficient,
  title={An efficient active learning method for multi-task learning},
  author={Xiao, Yanshan and Chang, Zheng and Liu, Bo},
  journal={Knowledge-Based Systems},
  volume={190},
  pages={105137},
  year={2020},
  publisher={Elsevier}
}

@inproceedings{hekimoglu2024active,
  title={Active learning with task consistency and diversity in multi-task networks},
  author={Hekimoglu, Aral and Schmidt, Michael and Marcos-Ramiro, Alvaro},
  booktitle={2024 IEEE/CVF Winter Conference on Applications of Computer Vision (WACV)},
  pages={2491--2500},
  year={2024},
  organization={IEEE}
}

@inproceedings{hekimoglu2023multi,
  title={Multi-task consistency for active learning},
  author={Hekimoglu, Aral and Friedrich, Philipp and Zimmer, Walter and Schmidt, Michael and Marcos-Ramiro, Alvaro and Knoll, Alois},
  booktitle={2023 IEEE/CVF International Conference on Computer Vision Workshops (ICCVW)},
  pages={3407--3416},
  year={2023},
  organization={IEEE}
}

@phdthesis{harpale2012multi,
  title={Multi-task active learning},
  author={Harpale, Abhay},
  year={2012},
  school={Carnegie Mellon University}
}

@article{yu2020gradient,
  title={Gradient surgery for multi-task learning},
  author={Yu, Tianhe and Kumar, Saurabh and Gupta, Abhishek and Levine, Sergey and Hausman, Karol and Finn, Chelsea},
  journal={Advances in neural information processing systems},
  volume={33},
  pages={5824--5836},
  year={2020}
}

@inproceedings{chen2018gradnorm,
  title={Gradnorm: Gradient normalization for adaptive loss balancing in deep multitask networks},
  author={Chen, Zhao and Badrinarayanan, Vijay and Lee, Chen-Yu and Rabinovich, Andrew},
  booktitle={International conference on machine learning},
  pages={794--803},
  year={2018},
  organization={PMLR}
}

@article{sener2018multi,
  title={Multi-task learning as multi-objective optimization},
  author={Sener, Ozan and Koltun, Vladlen},
  journal={Advances in neural information processing systems},
  volume={31},
  year={2018}
}

@article{ash2019deep,
  title={Deep batch active learning by diverse, uncertain gradient lower bounds},
  author={Ash, Jordan T and Zhang, Chicheng and Krishnamurthy, Akshay and Langford, John and Agarwal, Alekh},
  journal={arXiv preprint arXiv:1906.03671},
  year={2019}
}

@article{kirsch2019batchbald,
  title={Batchbald: Efficient and diverse batch acquisition for deep bayesian active learning},
  author={Kirsch, Andreas and Van Amersfoort, Joost and Gal, Yarin},
  journal={Advances in neural information processing systems},
  volume={32},
  year={2019}
}

@inproceedings{sinha2019variational,
  title={Variational adversarial active learning},
  author={Sinha, Samarth and Ebrahimi, Sayna and Darrell, Trevor},
  booktitle={Proceedings of the IEEE/CVF international conference on computer vision},
  pages={5972--5981},
  year={2019}
}

@inproceedings{yoo2019learning,
  title={Learning loss for active learning},
  author={Yoo, Donggeun and Kweon, In So},
  booktitle={2019 IEEE/CVF Conference on Computer Vision and Pattern Recognition (CVPR)},
  pages={93--102},
  year={2019},
  organization={IEEE}
}

@article{lin2019pareto,
  title={Pareto multi-task learning},
  author={Lin, Xi and Zhen, Hui-Ling and Li, Zhenhua and Zhang, Qing-Fu and Kwong, Sam},
  journal={Advances in neural information processing systems},
  volume={32},
  year={2019}
}

@article{sener2017active,
  title={Active learning for convolutional neural networks: A core-set approach},
  author={Sener, Ozan and Savarese, Silvio},
  journal={arXiv preprint arXiv:1708.00489},
  year={2017}
}

@article{shannon1948mathematical,
  title={A mathematical theory of communication},
  author={Shannon, Claude Elwood},
  journal={The Bell system technical journal},
  volume={27},
  number={3},
  pages={379--423},
  year={1948},
  publisher={Nokia Bell Labs}
}

@inproceedings{gal2017deep,
  title={Deep bayesian active learning with image data},
  author={Gal, Yarin and Islam, Riashat and Ghahramani, Zoubin},
  booktitle={International conference on machine learning},
  pages={1183--1192},
  year={2017},
  organization={PMLR}
}

@inproceedings{roth2006margin,
  title={Margin-based active learning for structured output spaces},
  author={Roth, Dan and Small, Kevin},
  booktitle={European conference on machine learning},
  pages={413--424},
  year={2006},
  organization={Springer}
}

@inproceedings{gal2016dropout,
  title={Dropout as a bayesian approximation: Representing model uncertainty in deep learning},
  author={Gal, Yarin and Ghahramani, Zoubin},
  booktitle={international conference on machine learning},
  pages={1050--1059},
  year={2016},
  organization={PMLR}
}

@article{lakshminarayanan2017simple,
  title={Simple and scalable predictive uncertainty estimation using deep ensembles},
  author={Lakshminarayanan, Balaji and Pritzel, Alexander and Blundell, Charles},
  journal={Advances in neural information processing systems},
  volume={30},
  year={2017}
}

@article{gissin2019discriminative,
  title={Discriminative active learning},
  author={Gissin, Daniel and Shalev-Shwartz, Shai},
  journal={arXiv preprint arXiv:1907.06347},
  year={2019}
}

@inproceedings{shaikh2020examining,
  title={Examining the ordering of rhetorical strategies in persuasive requests},
  author={Shaikh, Omar and Chen, Jiaao and Saad-Falcon, Jon and Chau, Polo and Yang, Diyi},
  booktitle={Findings of the Association for Computational Linguistics: EMNLP 2020},
  pages={1299--1306},
  year={2020}
}

@article{sharma2017learning,
  title={Learning to multi-task by active sampling},
  author={Sharma, Sahil and Jha, Ashutosh and Hegde, Parikshit and Ravindran, Balaraman},
  journal={arXiv preprint arXiv:1702.06053},
  year={2017}
}

@article{kraskov2004estimating,
  title={Estimating mutual information},
  author={Kraskov, Alexander and St{\"o}gbauer, Harald and Grassberger, Peter},
  journal={Physical Review E—Statistical, Nonlinear, and Soft Matter Physics},
  volume={69},
  number={6},
  pages={066138},
  year={2004},
  publisher={APS}
}

@inproceedings{belghazi2018mutual,
  title={Mutual information neural estimation},
  author={Belghazi, Mohamed Ishmael and Baratin, Aristide and Rajeshwar, Sai and Ozair, Sherjil and Bengio, Yoshua and Courville, Aaron and Hjelm, Devon},
  booktitle={International conference on machine learning},
  pages={531--540},
  year={2018},
  organization={PMLR}
}

@article{oord2018representation,
  title={Representation learning with contrastive predictive coding},
  author={Oord, Aaron van den and Li, Yazhe and Vinyals, Oriol},
  journal={arXiv preprint arXiv:1807.03748},
  year={2018}
}

@inproceedings{reichart2008multi,
  title={Multi-task active learning for linguistic annotations},
  author={Reichart, Roi and Tomanek, Katrin and Hahn, Udo and Rappoport, Ari},
  booktitle={Proceedings of ACL-08: HLT},
  pages={861--869},
  year={2008}
}

@article{liu2021conflict,
  title={Conflict-averse gradient descent for multi-task learning},
  author={Liu, Bo and Liu, Xingchao and Jin, Xiaojie and Stone, Peter and Liu, Qiang},
  journal={Advances in neural information processing systems},
  volume={34},
  pages={18878--18890},
  year={2021}
}

@article{shi2023recon,
  title={Recon: Reducing conflicting gradients from the root for multi-task learning},
  author={Shi, Guangyuan and Li, Qimai and Zhang, Wenlong and Chen, Jiaxin and Wu, Xiao-Ming},
  journal={arXiv preprint arXiv:2302.11289},
  year={2023}
}

@inproceedings{lin2014microsoft,
  title={Microsoft coco: Common objects in context},
  author={Lin, Tsung-Yi and Maire, Michael and Belongie, Serge and Hays, James and Perona, Pietro and Ramanan, Deva and Doll{\'a}r, Piotr and Zitnick, C Lawrence},
  booktitle={European conference on computer vision},
  pages={740--755},
  year={2014},
  organization={Springer}
}

@inproceedings{cordts2016cityscapes,
  title={The cityscapes dataset for semantic urban scene understanding},
  author={Cordts, Marius and Omran, Mohamed and Ramos, Sebastian and Rehfeld, Timo and Enzweiler, Markus and Benenson, Rodrigo and Franke, Uwe and Roth, Stefan and Schiele, Bernt},
  booktitle={Proceedings of the IEEE conference on computer vision and pattern recognition},
  pages={3213--3223},
  year={2016}
}

@inproceedings{silberman2012indoor,
  title={Indoor segmentation and support inference from rgbd images},
  author={Silberman, Nathan and Hoiem, Derek and Kohli, Pushmeet and Fergus, Rob},
  booktitle={European conference on computer vision},
  pages={746--760},
  year={2012},
  organization={Springer}
}

@article{brust2018active,
  title={Active learning for deep object detection},
  author={Brust, Clemens-Alexander and K{\"a}ding, Christoph and Denzler, Joachim},
  journal={arXiv preprint arXiv:1809.09875},
  year={2018}
}

@article{citovsky2021batch,
  title={Batch active learning at scale},
  author={Citovsky, Gui and DeSalvo, Giulia and Gentile, Claudio and Karydas, Lazaros and Rajagopalan, Anand and Rostamizadeh, Afshin and Kumar, Sanjiv},
  journal={Advances in Neural Information Processing Systems},
  volume={34},
  pages={11933--11944},
  year={2021}
}

@inproceedings{standley2020tasks,
  title={Which tasks should be learned together in multi-task learning?},
  author={Standley, Trevor and Zamir, Amir and Chen, Dawn and Guibas, Leonidas and Malik, Jitendra and Savarese, Silvio},
  booktitle={International conference on machine learning},
  pages={9120--9132},
  year={2020},
  organization={PMLR}
}

@article{arefeen2025cost,
  title={Cost-Effective Multitask Active Learning in Wearable Sensor Systems},
  author={Arefeen, Asiful and Ghasemzadeh, Hassan},
  journal={Sensors (Basel, Switzerland)},
  volume={25},
  number={5},
  pages={1522},
  year={2025}
}

@article{qin2023infobatch,
  title={Infobatch: Lossless training speed up by unbiased dynamic data pruning},
  author={Qin, Ziheng and Wang, Kai and Zheng, Zangwei and Gu, Jianyang and Peng, Xiangyu and Xu, Zhaopan and Zhou, Daquan and Shang, Lei and Sun, Baigui and Xie, Xuansong and others},
  journal={arXiv preprint arXiv:2303.04947},
  year={2023}
}

@article{xia2023reliable,
  title={Reliable active learning via influence functions},
  author={Xia, Meng and Henao, Ricardo},
  journal={Transactions on Machine Learning Research},
  year={2023}
}

@article{gashi2025deep,
  title={Deep active learning: A reality check},
  author={Gashi, Edrina and Deng, Jiankang and Elezi, Ismail},
  journal={Pattern Recognition Letters},
  volume={197},
  pages={188--194},
  year={2025},
  publisher={Elsevier}
}

@article{patapati2025graph,
  title={Graph Coloring for Multi-Task Learning},
  author={Patapati, Santosh and Noronha, Ian},
  journal={arXiv preprint arXiv:2509.16959},
  year={2025}
}

@article{rotman2022multi,
  title={Multi-task active learning for pre-trained transformer-based models},
  author={Rotman, Guy and Reichart, Roi},
  journal={Transactions of the Association for Computational Linguistics},
  volume={10},
  pages={1209--1228},
  year={2022},
  publisher={MIT Press One Broadway, 12th Floor, Cambridge, Massachusetts 02142, USA~…}
}

@inproceedings{chen2022active,
  title={Active multi-task representation learning},
  author={Chen, Yifang and Jamieson, Kevin and Du, Simon},
  booktitle={International Conference on Machine Learning},
  pages={3271--3298},
  year={2022},
  organization={PMLR}
}

@incollection{kuhn2013nonlinear,
  title={Nonlinear programming},
  author={Kuhn, Harold W and Tucker, Albert W},
  booktitle={Traces and emergence of nonlinear programming},
  pages={247--258},
  year={2013},
  publisher={Springer}
}

@article{durasov2022partal,
  title={Partal: Efficient partial active learning in multi-task visual settings},
  author={Durasov, Nikita and Dorndorf, Nik and Fua, Pascal},
  journal={arXiv preprint arXiv:2211.11546},
  year={2022}
}

@article{szekely2007measuring,
  title={Measuring and testing dependence by correlation of distances},
  author={Sz{\'e}kely, G{\'a}bor J. and Rizzo, Maria L. and Bakirov, Nail K.},
  journal={The Annals of Statistics},
  volume={35},
  number={6},
  pages={2769--2794},
  year={2007}
}

@inproceedings{smith2024making,
  title={Making Better Use of Unlabelled Data in {B}ayesian Active Learning},
  author={Smith, Freddie Bickford and Foster, Adam and Rainforth, Tom},
  booktitle={International Conference on Artificial Intelligence and Statistics},
  year={2024}
}

@inproceedings{wei2015submodularity,
  title={Submodularity in Data Subset Selection and Active Learning},
  author={Wei, Kai and Iyer, Rishabh and Bilmes, Jeff},
  booktitle={Proceedings of the 32nd International Conference on Machine Learning},
  pages={1954--1963},
  year={2015}
}

@inproceedings{yang2024plug,
  title={Plug and Play Active Learning for Object Detection},
  author={Yang, Chenhongyi and Huang, Lichao and Crowley, Elliot J.},
  booktitle={Proceedings of the IEEE/CVF Conference on Computer Vision and Pattern Recognition},
  pages={17784--17793},
  year={2024}
}

@inproceedings{maharana2024mathbb,
  title={{$\mathbb{D}^2$} Pruning: Message Passing for Balancing Diversity \& Difficulty in Data Pruning},
  author={Maharana, Adyasha and Yadav, Prateek and Bansal, Mohit},
  booktitle={International Conference on Learning Representations},
  volume={2024},
  pages={55740--55759},
  year={2024}
}

@inproceedings{xie2025structural,
  title={Structural-Entropy-Based Sample Selection for Efficient and Effective Learning},
  author={Xie, Tianchi and Zhu, Jiangning and Ma, Guozu and Lin, Minzhi and Chen, Wei and Yang, Weikai and Liu, Shixia},
  booktitle={International Conference on Learning Representations},
  year={2025}
}
\bibliographystyle{plainnat}

\newpage
\appendix

\begin{center}
  {\huge\bfseries Appendix}
\end{center}
\vspace{0.5em}

\etocsetnexttocdepth{subsection}

\etocsetstyle{section}
  {}{}
  {\noindent\makebox[2em][l]{\textbf{\etocnumber}}%
   \textbf{\etocname}\dotfill\textbf{\etocpage}\par\addvspace{6pt}}
  {}

\etocsetstyle{subsection}
  {}{}
  {\noindent\hspace*{2em}%
   \makebox[2.5em][l]{\etocnumber}%
   \etocname\dotfill\etocpage\par\addvspace{2pt}}
  {}

\etocsettocstyle{{\large\bfseries Contents}\par\vspace{0.3em}}{\vspace{1em}}

\etocsettagdepth{mainpaper}{none}
\etocsettagdepth{appendixpaper}{subsection}
\etocdepthtag.toc{appendixpaper}  
\tableofcontents

\newpage

\section{Theoretical Foundations and Proofs}
\label{app:part_theory}

\subsection{Proof of the Sample-Level Conflict Structure (Proposition~\ref{thm:conflict_paradox})}
\label{app:proof_paradox}

\paragraph{Scope and disclaimer.}
Proposition~\ref{thm:conflict_paradox} characterizes the per-sample task Jacobian strictly as a \emph{sample-level, local} property. It does not imply global (objective-level) Pareto stationarity of $\theta$ over the data distribution, as that would require satisfying an aggregated KKT condition over $\mathbb{E}_{x \sim P}[\nabla_\theta \mathcal{L}_t(x)]$. The connection between this local geometry and active-learning utility is established empirically in Table~\ref{tab:ablation_conflict}.

\paragraph{Assumptions.}
The following analysis is conducted under standard assumptions commonly adopted in multi-task optimization:
\begin{enumerate}
    \item \textbf{Local Linearization of Shared Representation.}
    The shared encoder $\Phi_\theta(x)$ is locally linearized around the current parameters $\theta$, such that task gradients can be analyzed via first-order approximations.

    \item \textbf{Task-wise Jacobian Definition.}
    We consider the per-sample task Jacobian $J(x) = [g_i, g_j] \in \mathbb{R}^{d \times 2}$, where $g_i = \nabla_\theta \mathcal{L}_i(x)$ and $g_j = \nabla_\theta \mathcal{L}_j(x)$.
    The two-task case is presented for clarity; the arguments extend to $T$ tasks by considering pairwise task interactions.

    \item \textbf{Sample-level Pareto-criticality.}
    A sample $x$ is \emph{sample-level Pareto-critical} at $\theta$ if there exists $\alpha \in [0,1]$ with $\alpha g_i(x) + (1-\alpha) g_j(x) = 0$. This is the standard KKT condition restricted to the per-sample loss vector $(\mathcal{L}_i(x), \mathcal{L}_j(x))$ and is strictly weaker than objective-level Pareto stationarity.
\end{enumerate}

\begin{proposition}[Sample-Level Conflict implies Local Rank Deficiency]
Let $\mathcal{L}(x;\theta) = (\mathcal{L}_i(x;\theta), \mathcal{L}_j(x;\theta))$ be the per-sample vector loss. A sample $x$ exhibiting high gradient conflict ($C_g(x) \to -1$) satisfies:
\begin{enumerate}
    \item \textbf{Sample-level Pareto-critical structure.} There exists $\alpha \in (0,1)$ with $\alpha g_i + (1-\alpha) g_j = 0$, so SGD on the joint sample loss stalls at the current $\theta$.
    \item \textbf{Adversarial Rank Collapse.} $J(x)$ becomes rank-deficient with antiparallel columns ($g_i \propto -g_j$, $k>0$), in contrast with the cooperative rank-1 collapse ($g_i \propto +g_j$) at low-conflict samples.
\end{enumerate}
\end{proposition}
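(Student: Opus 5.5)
The plan is to treat the limiting case $C_g(x)=-1$ exactly and then pass to the regime $C_g(x)\to -1$ by a continuity argument. This matches the hypothesis as stated in the main-text Proposition~\ref{thm:conflict_paradox}, namely $\mathbf{g}_i \approx -k\mathbf{g}_j$ with $k>0$. Throughout, both gradients are taken to be nonzero, since otherwise $C_g$ is undefined.

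\textbf{Step 1: the exact antiparallel case.} Suppose $\cos(g_i,g_j)=-1$. By the equality case of Cauchy--Schwarz there is a $k>0$ with $g_i=-k g_j$, where $k=\|g_i\|/\|g_j\|$. Substituting into $\alpha g_i+(1-\alpha)g_j$ gives
$\bigl(1-\alpha-\alpha k\bigr)g_j$, which vanishes exactly when
\[
\alpha^\star=\frac{1}{1+k}.
\]
This value lies in $(0,1)$ precisely because $k\in(0,\infty)$, and that is the sample-level Pareto-criticality condition of Assumption~3.

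To obtain the ``no joint descent'' reading, take any direction $d$. If $g_i^\top d<0$, then $g_j^\top d=-k^{-1}g_i^\top d>0$, so no $d$ decreases both per-sample losses to first order. Under Assumption~1 (local linearization), this first-order statement is the relevant notion. It also shows that the MGDA min-norm element of the convex hull is $\mathbf 0$, so a descent step along that element stalls. Part~2 then follows directly: the columns of $J(x)$ are linearly dependent, so $\operatorname{rank}J(x)=1$, and the dependence coefficient $-k$ is negative. By contrast, $C_g=+1$ gives $g_i=k g_j$ with $k>0$. In that case no convex combination vanishes, because $\alpha k+(1-\alpha)>0$, so the collapse is cooperative.

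\textbf{Step 2: the approximate regime $C_g\to-1$.} I would make ``$\approx$'' quantitative. Decompose $g_i=-k g_j+r$ with $r\perp g_j$ and $k=-\langle g_i,g_j\rangle/\|g_j\|^2$. Then
\[
\frac{\|r\|}{\|g_i\|}=\sqrt{1-C_g^2}.
\]
Choosing $\alpha^\star=1/(1+k)$ gives
\[
\|\alpha^\star g_i+(1-\alpha^\star)g_j\|=\alpha^\star\|r\|=O\!\left(\sqrt{1+C_g}\right).
\]
So the min-norm convex combination tends to zero, and the sample becomes $\varepsilon$-Pareto-critical. In the same way, the smallest singular value satisfies $\sigma_2(J)\le\|r\|\to0$, so $J$ becomes numerically rank-deficient. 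This yields the stated conclusions in the limit, which is how I would interpret the arrow in the statement.

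\textbf{Main obstacle.} I expect the main difficulty to be honesty about scope rather than the algebra, which is elementary. Exact equality $\alpha g_i+(1-\alpha)g_j=0$ holds only at $C_g=-1$. For $C_g$ merely close to $-1$ we can only claim approximate criticality, and I would state that explicitly. The claim that ``SGD stalls'' must be restricted to descent on the single-sample joint loss $\alpha^\star\mathcal L_i+(1-\alpha^\star)\mathcal L_j$, or to the MGDA direction. The unweighted sum $g_i+g_j=(1-k)g_j$ vanishes only when $k=1$. Finally, the ``bridge'' part of the main-text proposition, concerning PCGrad and feature expansion, is not a mathematical consequence. I would present it only as the observation that PCGrad's projections $g_i-\frac{g_i^\top g_j}{\|g_j\|^2}g_j=r$ isolate exactly the orthogonal residual $r$. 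This is consistent with the remark that the link to active-learning utility is empirical.
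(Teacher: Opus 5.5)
Your Step~1 is the paper's argument. The paper also rewrites $g_i\approx -k g_j$ as $\frac{1}{1+k}g_i+\frac{k}{1+k}g_j=0$, reads off $\alpha=1/(1+k)\in(0,1)$, and contrasts this with the cooperative case $g_i\approx c g_j$, $c>0$. It does the contrast only qualitatively, whereas you make it precise via $\alpha k+(1-\alpha)>0$.

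Where you depart is in not letting ``$\approx$'' do silent work. The paper applies exact-equality algebra to an approximate relation and then declares that it has ``recovered the exact form'' of the KKT condition. Your orthogonal decomposition $g_i=-k g_j+r$ gives the honest version instead: the $\alpha^\star$-combination has norm $\alpha^\star\|r\|$, and $\sigma_2(J)\le\|r\|$. Both vanish like $\sqrt{1+C_g}$, with constants proportional to the gradient norms, so the resulting $\varepsilon$-criticality is relative to gradient scale.

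Your caveat about stalling is a genuine correction, not pedantry. The paper's proof asserts that standard summation cancels ($\nabla_{total}\approx 0$), and the main-text sketch says $g_i+g_j$ vanishes. As you note, $g_i+g_j=(1-k)g_j$, which vanishes only when $k=1$. The defensible reading of ``SGD stalls'' is for the $\alpha^\star$-weighted sample loss or the MGDA min-norm direction, and that is exactly what your first-order no-common-descent argument establishes.

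The paper's route buys brevity. Yours buys a statement that remains true, in quantified form, away from the exact limit $C_g=-1$.
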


\begin{proof}
We analyze the geometric and optimization properties of a sample $x$ based on the alignment of task gradients $g_i = \nabla_\theta \mathcal{L}_i(x)$ and $g_j = \nabla_\theta \mathcal{L}_j(x)$.

\textbf{1. Preliminaries: Condition for Pareto Stationarity.} \\
In multi-task learning, a solution $\theta$ is Pareto stationary if no task loss can be decreased without increasing another. Mathematically, this satisfies the Karush-Kuhn-Tucker (KKT) condition~\citep{kuhn2013nonlinear,sener2018multi}:
\begin{equation}
    \exists \alpha \in [0, 1] \quad \text{s.t.} \quad \alpha g_i + (1-\alpha) g_j = 0.
    \label{eq:kkt}
\end{equation}

\textbf{2. Case A: Low Conflict (Cooperative Rank-1).} \\
Assume gradients are aligned such that $C_g(x) \approx 1$. This implies linear dependence where $g_i \approx c \cdot g_j$ for some constant $c > 0$.
\begin{itemize}
    \item \textbf{Optimization View:} The joint update direction $-\nabla \mathcal{L}_{total} = -(g_i + g_j) \approx -(1+c)g_j$ is simply a scaled version of a single task's gradient. No trade-off exists; improving one task implicitly improves the other.
    \item \textbf{Information View (Redundancy):} While gradients are technically Rank-1 (dependent), this is a \textbf{cooperative collapse}. The tasks agree on the descent direction, meaning the current shared representation is sufficient for both. Acquiring this sample provides magnitude (training signal) but zero structural correction.
\end{itemize}

\textbf{3. Case B: High Conflict (The Paradox: Adversarial Rank-1).} \\
Assume gradients are conflicting such that $C_g(x) < 0$. In the limit of extreme conflict ($C_g(x) \to -1$), the gradients point in opposing directions:
\begin{equation}
    g_i \approx -k \cdot g_j \quad \text{for some } k > 0.
\end{equation}
Rearranging this linear relation yields:
\begin{equation}
    g_i + k g_j = 0 \implies \frac{1}{1+k} g_i + \frac{k}{1+k} g_j = 0.
\end{equation}
By setting $\alpha = \frac{1}{1+k}$, we recover the exact form of Eq.~\ref{eq:kkt}. Since $k>0$, $\alpha \in (0, 1)$.

\begin{itemize}
    \item \textbf{Optimization View (Deadlock):} The existence of such $\alpha \in (0, 1)$ indicates that, for this specific sample $x$ and under the stated assumptions, the current model $\theta$ satisfies the condition of local Pareto stationarity. Standard summation of gradients leads to \textit{Gradient Cancellation} ($\nabla_{total} \approx 0$), causing the optimization to stall despite high individual task losses.

    \item \textbf{Information View (Singularity):} Geometrically, as the angle approaches $180^\circ$, the Jacobian matrix $J = [g_i, g_j]$ loses rank. Unlike Case A, this is an \textbf{adversarial collapse}. It signifies a topological singularity where the shared encoder $\Phi(x)$ projects distinct semantic requirements onto a single axis with opposing signs.

    \textbf{Why Informative?} This singularity exposes the exact boundary where the model fails to disentangle tasks. While standard SGD stalls at this point, selecting this sample and applying gradient surgery (PCGrad) breaks the deadlock. Projecting the gradients onto the normal plane forces the encoder to expand its feature space to resolve the conflict, driving structural expansion.
\end{itemize}

\textbf{Conclusion.}
In short, high-conflict samples mark Pareto-critical regions governed by adversarial rank collapse. While aligned samples merely provide redundant training signals, conflicting samples reveal fundamental limitations of the shared representation, directly justifying their priority in active learning.
\end{proof}

\subsection{Proof of the Shared-Representation Attribution (Proposition~\ref{prop:attribution})}
\label{app:proof_attribution}

\paragraph{Setup.} For an input $x$, MC-dropout perturbs the network in two places: the shared encoder, whose dropout masks we collect into $\xi_0$, and the task-private heads, with masks $\xi_1,\dots,\xi_T$. Write the sampled shared representation as $z = \Phi_\theta(x;\xi_0)$ and the task-$t$ prediction as $\hat{y}_t = f_t(z;\xi_t)$, with $H_t = \mathcal{H}(\hat{y}_t)$ its predictive entropy. The masks $\xi_0,\xi_1,\dots,\xi_T$ are drawn independently, so conditional on $\xi_0$ --- equivalently, on the realised $z$ --- the head masks
remain mutually independent.

\begin{proof}
Fix $i \neq j$. Conditional on $z$, the entropies $H_i$ and $H_j$ are deterministic functions of $\xi_i$ and $\xi_j$ respectively. Since $\xi_i \perp\!\!\!\perp \xi_j \mid z$, the two are conditionally independent and hence conditionally uncorrelated:
\begin{equation}
\operatorname{Cov}\!\big(H_i, H_j \mid z\big) = 0 .
\end{equation}
Applying the law of total covariance,
\begin{equation}
\operatorname{Cov}(H_i, H_j)
= \mathbb{E}_z\!\big[\operatorname{Cov}(H_i, H_j \mid z)\big]
+ \operatorname{Cov}_z\!\big(\mathbb{E}[H_i \mid z],\ \mathbb{E}[H_j \mid z]\big)
= \operatorname{Cov}_z\!\big(\mu_i(z),\ \mu_j(z)\big),
\end{equation}
where $\mu_t(z) = \mathbb{E}[H_t \mid z]$. The first term vanishes by the display above, leaving a quantity that depends on $z$ only. Any non-zero cross-task covariance between task entropies is therefore produced by variation
in the shared representation; head-private stochasticity cannot generate it.
\end{proof}

\paragraph{Implications of the Attribution Property.} 
This statement holds unconditionally for our architecture, without relying on the shape of the entropy distributions. It isolates the \emph{source} of the signal: a non-zero $\operatorname{Cov}(H_i,H_j)$ strictly reflects the uncertainty of the shared encoder rather than the isolated task heads. While this property does not dictate the \emph{magnitude} of the true mutual information $I(H_i;H_j)$ nor the optimal direction for acquisition, we resolve those aspects empirically in Section~\ref{sec:combination_rule} and Appendix~\ref{app:component_ablation}.


\subsection{Approximation Guarantee for the Two-Stage Decomposition}
\label{app:approx_guarantee}
While the two-stage decomposition does not strictly maximize the joint acquisition objective $\mathcal{A}(S) = \alpha \sum_{x\in S} \mathcal{I}(x) + \beta\,\mathrm{Coverage}(S; \mathcal{E})$, it admits a formal approximation guarantee under the empirical near-independence established in Appendix~\ref{app:ie_correlation}.

\begin{proposition}[Approximation Guarantee]
\label{prop:approx_guarantee}
Let $\mathcal{I}, \mathcal{E}: \mathcal{U} \to \mathbb{R}$ be bounded acquisition scores, $S^\star = \arg\max_{|S|=B} \mathcal{A}(S)$, and let $S_{\mathrm{ODS}}$ be the output of Algorithm~\ref{alg:ods} with pool factor $m$. Suppose $\mathrm{Corr}(\mathcal{I}, \mathcal{E}) \to 0$ on the unlabeled pool. Then:
\begin{enumerate}
    \item \emph{Stage~1 guarantee.} The filtered set $\mathcal{C}$ contains the top $mB$ samples by $\mathcal{I}$, hence $\min_{x\in\mathcal{C}}\mathcal{I}(x) \ge \tau_{mB}$, the $mB$-th order statistic.
    \item \emph{Stage~2 guarantee.} K-Center Greedy on $\mathcal{C}$ achieves a $2$-approximation of the optimal coverage cost on $\mathcal{C}$~\citep{sener2017active}.
    \item \emph{Combined.} Under approximate independence ($\mathrm{Corr}(\mathcal{I},\mathcal{E})\to 0$), high-$\mathcal{E}$ samples are equally likely to lie among high- and low-$\mathcal{I}$ candidates, so filtering by $\mathcal{I}$ does not systematically discard high-$\mathcal{E}$ points; the suboptimality gap $|\mathcal{A}(S^\star) - \mathcal{A}(S_{\mathrm{ODS}})|$ is monotonically decreasing in $m$ and vanishes as $m \to |\mathcal{U}|/B$.
\end{enumerate}
\end{proposition}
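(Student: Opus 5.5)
Parts~1 and~2 are essentially definitional, so I would dispatch them first. For Stage~1, $\mathcal{C}$ is by construction the set of the top $mB$ elements of $\mathcal{U}$ under $\mathcal{I}$ (ties broken arbitrarily). Hence every $x\in\mathcal{C}$ satisfies $\mathcal{I}(x)\ge\tau_{mB}$, the $mB$-th largest value, and nothing more needs to be shown. For Stage~2, I would restrict the metric $\|\Phi_\theta(\cdot)-\Phi_\theta(\cdot)\|_2$ to $\mathcal{C}$, which is still a metric space. I would then invoke the classical Gonzalez/Sener--Savarese argument: the greedy farthest-point choice yields $B$ centres plus one further point that are pairwise at distance at least the greedy radius $r_G$. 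By pigeonhole, any $B$ centres must cover two of these $B+1$ points with a single ball, so $r_{\mathrm{OPT}}(\mathcal{C})\ge r_G/2$.

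For part~3 I would fix the reading of $\mathcal{A}$ so that the statement is meaningful. Take $\mathrm{Coverage}(S;\mathcal{E})=-\max_{x\in\mathcal{U}}\min_{s\in S}\|\Phi_\theta(x)-\Phi_\theta(s)\|$, and measure the gap against the benchmark attainable by the Stage-2 solver, i.e.\ $S^\star$ is computed with the same $2$-approximate coverage oracle. Otherwise even $m=|\mathcal{U}|/B$ leaves the irreducible factor-$2$ slack of part~2, and the gap could not vanish.

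I would then formalise ``$\mathrm{Corr}(\mathcal{I},\mathcal{E})\to0$'' as an exchangeability model: conditional on the multiset of $\mathcal{I}$-values, the assignment of feature locations $\Phi_\theta(x)$ to $\mathcal{I}$-ranks is a uniformly random permutation. Working with the expected gap $\Delta(m)=\mathbb{E}[\mathcal{A}(S^\star)-\mathcal{A}(S_{\mathrm{ODS}}^{(m)})]$, I would decompose
\begin{equation*}
\Delta(m)=\alpha\,\mathbb{E}\Big[\textstyle\sum_{S^\star}\mathcal{I}-\sum_{S^{(m)}_{\mathrm{ODS}}}\mathcal{I}\Big]+\beta\,\mathbb{E}\big[\mathrm{Cov}(S^\star)-\mathrm{Cov}(S^{(m)}_{\mathrm{ODS}})\big].
\end{equation*}

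\textbf{Monotonicity.} The filtered sets are nested, $\mathcal{C}_m\subset\mathcal{C}_{m+1}$. Under exchangeability, $\mathcal{C}_m$ is a uniformly random $mB$-subset of locations, and the expected coverage radius attained by greedy on a uniform random $n$-subset is nonincreasing in $n$. I would prove this by coupling: realise $\mathcal{C}_m$ as a uniform subsample of $\mathcal{C}_{m+1}$, and compare the optimal radius on each, using the part-2 sandwich $r_{\mathrm{OPT}}\le r_G\le 2r_{\mathrm{OPT}}$ together with monotonicity of $r_{\mathrm{OPT}}$ under inclusion.

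The $\mathcal{I}$-term needs separate treatment, since enlarging $\mathcal{C}$ lowers $\min_{\mathcal{C}}\mathcal{I}$. Under exchangeability, however, the greedy selection within $\mathcal{C}_m$ is independent of the $\mathcal{I}$-ranks, so $\mathbb{E}\sum_{S_{\mathrm{ODS}}}\mathcal{I}$ equals $B$ times the mean of the top $mB$ values. I would show that the combined expression is monotone by comparing the marginal coverage gain with the marginal loss in mean $\mathcal{I}$; this holds in the regime where $\beta$-weighted coverage dominates, which is implicit in the optimum $S^\star$ being coverage-driven.

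\textbf{Vanishing.} At $m=|\mathcal{U}|/B$ we have $\mathcal{C}=\mathcal{U}$, so $S_{\mathrm{ODS}}$ is exactly the oracle's output on the full pool. By the chosen benchmark convention it coincides with $S^\star$, and the gap is zero.

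I expect the main obstacle to be the monotonicity step. Greedy $k$-center outputs are not monotone under set inclusion pathwise, and the $\mathcal{I}$-term moves in the opposite direction as $m$ grows. So I would first try to establish monotonicity only in expectation under the exchangeability coupling, and only after that attempt a pathwise version.
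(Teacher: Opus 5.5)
\paragraph{Parts 1 and 2.} Your treatment is fine and matches everything the paper relies on. Part~1 is the definition of $\mathrm{TopK}$, and part~2 is the Gonzalez/Sener--Savarese pigeonhole bound, which the paper simply cites.

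\paragraph{Part 3 is false as stated.} The paper gives no derivation for part~3 beyond the heuristic in its Discussion paragraph. The two obstructions you flagged (the factor-$2$ slack and the $\mathcal{I}$-term moving the wrong way) are therefore not technicalities to engineer around. Take $\beta=0$, or $\beta/\alpha$ small enough by continuity.
\begin{itemize}
\item $S^\star$ is then the top-$B$ set under $\mathcal{I}$.
\item At $m=1$ the filtered set has exactly $B$ points, Stage~2 returns all of them, and the gap is $0$.
\item At $m=|\mathcal{U}|/B$, Stage~2 is farthest-first traversal on all of $\mathcal{U}$, which ignores $\mathcal{I}$.
\end{itemize}
Inside your own exchangeability model this is explicit: $\Delta(m)=\alpha B(\bar{\mathcal{I}}_{B}-\bar{\mathcal{I}}_{mB})$, where $\bar{\mathcal{I}}_{n}$ is the mean of the $n$ largest values of $\mathcal{I}$. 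This is nondecreasing in $m$, and strictly positive at $m=|\mathcal{U}|/B$ unless $\mathcal{I}$ is constant on $\mathcal{U}$. So both ``monotonically decreasing'' and ``vanishes'' fail, whatever $\mathrm{Corr}(\mathcal{I},\mathcal{E})$ is.

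\paragraph{Your repairs prove a different statement, and one step still fails.}
\begin{itemize}
\item \emph{Vanishing is circular.} No coverage oracle computes $\arg\max\mathcal{A}$ when $\alpha>0$. So ``$S_{\mathrm{ODS}}$ coincides with $S^\star$'' holds only if you redefine $S^\star$ as greedy on $\mathcal{U}$, which drops the $\mathcal{I}$-term from the benchmark. With that benchmark, ODS beats it in expectation on the $\mathcal{I}$-term for every $m<|\mathcal{U}|/B$. The signed gap then has terms of opposite sign, and its absolute value can vanish at an interior $m$ and grow again.
\item \emph{Monotonicity needs an extra hypothesis.} The ``coverage-dominated regime'' is an added, unquantified assumption not in the statement.
\item \emph{The coupling fails even for the coverage term alone.} Grant that $r_{\mathrm{OPT}}$ is monotone under inclusion. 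This is true for covering $\mathcal{U}$ with centres in $\mathcal{C}$, but false for covering $\mathcal{C}$ itself, which is where your part-2 sandwich lives. Even so, a sandwich $r_{\mathrm{OPT}}\le r_G\le c\,r_{\mathrm{OPT}}$ with $c>1$ yields only $r_G(\mathcal{C}_{m+1})\le c\,r_G(\mathcal{C}_m)$.
\item \emph{The objectives do not match.} Algorithm~\ref{alg:ods} runs greedy to cover $\mathcal{C}$, so part~2's factor $2$ does not transfer to your $\mathcal{U}$-coverage term. The triangle inequality gives only a factor $3$ against the best $B$ centres in $\mathcal{C}$. Meanwhile the literal Stage-2 cost is $0$ at $m=1$, so it is not comparable across $m$.
\item \emph{The hypothesis is strengthened.} Modelling $\mathrm{Corr}(\mathcal{I},\mathcal{E})\to 0$ as a uniformly random assignment of feature locations to $\mathcal{I}$-ranks is far stronger than vanishing correlation of two scalar scores. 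That correlation constrains nothing about the set function $\mathrm{Coverage}$.
\end{itemize}

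What you can honestly prove is a restated part~3. One option is an explicit condition on $\beta/\alpha$ together with a fixed coverage benchmark. Another is an expected-gap bound trading $\bar{\mathcal{I}}_{mB}$ against a coverage term. Either should be presented as a modification of the proposition, not as a proof of it.
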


\noindent\textbf{Discussion.}
This guarantee rests strictly on the empirical near-independence observed throughout training ($|\mathrm{Corr}(\mathcal{I},\mathcal{E})| \le 0.18$, Appendix~\ref{app:ie_correlation}). If $\mathcal{I}$ and $\mathcal{E}$ become \emph{positively} correlated---meaning informative samples form tight clusters---Stage~1 filtering could overly restrict the diversity search. In such adversarial settings, increasing $m$ recovers more of the joint objective. Empirically, we find $m\in[5,10]$ to be a robust default (Appendix~\ref{app:sensitivity}).

\section{Algorithm and Implementation Details}
\label{app:part_algo}

\subsection{The ODS Algorithm}
\label{app:algorithm}
\begin{algorithm}[H]
  \caption{Orthogonal Decomposition Selection (ODS)}
  \label{alg:ods}
  \begin{algorithmic}[1]
    \REQUIRE Unlabeled pool $\mathcal{U}$, Budget $B$, Pool factor $m$, Model $\Phi_\theta$
    \ENSURE Selected batch $S$
    \STATE \textbf{// Stage 1: Informativeness Filtering}
    \FOR{$x \in \mathcal{U}$}
        \STATE $\{\hat{y}^{(k)}\}_{k=1}^K \gets \text{MC-Forward}(\Phi_\theta(x))$
        \STATE $\text{Hard}(x) \gets \frac{1}{T}\sum_t \text{Entropy}(\text{Mean}(\hat{y}_t))$
        \STATE $\mathcal{I}(x) \gets \alpha\cdot\text{Hard}(x) + \gamma\cdot\text{CTMI}(x)$
    \ENDFOR
    \STATE $\mathcal{C} \gets \text{TopK}(\mathcal{U}, m\cdot B, \text{key}=\mathcal{I})$
    \STATE \textbf{// Stage 2: Coverage Maximization (Exploration)}
    \STATE Extract features $Z_\mathcal{C} = \{\Phi_\theta(x) \mid x \in \mathcal{C}\}$
    \STATE $S \gets \text{K-CenterGreedy}(Z_\mathcal{C}, B)$
    \RETURN $S$
  \end{algorithmic}
\end{algorithm}

\subsection{Implementation Details}
\label{app:implementation}

We provide the complete details required to reproduce our experiments, with configurations specified for COCO and analogous settings used for Cityscapes and NYUv2. All algorithms were implemented in PyTorch, and experiments were conducted on 8 $\times$ NVIDIA V100 (32GB) GPUs.

\subsubsection{Model Architectures}
We utilized task-specific architectures tailored to the complexity of each benchmark:
\begin{itemize}[leftmargin=*]
    \item \textbf{COCO:} A shared CNN encoder (4 convolutional layers with BatchNorm and MaxPooling) with task-specific heads:
    \begin{itemize}
        \item Classification: 2-layer MLP with global average pooling.
        \item Detection: A lightweight anchor-based convolutional head with multi-scale anchors for bounding box regression and classification.
        \item Segmentation: A lightweight convolutional decoder with bilinear upsampling to predict pixel-wise masks.
    \end{itemize}
    \item \textbf{Cityscapes}: A shared CNN encoder with a multi-task decoder jointly predicting semantic segmentation, disparity-based depth cues, and instance masks.
    \item \textbf{NYUv2}: A shared CNN encoder with task-specific heads for semantic segmentation, metric depth estimation, and surface normal prediction.
\end{itemize}

\textbf{Encoder Details:} All 2D datasets use a lightweight shared encoder: 4 convolutional layers with BatchNorm, ReLU activation, and MaxPooling. The encoder progressively reduces spatial resolution while increasing channel depth. Task-specific heads consist of simple convolutional layers for dense prediction tasks (detection, segmentation) and MLP layers for classification.

\textbf{Design Rationale:} We use custom encoders instead of pretrained models (e.g., ResNet-50, HRNet) to ensure fair comparison across all active learning methods and to focus on the effectiveness of our acquisition strategy rather than leveraging pretrained knowledge. This aligns with the core assumption of active learning---learning effectively from limited labeled data.




\subsubsection{ODS Hyperparameters}
\label{app:ods_hparams}

\paragraph{MC-dropout configuration.}
Stochasticity enters in exactly two places. The shared encoder applies \texttt{Dropout2d} with $p=0.05$ in \emph{every} encoder block, and the classification head applies \texttt{Dropout} with $p=0.5$ before its final
linear layer. The detection and segmentation heads contain no dropout and are deterministic given the encoder output, and all parameters are held fixed across the $K$ passes. Consequently the only stochasticity \emph{shared} between any two tasks is the encoder perturbation --- the property Proposition~\ref{prop:attribution} relies on.

\paragraph{Protocol settings.}
Two protocols appear in this paper and they differ in their Stage-1 settings. We state both rather than reporting a single default, since the corresponding result tables are not interchangeable.

\begin{table}[ht]
\centering
\caption{\textbf{Stage-1 settings per protocol.} $m$ is the pool factor, $K$ the number of MC-dropout passes, and $(\alpha,\gamma)$ the fixed informativeness weights.}
\label{tab:protocol_settings}
\small
\begin{tabular}{l|cccc|l}
\toprule
\textbf{Protocol} & $m$ & $K$ & $\alpha$ & $\gamma$ & \textbf{Results} \\
\midrule
Original      & 10 & 20 & 1.8 & 0.7 & Table~\ref{tab:main_results}, Table~\ref{tab:ablation_conflict} \\
Unified (multi-seed) & 15 & 15 & 2.5 & 0.5 & Table~\ref{tab:unified_baselines} \\
\bottomrule
\end{tabular}
\end{table}

Two additional values of $K$ serve specific purposes outside Stage-1: $K=10$ for continuous-task predictive variance (Appendix~\ref{app:entropy_regression}), and $K=5$ for the Hekimoglu baseline to match its official budget (Appendix~\ref{app:baseline_adapt}). Sensitivity to $K$ is analyzed in Appendix~\ref{app:sensitivity}.

\paragraph{Fixed Weights vs. Round-Dependent Schedule.}
The main results (Table~\ref{tab:main_results}) employ a round-dependent schedule for $\alpha$ and $\gamma$. Early rounds amplify $\alpha$ ($\times 1.2$) and suppress $\gamma$ ($\times 0.3$) because per-task entropy is more reliable before the shared representation matures; late rounds reverse this ($\alpha \times 0.9$, $\gamma \times 1.2$) as entropy saturates.

However, our recommended default uses fixed weights throughout the acquisition trajectory. This eliminates four scheduling hyperparameters without degrading performance. As shown in Table~\ref{tab:alpha_gamma_full}, the performance surface is flat: varying either weight over a $4\times$ range costs less than $1.5\%$ in relative MTL Composite, with steep drops occurring only at degenerate settings ($\alpha=0$ or $\gamma=0$). Thus, the performance gain originates from the cross-task signal itself, not the adaptive scheduling.


\subsubsection{Training Protocol}
To ensure a fair comparison, the following optimization settings were applied to all methods using the \emph{PCGrad} optimizer:
\begin{itemize}[leftmargin=*]
    \item \textbf{Batch Size:} A training batch size of 32 (for $480 \times 480$) was used to stabilize multi-task gradient estimation.
    \item \textbf{Learning Rate:} Initial LR of $1\!\times\!10^{-3}$ with a Cosine Annealing scheduler (decaying to $0$).
    \item \textbf{Regularization:} Weight decay of $1\!\times\!10^{-5}$ and early stopping with a patience of 15 epochs.
    \item \textbf{Active Learning Loop:} The experiment spans 30 rounds, starting with an initial labeled pool of 1{,}000 samples.
\end{itemize}

\subsection{Entropy Estimation for Continuous Tasks}
\label{app:entropy_regression}
While classification tasks utilize standard Shannon entropy, for regression tasks (e.g., depth estimation) we estimate the \textit{differential entropy}. Assuming the predictive distribution follows a Gaussian $\mathcal{N}(\mu, \sigma^2)$, the entropy is proportional to the log-variance:
\begin{equation}
    H(y_{reg}) \approx \frac{1}{2} \ln(2\pi e \hat{\sigma}^2) \propto \ln \hat{\sigma}^2,
\end{equation}
where $\hat{\sigma}^2$ is the predictive variance estimated via $K=10$ stochastic forward passes (MC-Dropout).

\subsection{Dataset Specifications}
\label{app:datasets}

We evaluate our proposed framework on three standard multi-task benchmarks, selected to cover a diverse range of domains (general, autonomous driving, indoor) and task affinities.

\paragraph{COCO~\citep{lin2014microsoft}.}
A large-scale dataset for general scene understanding. We use the standard split with 118{,}287 training images and 5{,}000 validation images.
\begin{itemize}[nosep]
    \item \textbf{Input:} RGB images (resized to $480 \times 480$).
    \item \textbf{Tasks:}
    \begin{enumerate}
        \item \textbf{Image Classification:}
        For compatibility with multi-task learning, we construct a single-label classification task by assigning each image the most frequent object category (by instance count).
        \item \textbf{Object Detection:} Bounding box regression and classification for 80 categories.
        \item \textbf{Instance Segmentation:} Pixel-wise binary masks for each object instance.
    \end{enumerate}
\end{itemize}

\paragraph{Cityscapes~\citep{cordts2016cityscapes}.}
A benchmark for urban scene understanding with high-quality pixel-level annotations.
The dataset contains 2{,}975 training images and 500 validation images.
\begin{itemize}[nosep]
    \item \textbf{Input:} High-resolution street view images (randomly cropped to $512 \times 512$ during training).
    \item \textbf{Tasks:}
    \begin{enumerate}
        \item \textbf{Semantic Segmentation:} 19 semantic classes.
        \item \textbf{Depth Estimation:} Inverse depth prediction derived from stereo disparity.
        \item \textbf{Instance Segmentation:} Instance masks for foreground ``thing'' classes (e.g., person, car, rider).
    \end{enumerate}
    \item \textbf{Note:}
    Strong geometric correlations between semantic layout and depth cues make this dataset a suitable testbed for evaluating cross-task dependency modeling.
\end{itemize}

\paragraph{NYUv2~\citep{silberman2012indoor}.}
An indoor scene dataset captured using a Microsoft Kinect.
It contains 1{,}449 aligned RGB-D images (1{,}159 training, 290 testing).
\begin{itemize}[nosep]
    \item \textbf{Input:} Indoor RGB images (resized to $480 \times 480$).
    \item \textbf{Tasks:}
    \begin{enumerate}
        \item \textbf{Semantic Segmentation:} 13 indoor object and scene classes.
        \item \textbf{Depth Estimation:} Metric depth prediction in meters.
        \item \textbf{Surface Normal Prediction:} Per-pixel 3D surface normals represented as unit vectors.
    \end{enumerate}
\end{itemize}


\section{Baselines: Selection and Adaptation}
\label{app:part_baselines}

\subsection{Justification for Baseline Selection}
\label{app:baseline_justification}
We select baselines based on direct comparability to the multi-task active learning (MTL-AL) problem formulation. Table~\ref{tab:method_comparison} categorizes the broader literature; below, we detail why specific recent methods are excluded from the main evaluation.

\paragraph{Optimization and Architecture Methods.}
Works such as CAGrad~\citep{liu2021conflict} and Gradient Interference-Aware Graph Coloring~\citep{patapati2025graph} optimize training dynamics, while Task Grouping~\citep{standley2020tasks} modifies model architectures. Because active learning evaluates \textit{data selection efficiency}, we fix the optimizer (PCGrad) and architecture across all methods to isolate the selection strategy. Comparing an acquisition function (ODS) directly against a gradient routing algorithm is a category error.

\paragraph{Data Pruning vs.\ Active Learning.}
InfoBatch~\citep{qin2023infobatch} accelerates training via dynamic data pruning. Pruning operates on a \textit{fully labeled} dataset to reduce compute, whereas active learning operates on an \textit{unlabeled} pool to reduce annotation cost. Their objectives are fundamentally incompatible.

\paragraph{Domain Mismatch.}
Cost-Effective Multitask AL~\citep{arefeen2025cost} targets wearable sensor systems (time-series data). Its reliance on sensor-specific temporal correlations does not transfer to the dense vision tasks (object detection, segmentation) evaluated here.

\paragraph{Single-Task and Benchmarks.}
Recent studies such as \citet{xia2023reliable} and \citet{gashi2025deep} critically evaluate single-task AL. Following their finding that standard heuristics frequently outperform complex custom architectures, we include robust single-task baselines like \textit{Entropy} and \textit{CoreSet}.

\begin{table*}[t]
    \centering
    \caption{\textbf{Comprehensive Comparison of Related Methodologies.} We categorize all relevant methods by their consideration of key factors. Note that while many methods address Multi-Task Learning (MTL) or Data Efficiency, most do not satisfy the specific requirements of Multi-Task Active Learning (MTL-AL) for vision tasks. ``(training)'' denotes methods that handle conflict only during gradient updates, not selection. ``(pruning)'' denotes data pruning on labeled datasets, distinct from active learning.}
    \label{tab:method_comparison}
    \resizebox{1.0\textwidth}{!}{%
    \begin{tabular}{l|ccccc}
    \toprule
    \textbf{Method} & \textbf{Uncertainty} & \textbf{Diversity} & \textbf{Gradient Conflict} & \textbf{Inter-Task MI} & \textbf{MTL-Specific} \\
    \midrule
    \multicolumn{6}{l}{\textit{Single-Task Active Learning (Baselines)}} \\
    Random Sampling & \xmark & \xmark & \xmark & \xmark & \xmark \\
    Entropy Sampling~\citep{shannon1948mathematical} & \cmark & \xmark & \xmark & \xmark & \xmark \\
    CoreSet~\citep{sener2017active} & \xmark & \cmark & \xmark & \xmark & \xmark \\
    BADGE~\citep{ash2019deep} & \cmark & \cmark & \xmark & \xmark & \xmark \\
    Reliable AL (Influence)~\citep{xia2023reliable} & \cmark & \xmark & \xmark & \xmark & \xmark \\
    DAL: Reality Check~\citep{gashi2025deep} & \cmark & \cmark & \xmark & \xmark & \xmark \\
    \midrule
    \multicolumn{6}{l}{\textit{Data Efficiency and Pruning (Not Active Learning)}} \\
    InfoBatch~\citep{qin2023infobatch} & \xmark & \xmark & \cmark & \xmark & \xmark~(pruning) \\
    \midrule
    \multicolumn{6}{l}{\textit{Multi-Task Optimization / Architecture (Training-time)}} \\
    PCGrad~\citep{yu2020gradient} & \xmark & \xmark & \cmark & \xmark & \cmark~(training) \\
    CAGrad~\citep{liu2021conflict} & \xmark & \xmark & \cmark & \xmark & \cmark~(training) \\
    Graph Coloring~\citep{patapati2025graph} & \xmark & \xmark & \cmark & \xmark & \cmark~(training) \\
    Task Grouping~\citep{standley2020tasks} & \xmark & \xmark & \cmark & \xmark & \cmark~(arch) \\
    \midrule
    \multicolumn{6}{l}{\textit{Multi-Task Active Learning (Selection Strategy)}} \\
    Multi-Task AL~\citep{brust2018active} & \cmark & \xmark & \xmark & \xmark & \cmark \\
    Cost-Effective (Sensors)~\citep{arefeen2025cost} & \cmark & \xmark & \xmark & \xmark & \cmark \\
    Task-Consistency~\citep{hekimoglu2024active} & \cmark & \cmark & \xmark & \cmark~(output) & \cmark \\
    \rowcolor{gray!15} \textbf{ODS (Ours)} & \textbf{\cmark} & \textbf{\cmark} & \textbf{\cmark~(Signal)} & \textbf{\cmark~(Param)} & \textbf{\cmark} \\
    \bottomrule
    \end{tabular}%
    }
\end{table*}

\subsection{MT-BADGE: Multi-Task Extension of BADGE}
\label{app:mt_badge}
We extend BADGE~\citep{ash2019deep} to the multi-task setting (\emph{MT-BADGE}) by concatenating $L_2$-normalized per-task gradient embeddings. For a sample $x$ and the current model $\theta$, let $g_t(x) = \nabla_{\theta_{\text{out}}} \mathcal{L}_t(x; \theta)$ denote the gradient of task $t$'s loss with respect to the final-layer parameters. The multi-task gradient embedding is defined as:
\begin{equation}
  g_{\text{MT}}(x) = \big[\, \tilde g_1(x);\; \tilde g_2(x);\; \ldots;\; \tilde g_T(x)\,\big], \qquad \tilde g_t(x) = g_t(x)/\|g_t(x)\|_2.
\end{equation}
Applying KMeans++ seeding on $\{g_{\text{MT}}(x)\}_{x \in \mathcal{U}}$ selects the batch of size $B$. This rigorously generalizes BADGE: it recovers BADGE exactly for $T=1$ and weights tasks equally in the KMeans++ distance metric for $T>1$.

\subsection{Adaptation of Recent MTL-AL Baselines}
\label{app:baseline_adapt}
Prior MTL-AL methods, namely Hekimoglu~et~al.~\citep{hekimoglu2024active} and PartAL~\citep{durasov2022partal}, rely on a PAD-Net architecture with multi-modal distillation (MMD) blocks and assume specific homogeneous task pairings. Because our framework uses a shared-encoder architecture without MMD modules across three heterogeneous tasks (Cls/Det/Seg), we adapt these baselines to be architecture-agnostic.

\paragraph{Hekimoglu~et~al.~\citep{hekimoglu2024active}.}
The original method utilizes: (i) \emph{Task inconsistency} $\hat s_k(x) = \max_{j\neq k} \mathcal{L}_k(\hat y_k, \hat y_k^{j})$ via MMD-refined predictions; (ii) \emph{Diversity} via auto-encoder compression of task-head features, followed by min-distance to the labeled set; and (iii) a \emph{Selection score} $s(x) = \mathrm{div}(x) \cdot \max_k \hat s_k(x)$. We replace the MMD-dependent inconsistency (i) with MC-Dropout ($K=5$) variance for per-task entropy, while preserving the auto-encoder compression (ii) and multiplicative selection (iii) unchanged.

\paragraph{PartAL~\citep{durasov2022partal}.}
The original method requires PAD-Net for two components: (i) \emph{Partial labeling} of specific $(\text{image}, \text{task})$ pairs using MMD GT-injection; and (ii) \emph{Sample selection} using min-max-normalized per-task MC-Dropout uncertainties. Since our full-annotation setting renders partial labeling structurally inapplicable, we retain the sample selection component by aggregating per-task normalized uncertainties into an image-level score $\sum_t \tilde u_t(x)$ and selecting the top-$B$. This represents a \emph{normalized task-aware uncertainty sampling} baseline, reported as ``PartAL (adapted).''


\subsection{Adaptation of Multi-Stage Selection Baselines}
\label{app:multistage_adapt}

Table~\ref{tab:unified_baselines} compares five methods originally designed for single-task settings, three of which target one-shot subset selection rather than iterative acquisition. To ensure a rigorous comparison, all five baselines share the same backbone, training schedule, seeds, and budget as ODS, and operate on the current round's shared-encoder features. We detail their multi-task, iterative adaptations below.

\paragraph{FASS~\citep{wei2015submodularity}.}
\emph{Filter}: The top-$\eta$ most uncertain samples, computed via the mean of rank-normalised per-task entropies (replacing single-task least-confidence). \emph{Select}: Lazy-greedy facility location over cosine similarities of shared-encoder features. We use the label-agnostic $f_{\text{fac}}$ variant, as the class-conditional $f_{\text{NN}}$ variant requires pseudo-labels incompatible with detection or segmentation. The pool multiplier is $\eta = 10\times$ the budget.

\paragraph{Cluster-Margin~\citep{citovsky2021batch}.}
\emph{Filter}: The top $10\times$ budget samples with the smallest multi-task margin (rank-normalised per-task best-vs-second-best margin, averaged over tasks). \emph{Select}: Average-linkage agglomerative clustering of encoder features, sorted ascending by size, with round-robin sampling of one member per cluster. While the original freezes a single clustering from a seed model, we continuously re-cluster using the current round's features to account for the shrinking pool, granting the baseline strictly fresher embeddings.

\paragraph{PPAL~\citep{yang2024plug}.}
\emph{Stage~1 (difficulty-calibrated uncertainty)}: Per-task, per-class difficulty is computed on the labeled set following their semi-supervised variant. Class difficulty dictates a coefficient $w = 1 + \alpha\beta\log(1+\gamma d)$ with $\gamma = e^{1/\alpha}-1$ (defaults $\alpha=0.3$, $\beta=0.2$). Image uncertainty per task is the difficulty-weighted single-pass entropy. To adapt this single-task metric, we rank-normalise per task and average. The candidate pool is $\delta\times$ the budget ($\delta = 4$). \emph{Stage~2}: Category-conditioned matching similarity over detection box class-probability vectors.

\paragraph{$\mathbb{D}^2$-Pruning~\citep{maharana2024mathbb}.}
Originally a one-shot pruning method requiring labels and full training dynamics, we adapt it for AL using its supported entropy score. The adaptation requires: (i) a $k$-NN graph over current encoder features, rebuilt each round ($k=15$); (ii) node scores equal to the mean rank-normalised per-task entropy; (iii) one forward message-passing step, $x_i \mathrel{+}= \sum_j e^{-\gamma_f d_{ij}^2} x_j$; and (iv) reverse message-passing selection that greedily takes the argmax and suppresses neighbours via $x_j \mathrel{-}= e^{-\gamma_r d_{kj}^2} x_k$. Distances use $L_2$-normalised features and default $\gamma$ scales.

\paragraph{SES~\citep{xie2025structural}.}
Originally one-shot subset selection via a pretrained-feature $k$-NN graph and structural entropy. Per AL round, we build a $k$-NN graph over current encoder features ($k=15$) and construct the two-level encoding tree using $k$-means communities, matching their label-free protocol. Structural entropy is $S_e(u) = \operatorname{vol}(V)^{-1}\sum_{(u,v)\in E} w_{uv}\log \operatorname{vol}(\mathrm{LCA}(u,v))$, marking community boundaries. The difficulty term $S_t$ uses the mean rank-normalised per-task entropy of the current model, substituting their label-dependent score. Selection follows their blue-noise sampling.


\section{Empirical Validation of Theoretical Claims}
\label{app:part_validation}

\subsection{Empirical Validation of Assumption~\ref{assum:orthogonality}}
\label{app:ie_correlation}

We compute per-sample Informativeness $\mathcal{I}(x) = \alpha\,\mathrm{Hardness}(x) + \gamma\,\mathrm{CTMI}(x)$ and Exploration $\mathcal{E}(x)$ (min-distance-to-labeled-set in shared-encoder feature space) on 1{,}500 unlabeled COCO images using saved checkpoints from a 50-round ODS run (seed~42). Table~\ref{tab:ie_corr_full} summarizes the results.

\begin{table}[ht]
\centering
\caption{\textbf{Per-round correlation of $\mathcal{I}$ and $\mathcal{E}$ on COCO (1{,}500 unlabeled images).} The persistent negative sign in early/mid rounds is exactly why linear scalarization fails: the two scores trade off, so a weighted sum slides into the diversity-dilution regime. The two-stage decomposition avoids this by structurally restricting Stage~2 to high-$\mathcal{I}$ candidates.}
\label{tab:ie_corr_full}
\small
\setlength{\tabcolsep}{4pt}
\resizebox{\columnwidth}{!}{%
\begin{tabular}{l|ccccccccccc}
\toprule
\textbf{Round} & 1 & 5 & 10 & 15 & 20 & 25 & 30 & 35 & 40 & 45 & 49 \\
\midrule
\textbf{Pearson $r$}     & $-0.179$ & $-0.106$ & $-0.180$ & $-0.121$ & $-0.087$ & $-0.079$ & $-0.124$ & $-0.088$ & $-0.020$ & $-0.034$ & $-0.089$ \\
\textbf{Spearman $\rho$} & $-0.148$ & $-0.069$ & $-0.174$ & $-0.111$ & $-0.072$ & $-0.060$ & $-0.089$ & $-0.070$ & $+0.001$ & $-0.018$ & $-0.065$ \\
\bottomrule
\end{tabular}%
}
\end{table}

\subsection{Empirical Normality of Per-Task Entropies}
\label{app:gauss_normality}
Because CTMI is a second-moment statistic, it is important to verify how much dependence structure it successfully captures. We provide two lines of evidence: the empirical shape of the per-task entropy distributions (Table~\ref{tab:gauss_normality}), and a direct comparison against a non-linear dependence measure (Appendix~\ref{app:dcorr}).

\begin{table}[ht]
\centering
\caption{\textbf{Shapiro-Wilk $W$ statistic, skewness, and excess kurtosis} of $z$-score normalized per-task entropies on COCO. $W$ closer to 1.0 indicates a closer fit to a Gaussian distribution. Segmentation entropies converge to nearly perfect Gaussianity; classification develops a slight right tail; detection exhibits heavier tails but maintains $W>0.90$ throughout.}
\label{tab:gauss_normality}
\small
\setlength{\tabcolsep}{6pt}
\begin{tabular}{cl|ccc}
\toprule
Round & Task & $W$ & Skew & Kurtosis \\
\midrule
1  & Cls & 0.979 & $-0.02$ & $-0.88$ \\
1  & Det & 0.898 & $-1.72$ & $+6.56$ \\
1  & Seg & 0.963 & $-0.68$ & $\phantom{+}0.00$ \\
10 & Cls & 0.979 & $+0.86$ & $-0.10$ \\
10 & Det & 0.908 & $-1.65$ & $+6.74$ \\
10 & Seg & 0.990 & $-0.20$ & $-0.53$ \\
20 & Cls & 0.920 & $+1.17$ & $+0.51$ \\
20 & Det & 0.931 & $-1.32$ & $+4.30$ \\
20 & Seg & 0.993 & $-0.04$ & $-0.54$ \\
29 & Cls & 0.916 & $+1.28$ & $+1.15$ \\
29 & Det & 0.932 & $-1.33$ & $+4.83$ \\
29 & Seg & \textbf{0.994} & $-0.08$ & $-0.51$ \\
\bottomrule
\end{tabular}
\end{table}

As Table~\ref{tab:gauss_normality} shows, segmentation and classification entropies remain close to Gaussian ($W>0.91$ throughout, with segmentation reaching $W=0.994$), while detection has heavier tails ($\text{kurtosis}\approx+5$). While our method makes no distributional assumptions, this confirms the marginals are unimodal and largely close to Gaussian—the exact regime where a second-moment statistic is highly informative. Appendix~\ref{app:dcorr} validates this directly.

\subsection{Pearson vs.\ Spearman vs.\ Distance Correlation of Task Entropies}
\label{app:dcorr}

A standard correlation coefficient captures only linear (or monotone) dependence, remaining invariant to relationships like $Y = X^2$. To measure the extent of any omitted structure, Table~\ref{tab:dcorr} compares Pearson, Spearman, and distance correlation~\citep{szekely2007measuring} on COCO ($N=2{,}000$). Crucially, distance correlation equals zero if and only if the variables are strictly independent, thereby capturing all forms of dependence.

\begin{table}[ht]
\centering
\caption{\textbf{Pearson vs.\ Spearman vs.\ distance correlation of per-task entropies (COCO).} $|\,\text{Pearson}|$ and $\text{dCorr}$ agree within $\le\!0.03$ at every round, demonstrating that task-uncertainty dependencies in MTL are predominantly linear. Consequently, CTMI loses negligible signal by relying on covariance.}
\label{tab:dcorr}
\small
\setlength{\tabcolsep}{4pt}
\begin{tabular}{cl|cccc}
\toprule
Rd & Pair & Pearson & Spearman & dCorr & $\big||r|{-}\text{dCorr}\big|$ \\
\midrule
1  & Cls-Det & $+0.515$ & $+0.517$ & 0.488 & 0.027 \\
1  & Cls-Seg & $+0.264$ & $+0.264$ & 0.245 & 0.019 \\
1  & Det-Seg & $+0.448$ & $+0.447$ & 0.422 & 0.026 \\
10 & Cls-Det & $+0.338$ & $+0.367$ & 0.330 & 0.008 \\
10 & Det-Seg & $+0.205$ & $+0.200$ & 0.188 & 0.017 \\
29 & Cls-Det & $+0.412$ & $+0.483$ & 0.406 & 0.006 \\
29 & Cls-Seg & $+0.249$ & $+0.290$ & 0.245 & 0.004 \\
29 & Det-Seg & $+0.170$ & $+0.177$ & 0.166 & 0.004 \\
49 & Cls-Det & $+0.424$ & $+0.490$ & 0.411 & 0.013 \\
49 & Cls-Seg & $+0.209$ & $+0.259$ & 0.214 & 0.005 \\
49 & Det-Seg & $+0.190$ & $+0.216$ & 0.200 & 0.010 \\
\bottomrule
\end{tabular}
\end{table}

\section{Extended Experimental Results}
\label{app:part_results}

\subsection{Detailed Cross-Domain Performance}
\label{app:robustness_detail}
Table~\ref{tab:robustness_detail} provides the detailed per-task metrics corresponding to the cross-domain summary in the main text. ODS ranks first on every individual task across COCO, Cityscapes, and NYUv2.

\begin{table}[ht]
    \centering
    \caption{\textbf{Detailed Cross-Domain Performance (Round 30).} ($\uparrow$: higher is better, $\downarrow$: lower is better.)}
    \label{tab:robustness_detail}
    \small
    \begin{tabular}{l|l|ccc|c}
    \toprule
    \textbf{Dataset} & \textbf{Method} & \textbf{Task 1} & \textbf{Task 2} & \textbf{Task 3} & \textbf{Best MTL Score} \\
    \midrule
    \multicolumn{6}{c}{\cellcolor{gray!10}\textbf{COCO} (Tasks: Cls / Det / Seg)} \\
    & & Cls (Acc) $\uparrow$ & Det (mAP) $\uparrow$ & Seg (mIoU) $\uparrow$ & Comp.\ ($\Delta\%$) \\
    \multirow{4}{*}{COCO}
     & Random & 38.4 & 25.2 & 15.1 & 0.0\% \\
     & BatchBALD & 39.4 & 25.4 & 16.4 & +4.0\% \\
     & MT-BADGE & 41.2 & 25.8 & 16.8 & +7.0\% \\
     & \textbf{ODS} & \textbf{42.1} & \textbf{26.5} & \textbf{17.4} & \textbf{+10.0\%} \\
    \midrule
    \multicolumn{6}{c}{\cellcolor{gray!10}\textbf{Cityscapes} (Tasks: Seg / Depth / Instance)} \\
    & & Seg (mIoU) $\uparrow$ & Depth (RMSE) $\downarrow$ & Ins (mAP) $\uparrow$ & Comp.\ ($\Delta\%$) \\
    \multirow{4}{*}{Cityscapes}
     & Random & 65.0 & 0.4150 & 28.0 & 0.0\% \\
     & BatchBALD & 68.2 & 0.4061 & 29.8 & +4.5\% \\
     & MT-BADGE & 69.1 & 0.4041 & 30.7 & +6.2\% \\
     & \textbf{ODS} & \textbf{70.5} & \textbf{0.3916} & \textbf{31.4} & \textbf{+8.8\%} \\
    \midrule
    \multicolumn{6}{c}{\cellcolor{gray!10}\textbf{NYUv2} (Tasks: Seg / Depth / Normal)} \\
    & & Seg (mIoU) $\uparrow$ & Depth (RMSE) $\downarrow$ & Norm (mErr) $\downarrow$ & Comp.\ ($\Delta\%$) \\
    \multirow{4}{*}{NYUv2}
     & Random & 31.0 & 0.750 & 27.0 & 0.0\% \\
     & BatchBALD & 32.5 & 0.726 & 26.1 & +3.8\% \\
     & MT-BADGE & 32.7 & 0.718 & 25.5 & +5.1\% \\
     & \textbf{ODS} & \textbf{33.5} & \textbf{0.700} & \textbf{24.9} & \textbf{+7.5\%} \\
    \bottomrule
    \end{tabular}
\end{table}

\subsection{Pretrained ResNet-50 Backbone (Multi-Seed)}
\label{app:pretrained_ms}
To isolate the acquisition strategy from large-scale pretraining effects, our main results rely on lightweight scratch encoders. Table~\ref{tab:pretrained_ms} evaluates an ImageNet-pretrained ResNet-50 backbone under the identical COCO AL protocol (30 rounds, $B=32$, initial $=100$, pool $=5{,}000$).

\begin{table}[ht]
\centering
\caption{\textbf{COCO with ImageNet-pretrained ResNet-50 (3 seeds).} ODS retains rank~1 across individual tasks; gaps narrow as expected when using a strong pretrained backbone.}
\label{tab:pretrained_ms}
\small
\setlength{\tabcolsep}{4pt}
\begin{tabular}{l|cccc}
\toprule
\textbf{Method} & \textbf{Cls Acc}~$\uparrow$ & \textbf{Det Acc}~$\uparrow$ & \textbf{Seg Dice}~$\uparrow$ & \textbf{Seg mIoU}~$\uparrow$ \\
\midrule
Random       & 0.505\,{\scriptsize$\pm$.011} & 0.372\,{\scriptsize$\pm$.011} & 0.316\,{\scriptsize$\pm$.021} & 0.274\,{\scriptsize$\pm$.029} \\
CoreSet      & 0.508\,{\scriptsize$\pm$.006} & 0.376\,{\scriptsize$\pm$.012} & 0.315\,{\scriptsize$\pm$.019} & 0.272\,{\scriptsize$\pm$.027} \\
Uncertainty  & 0.511\,{\scriptsize$\pm$.006} & 0.373\,{\scriptsize$\pm$.013} & 0.316\,{\scriptsize$\pm$.028} & 0.276\,{\scriptsize$\pm$.037} \\
Diversity    & 0.504\,{\scriptsize$\pm$.010} & 0.372\,{\scriptsize$\pm$.003} & 0.313\,{\scriptsize$\pm$.014} & 0.273\,{\scriptsize$\pm$.023} \\
\rowcolor{gray!15}\textbf{ODS (Ours)} & \textbf{0.521}\,{\scriptsize$\pm$.010} & \textbf{0.377}\,{\scriptsize$\pm$.004} & \textbf{0.326}\,{\scriptsize$\pm$.022} & \textbf{0.289}\,{\scriptsize$\pm$.029} \\
\bottomrule
\end{tabular}
\end{table}

\noindent\textbf{Discussion.} Strong pretrained backbones naturally narrow performance gaps in active learning, as the initial representation already captures significant data variability~\citep{smith2024making}. Nevertheless, ODS retains rank~1 across all individual tasks over three seeds, demonstrating that conflict-aware acquisition extracts structural value even from a well-initialized encoder.

\subsection{Runtime Comparison}
\label{app:runtime}
Table~\ref{tab:runtime} reports the per-round acquisition runtime on COCO (rounds 1--29, scratch training, single V100). While MC-Dropout (ODS) and KMeans++ (BADGE) dominate the acquisition cost, ODS's computational overhead is strictly offset by its superior label efficiency.

\begin{table}[ht]
\centering
\caption{\textbf{Per-round runtime on COCO (seconds).} ``Acquisition'' isolates the cost of the AL selection step (excluding training).}
\label{tab:runtime}
\small
\begin{tabular}{lcc}
\toprule
\textbf{Method} & \textbf{Mean} & \textbf{Median} \\
\midrule
Random & 225 & 204 \\
Uncertainty & 226 & 197 \\
CoreSet & 205 & 165 \\
BADGE & 655 & 636 \\
\rowcolor{gray!15}\textbf{ODS} & \textbf{995} & \textbf{1114} \\
\bottomrule
\end{tabular}
\end{table}


\section{Ablation and Sensitivity Studies}
\label{app:part_ablation}

\subsection{Decoupling Selection from Optimization (PCGrad Ablation)}
\label{app:pcgrad_ablation}

Table~\ref{tab:pcgrad_ablation} isolates the contribution of the acquisition strategy from that of the gradient surgery optimizer. Even under standard AdamW, ODS achieves $+8.2\%$, surpassing all baselines; PCGrad provides an additional $+2.3\%$ improvement, confirming that ODS and PCGrad address orthogonal bottlenecks.

\begin{table}[ht]
    \centering
    \caption{\textbf{PCGrad ablation on COCO.} ODS significantly outperforms all baselines regardless of the underlying optimizer, while PCGrad provides consistent supplementary gains.}
    \label{tab:pcgrad_ablation}
    \small
    \begin{tabular}{l|cc}
    \toprule
    \textbf{Strategy} & \textbf{Optimizer} & \textbf{Best Score ($\Delta m\%$)} \\
    \midrule
    \multicolumn{3}{l}{\textit{Scenario A: Standard Optimization}} \\
    Random & AdamW & --- \\
    BatchBALD & AdamW & 0.0311 (+3.1\%) \\
    MT-BADGE & AdamW & 0.0516 (+5.2\%) \\
    \rowcolor{gray!15}\textbf{ODS (Ours)} & AdamW & \textbf{0.0816} (\textbf{+8.2\%}) \\
    \midrule
    \multicolumn{3}{l}{\textit{Scenario B: Gradient Surgery}} \\
    Random & PCGrad & --- \\
    BatchBALD & PCGrad & 0.0500 (+5.0\%) \\
    MT-BADGE & PCGrad & 0.0750 (+7.5\%) \\
    \rowcolor{gray!15}\textbf{ODS (Ours)} & PCGrad & \textbf{0.1050} (\textbf{+10.5\%}) \\
    \bottomrule
    \end{tabular}
\end{table}

\subsection{Component-wise Ablation: Disentangling Score and Combination}
\label{app:component_ablation}

Table~\ref{tab:component_ablation} ablates the three core components of ODS: (a)~Hardness, (b)~CTMI in Stage~1, and (c)~K-Center coverage in Stage~2. The full ODS dominates every leave-one-out variant. Removing K-Center Stage~2 (``no KC'') causes the most severe performance drop, confirming that structural decoupling prevents diversity dilution and drives the primary acquisition gain.

\begin{table}[ht]
\centering
\caption{\textbf{Component-wise ablation.} The full ODS dominates each leave-one-out variant. Removing K-Center Stage~2 (``no KC'') causes the largest single-component drop, confirming that the structural decoupling is the dominant source of gain.}
\label{tab:component_ablation}
\small
\setlength{\tabcolsep}{5pt}
\begin{tabular}{l|ccc|c}
\toprule
\textbf{Configuration} & \textbf{Hard.} & \textbf{CTMI} & \textbf{K-C} & \textbf{MTL Comp.} \\
\midrule
H only (no KC)    & \cmark & \xmark & \xmark & 0.087 \\
CTMI only (no KC) & \xmark & \cmark & \xmark & 0.064 \\
H+CTMI (no KC)    & \cmark & \cmark & \xmark & 0.093 \\
H only + KC       & \cmark & \xmark & \cmark & 0.094 \\
CTMI only + KC    & \xmark & \cmark & \cmark & 0.082 \\
\rowcolor{gray!15}\textbf{Full ODS} & \cmark & \cmark & \cmark & \textbf{0.105} \\
\bottomrule
\end{tabular}
\end{table}

\subsection{Is the Combination Rule Doing the Work?}
\label{app:combination_rule}

Equation~\eqref{eq:informativeness} combines Hardness and CTMI using a weighted sum after min--max normalisation. To verify whether the reported gains stem from this specific heuristic, Table~\ref{tab:combination_rule} compares four alternative combination rules while holding the component scores, Stage~2, and the training protocol constant over three seeds: the default weighted sum, a rank-sum, a $z$-score sum, and a direct product ($\operatorname{Hardness}\cdot\operatorname{CTMI}$).

\begin{table}[ht]
\centering
\caption{\textbf{Combination-rule ablation (COCO, 3 seeds: $\{42,123,456\}$).} Final MTL Composite (mean$\pm$std). All four rules perform within $0.008$ of each other—well within seed-to-seed variation—indicating no statistically distinguishable difference.}
\label{tab:combination_rule}
\small
\setlength{\tabcolsep}{6pt}
\begin{tabular}{ll|c}
\toprule
\textbf{Rule} & \textbf{Form} & \textbf{Final MTL Comp.} \\
\midrule
\rowcolor{gray!15}Weighted sum (default) & $\alpha H + \gamma\,C$ & $0.723 \pm 0.016$ \\
Rank-sum & $\operatorname{rank}(H) + \operatorname{rank}(C)$ & $0.731 \pm 0.001$ \\
$z$-score sum & $z(H) + z(C)$ & $0.731 \pm 0.001$ \\
Product & $H \cdot C$ & $0.726 \pm 0.008$ \\
\bottomrule
\end{tabular}
\end{table}

The maximum spread across the four rules ($0.723$--$0.731$) is strictly smaller than the seed variation of the default rule alone ($\pm 0.016$). Coupled with the component-wise ablation (Appendix~\ref{app:component_ablation})—where removing CTMI entirely drops performance from $0.105$ to $0.094$ under an identical Stage~2—this confirms that the performance gain originates from the cross-task signal itself, not the specific arithmetic used to merge it. We therefore present the weighted sum merely as a convenient default, not a methodological necessity.

\subsection{Linear Scalarization vs.\ Orthogonal Decomposition}
\label{app:linear_vs_ods}

Table~\ref{tab:linear_vs_ods} compares the ODS two-stage decomposition against a standard linear scalarization using identical per-sample scores (Hardness, CTMI, K-Center distance). ODS outperforms linear scalarization by $+6.5\%$ on the MTL Composite. The linear scheme suffers from \emph{diversity dilution}: highly diverse but uninformative outliers dominate the selection, degrading dense-prediction tasks severely.

\begin{table}[ht]
\centering
\caption{\textbf{Linear scalarization vs.\ ODS, identical scores.}}
\label{tab:linear_vs_ods}
\small
\begin{tabular}{lcccc}
\toprule
\textbf{Combination strategy} & \textbf{Cls Acc} & \textbf{Det Acc} & \textbf{Seg Dice} & \textbf{MTL Comp.} \\
\midrule
Linear: $(1\!-\!\delta)(\alpha H{+}\gamma\mathrm{CTMI})+\delta\,\mathrm{Div}$ & 38.8 & 24.00 & 11.24 & 0.04 \\
\rowcolor{gray!15}\textbf{ODS} (two-stage) & \textbf{42.14} & \textbf{26.52} & \textbf{17.44} & \textbf{0.105} \\
\midrule
$\Delta$ & $+3.34$ & $+2.52$ & $\mathbf{+6.2}$ & $+0.065$ \\
\bottomrule
\end{tabular}
\end{table}

\subsection{Hyperparameter Sensitivity (Pool Factor \texorpdfstring{$m$}{m} and MC Samples \texorpdfstring{$K$}{K})}
\label{app:sensitivity}

\textbf{Pool Factor ($m$):}
Table~\ref{tab:pool_sensitivity} demonstrates that ODS is robust across pool factors. Moderate values ($m=5$) achieve the optimal trade-off between filtering uninformative outliers and preserving sufficient manifold diversity, validating the Orthogonal Decomposition design.

\textbf{MC-Dropout Samples ($K$):}
Table~\ref{tab:mc_sensitivity} shows that $K=10$ provides a stable cross-task covariance estimate. Larger $K$ values yield only marginal gains while significantly increasing computational overhead, confirming CTMI's computational efficiency.

\begin{table}[ht]
\small
\centering
\begin{minipage}{0.45\textwidth}
    \centering
    \captionof{table}{\textbf{Sensitivity to Pool Factor $m$.}}
    \label{tab:pool_sensitivity}
    \setlength{\tabcolsep}{5pt}
    \begin{tabular}{lcccc}
    \toprule
    $m$ & 2 & 3 & 5 & 10 \\
    \midrule
    Score & 0.1032 & 0.1045 & \textbf{0.1051} & 0.1048 \\
    \bottomrule
    \end{tabular}
\end{minipage}
\hfill
\begin{minipage}{0.5\textwidth}
    \centering
    \captionof{table}{\textbf{Sensitivity to MC Samples $K$.}}
    \label{tab:mc_sensitivity}
    \begin{tabular}{lcccc}
    \toprule
    $K$ & 5 & 10 & 20 & 50 \\
    \midrule
    Score & 0.1041 & \textbf{0.1049} & 0.1044 & 0.1046 \\
    Time & 45 & 82 & 158 & 390 \\
    \bottomrule
    \end{tabular}
\end{minipage}
\end{table}

\subsection{Sensitivity to \texorpdfstring{$(\alpha,\gamma)$}{(alpha,gamma)} in the Stage-1 Score}
\label{app:alpha_gamma}

The Stage~1 informativeness score is $\mathcal{I}(x) = \alpha\,\mathrm{Hardness}(x) + \gamma\,\mathrm{CTMI}(x)$. As shown in Table~\ref{tab:alpha_gamma_full}, ODS is robust to a wide range of configurations. The performance surface is essentially flat near $(\alpha,\gamma) = (1,1)$, with no setting within a $4\times$ multiplier range losing more than $1.5\%$ relative MTL Composite. Severe performance drops occur only under degenerate configurations where either component is entirely removed ($\alpha=0$ or $\gamma=0$).

\begin{table}[ht]
\centering
\caption{\textbf{Sensitivity of ODS to $(\alpha,\gamma)$ on COCO.} Equal weighting ($\alpha=\gamma=1$) is near-optimal and robust to a $4\times$ swing in either weight.}
\label{tab:alpha_gamma_full}
\small
\setlength{\tabcolsep}{4pt}
\begin{tabular}{l|ccccccc}
\toprule
$(\alpha,\gamma)$ & $(0,1)$ & $(0.5,1)$ & $(1,0)$ & $(1,0.5)$ & $(1,1)$ & $(1,2)$ & $(2,1)$ \\
\midrule
MTL Comp. & 0.082 & 0.099 & 0.094 & 0.102 & \textbf{0.105} & 0.103 & 0.104 \\
\bottomrule
\end{tabular}
\end{table}


\section{Qualitative Analysis}
\label{app:part_qualitative}

\subsection{What the Conflict Ablation Shows}
\label{app:conflict_scope}

The controlled ablation in Table~\ref{tab:ablation_conflict} isolates the effect of the acquisition score: holding the optimizer (AdamW) and scoring pipeline fixed, penalizing gradient conflict lowers the MTL Composite from $0.076$ (ignoring conflict) to $0.059$ (penalizing it). This empirically grounds the \textit{Conflict Paradox}: penalizing conflict actively degrades data selection.

Crucially, this defines what the acquisition score must \emph{omit}, not what it must seek. ODS ranks candidates exclusively by Hardness and CTMI (Eq.~\eqref{eq:informativeness}); gradient conflict is absent from the score. This does not imply that ODS batches are inherently enriched with high-conflict samples relative to random selection. To understand the selection behavior, readers should refer to the component and combination-rule ablations (Appendices~\ref{app:component_ablation} and~\ref{app:combination_rule}), which isolate the contribution of each term without inferring a conflict-driven mechanism.

\subsection{Qualitative Analysis: Understanding Diversity Dilution}
\label{app:qualitative}

To visualize how ODS prevents \textit{Diversity Dilution}, we inspect the selected samples across active learning rounds along two axes: 
(1) \textbf{distributional coverage} via t-SNE embeddings (Figure~\ref{fig:tsne_analysis}), and 
(2) \textbf{visual and semantic content} (Figures~\ref{fig:qualitative_round1_2}--\ref{fig:qualitative_overlay_34}). 
In all qualitative figures, \textcolor{red}{\textbf{red boxes}} highlight redundant selections and characteristic failure modes of the baselines.

\subsubsection{Distributional Analysis: The Outlier Trap}

Figure~\ref{fig:tsne_analysis} tracks the t-SNE embeddings of selected samples against the unlabeled pool across Rounds 1--8. A stark behavioral divergence emerges:

\begin{itemize}
    \item \textbf{Redundant Collapse by Baselines.} Diversity-driven baselines (\textit{CoreSet}, \textit{BADGE}, \textit{BatchBALD}) repeatedly cluster in sparse, isolated embedding regions (e.g., $y < -40$ in Rounds 2, 3, and 5; red dashed circles). This illustrates the \textbf{inefficient over-sampling of visual outliers} that are distinctive but uninformative.
    
    \item \textbf{Controlled Exploration by ODS.} ODS also probes peripheral regions for coverage, but does so sparsely. By allocating the vast majority of its budget to the \textbf{high-density informative manifold}, ODS strictly balances outlier exploration with the exploitation of semantically rich data.
\end{itemize}

\subsubsection{Visual Redundancy vs.\ Semantic Diversity}
Figures~\ref{fig:qualitative_round1_2} and~\ref{fig:qualitative_round3_4} map these distributional failures to image space. \textbf{ODS} consistently acquires visually diverse, semantically dense scenes—from cluttered indoor environments to complex street views. Conversely, baselines exhibit severe redundancy, selecting visually similar samples despite operating on different acquisition principles. Geometry-based (\textit{CoreSet}) and information-theoretic (\textit{BatchBALD}) methods often converge on identical outliers, a redundancy mirrored in the t-SNE clusters (Figure~\ref{fig:tsne_analysis}, red dashed circles).

\paragraph{Failure Mode I: Semantic Sparsity (``Toilet Trap'').}
In Round 2 (Figure~\ref{fig:qualitative_round1_2}), several baselines fixate on semantically sparse environments. Highlighted by \textbf{red boxes}, \textit{CoreSet}, \textit{BADGE}, and \textit{BatchBALD} repeatedly acquire images of empty bathrooms, while \textit{Entropy} favors featureless kitchen counters. These samples are geometrically rare but contain minimal object interactions. This is \textit{Diversity Dilution} in action: prioritizing distance metrics collapses the selection onto samples with low semantic content.

\paragraph{Failure Mode II: Texture and Structural Bias.}
Baselines also over-sample high-frequency textures and repetitive structures. In Round 3 (Figure~\ref{fig:qualitative_round3_4}), \textit{CoreSet} disproportionately selects donuts, bananas, and pastries, while other baselines redundantly acquire images of fences or grids. While visually salient, these scenes offer negligible multi-task supervision compared to scenes with complex object interactions.

\subsubsection{Annotation Density and Multi-Task Supervision}

Figures~\ref{fig:qualitative_overlay_12} and~\ref{fig:qualitative_overlay_34} overlay ground-truth annotations, revealing stark density differences. ODS consistently targets scenes with dense bounding boxes and intricate segmentation masks (e.g., cluttered desks, crowded streets). These multi-instance scenes yield rich, high-frequency gradient signals that simultaneously improve detection and segmentation. In contrast, baselines frequently select sparse or empty images (e.g., ocean waves). These samples are ''hard`` only due to uninformative background uncertainty, yielding minimal information gain per labeled pixel.

\subsubsection{Why ODS Avoids Diversity Dilution}
Across both feature-space and image-space analyses, ODS systematically bypasses the failure modes that trap standard baselines. By filtering out uninformative outliers before maximizing coverage, ODS consistently acquires semantically complex, non-redundant data—discovering the high-value samples that existing MTL-AL strategies overlook.

\begin{figure}[ht!]
    \centering
    \includegraphics[width=\linewidth]{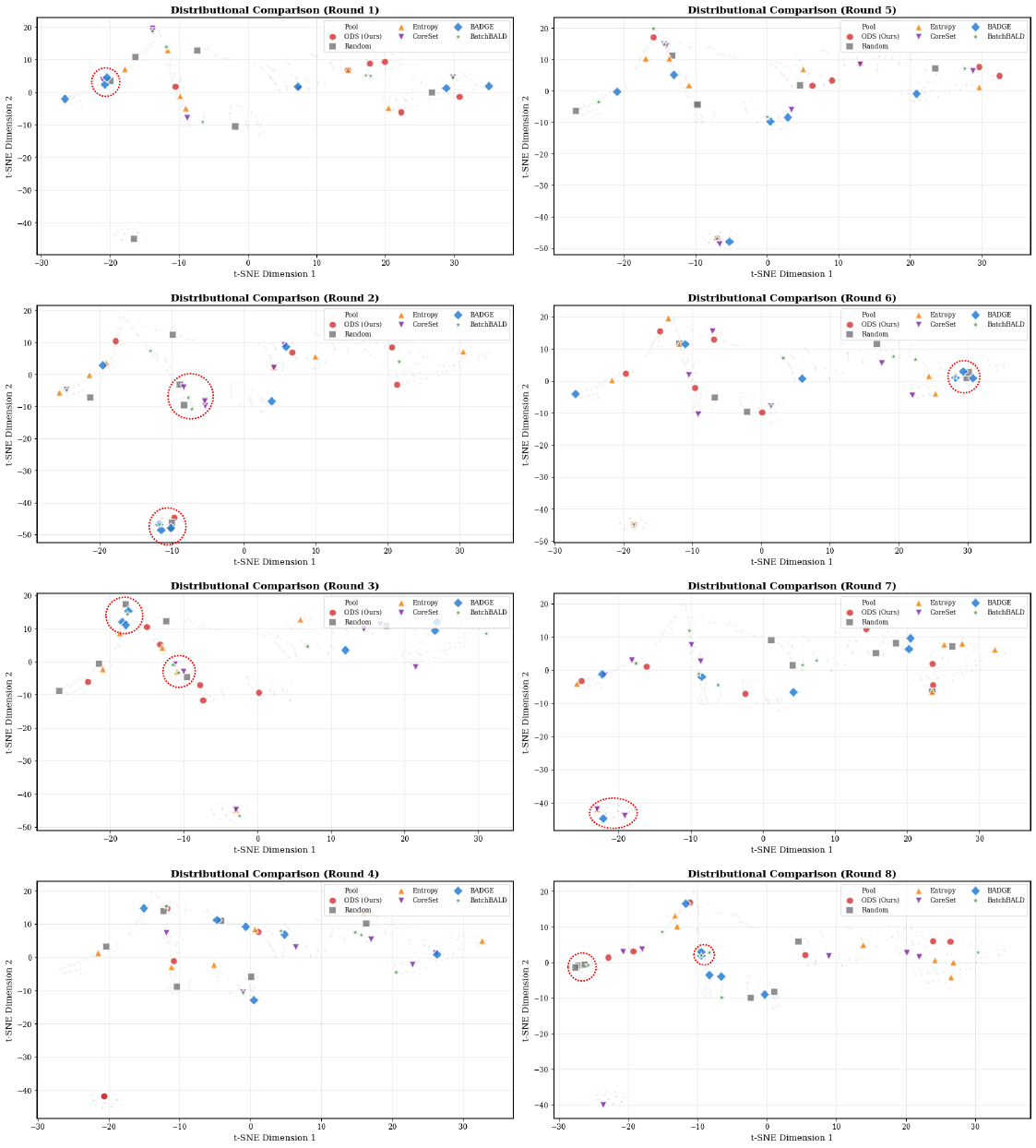}
    \caption{\textbf{t-SNE Distributional Analysis across Active Learning Rounds (1--8).}
    We visualize the embedding space of selected samples from different acquisition strategies, overlaid on the unlabeled pool (gray).
    \textbf{Red dashed circles highlight regions of redundant selection by baselines.}
    \textbf{(Baseline Inefficiency):} Diversity-driven methods such as \textit{CoreSet} and \textit{BADGE} repeatedly select samples concentrated within narrow, low-density regions, leading to \textbf{redundant over-sampling of visual outliers} (e.g., isolated bottom clusters in R2, R3, R5, and R7).
    \textbf{(Controlled Exploration by ODS):} Although \textbf{ODS} ($\color{red}\bullet$) also explores these peripheral regions to ensure coverage, it does so \textbf{sparsely}, avoiding the clustering behavior observed in baselines.
    \textbf{(Balanced Allocation):} As a result, ODS allocates most of its acquisition budget to the \textbf{high-density informative manifold}, achieving a more effective balance between exploration of outliers and exploitation of the main data distribution.}   
    \label{fig:tsne_analysis}
\end{figure}

\begin{figure*}[ht!]
    \centering
    \includegraphics[width=0.85\linewidth]{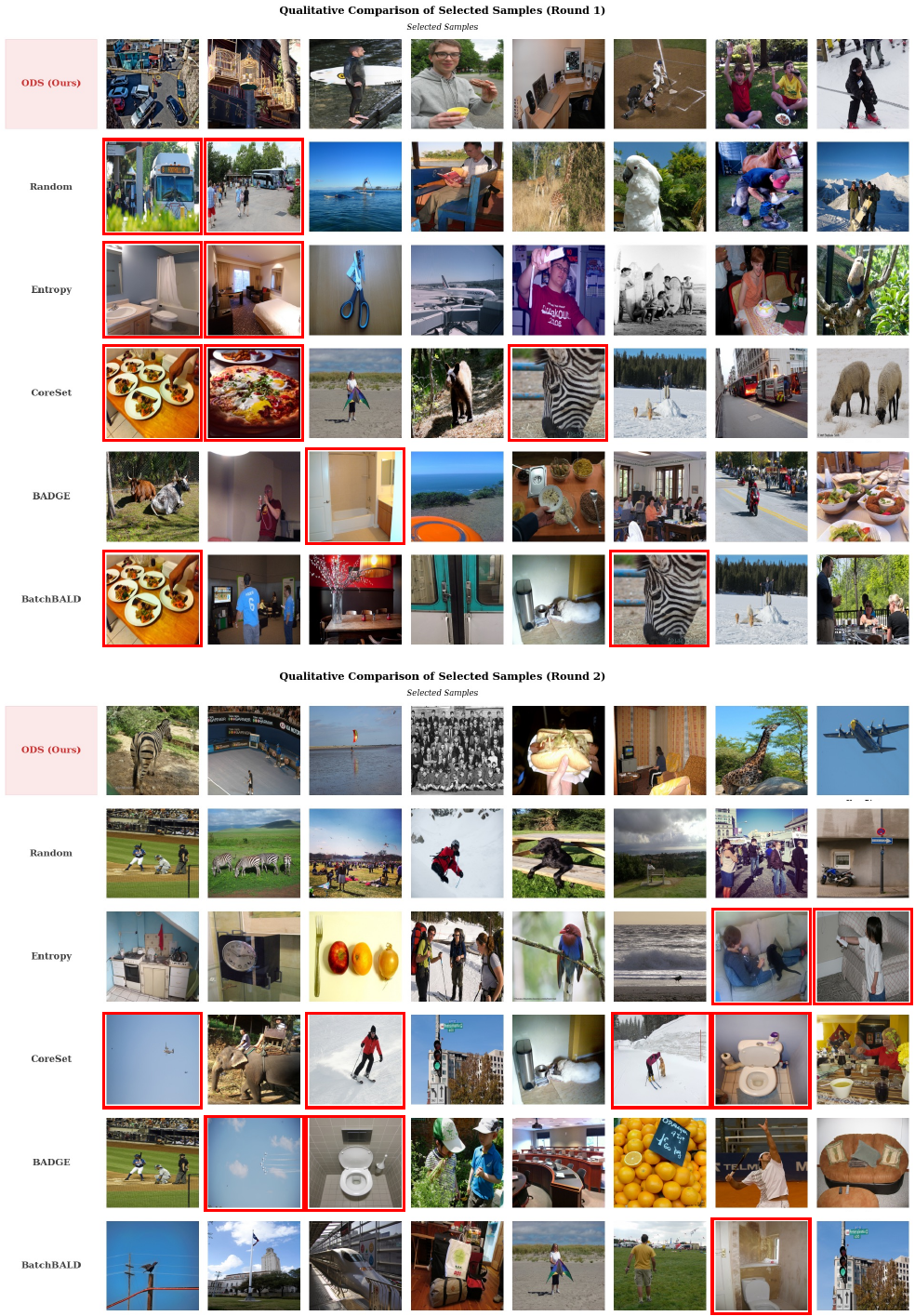}
    \caption{\textbf{Qualitative Comparison of Selected Samples (COCO, Rounds 1--2).}
    Each row shows images selected by a different acquisition strategy.
    \textbf{\textcolor{red}{Red boxes} mark redundant selections by baselines,} where visually or semantically similar samples are repeatedly acquired within the same round.
    In contrast, \textbf{ODS avoids such early-stage redundancy}, allocating its budget to more diverse and complementary visual content.}
    \label{fig:qualitative_round1_2}
\end{figure*}

\begin{figure*}[h!]
    \centering
    \includegraphics[width=0.85\linewidth]{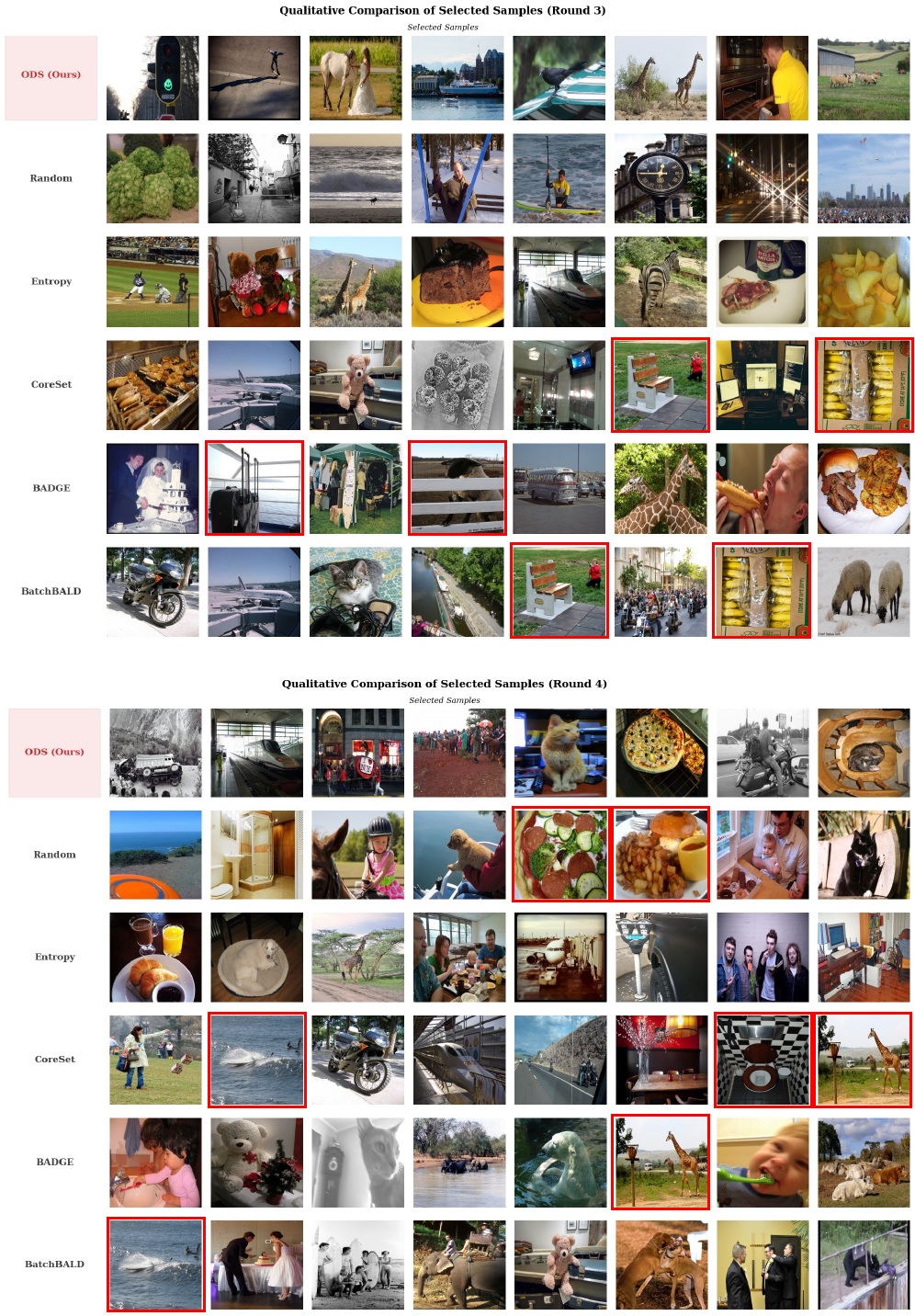}
    \caption{\textbf{Qualitative Comparison of Selected Samples (COCO, Rounds 3--4).}
    \textbf{\textcolor{red}{Red boxes} highlight structural and textural redundancies} among samples selected by baseline methods, such as repeated acquisition of near-identical scenes or objects.
    Despite operating at later rounds with a more informative pool, baselines continue to exhibit inefficient budget usage, whereas \textbf{ODS maintains diverse coverage without collapsing onto narrow visual patterns}.}
    \label{fig:qualitative_round3_4}
\end{figure*}

\begin{figure*}[h!]
    \centering
    \includegraphics[width=0.95\linewidth]{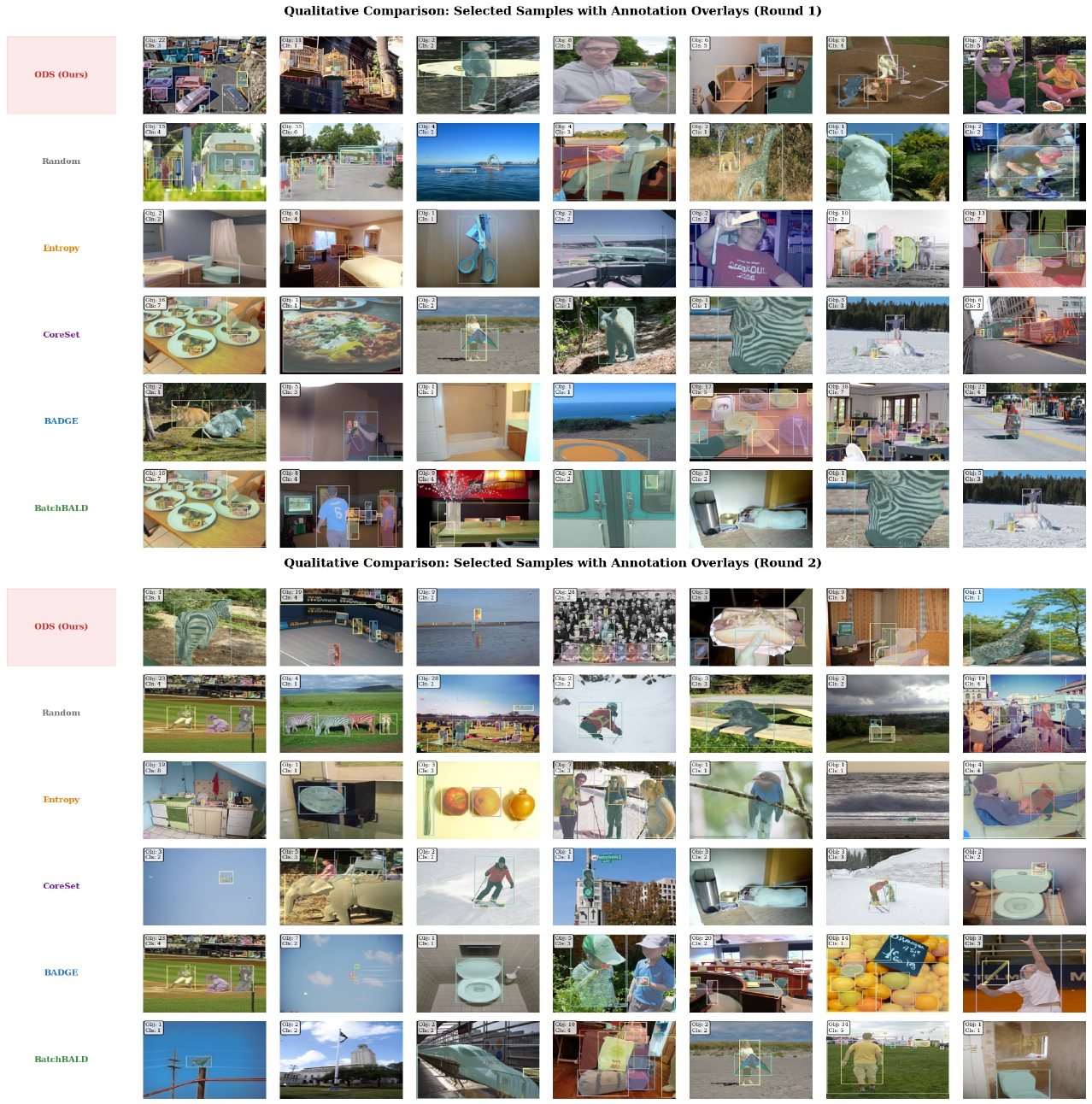}
    \caption{\textbf{Annotation Density Analysis (COCO, Rounds 1 \& 2).}
    We visualize the ground-truth bounding boxes and segmentation masks for selected samples.
    \textbf{ODS (Top Row):} consistently selects images with \textbf{high instance density} (e.g., the cluttered desk in R1, the crowd in R2), providing rich supervision for multi-task learning.
    \textbf{Baselines (Bottom Rows):} frequent failure modes include selecting ``empty'' images with \textbf{sparse or no annotations} (e.g., \textit{Entropy} selecting less informative rooms in R2, \textit{Random} selecting simple single-object images). This confirms that ODS maximizes information gain per pixel.}
    \label{fig:qualitative_overlay_12}
\end{figure*}

\begin{figure*}[h!]
    \centering
    \includegraphics[width=0.95\linewidth]{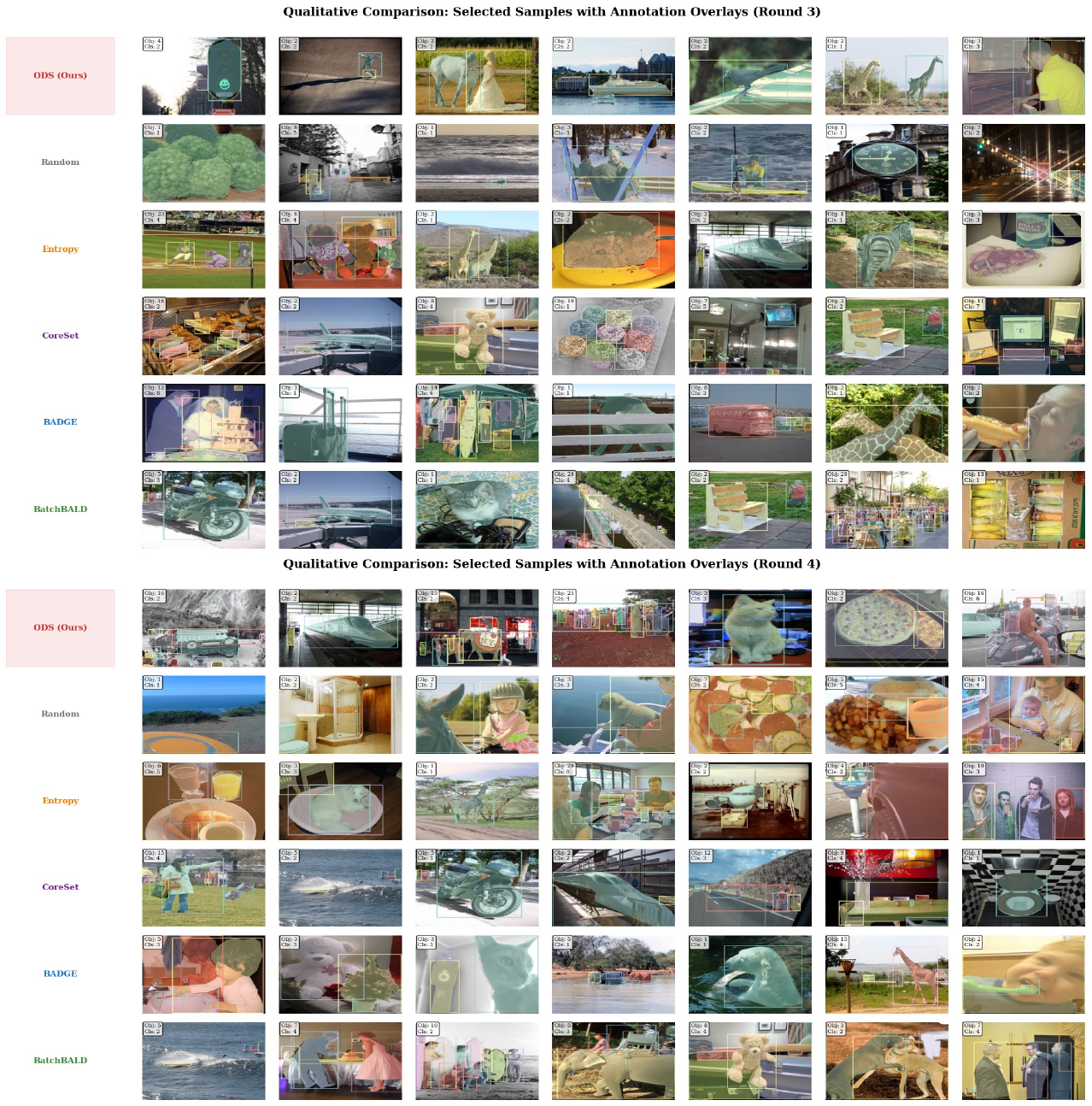}
    \caption{\textbf{Annotation Density Analysis (COCO, Rounds 3 \& 4).}
    Overlays reveal the semantic complexity of selected samples.
    \textbf{ODS (Top Rows):} selects structurally complex scenes with overlapping objects, forcing the model to resolve difficult occlusions.
    \textbf{Baselines:} \textit{CoreSet} and \textit{BatchBALD} often select images with large, simple textures or single large objects, which offer limited geometric challenges compared to the dense scenes selected by ODS.}
    \label{fig:qualitative_overlay_34}
\end{figure*}


\clearpage
\section{Ethics statement}
\vspace{-5pt}
This work investigates acquisition strategies for multi-task learning and collects no new data. All experiments rely on the public COCO, Cityscapes, and NYUv2 benchmarks under their respective licenses. No human-subject research was conducted, and we handle no personally identifiable information beyond the anonymized distributions provided by the dataset creators (e.g., blurred faces and license plates in Cityscapes). Like any active learning method, our acquisition function concentrates the annotation budget on a specific data subset, which may amplify existing pool imbalances. Because our score relies strictly on predictive uncertainty and cross-task dependency without explicit fairness constraints, deployment in safety-critical settings requires auditing the selected batches. We anticipate no risks of misuse beyond those inherent to general label-efficient learning.

\section{Reproducibility statement}
\vspace{-5pt}
All code, configuration files, and analysis scripts required to reproduce our results are publicly available at \href{https://github.com/Juho-Jung/multitask_active_learning_official}{\texttt{github.com/Juho-Jung/multitask\_active\_learning\_official}}. Appendix~\ref{app:implementation} specifies the architectures, optimization settings, and MC-dropout configurations. Table~\ref{tab:protocol_settings} lists the exact $(m, K, \alpha, \gamma)$ hyperparameters for all experimental protocols. Appendix~\ref{app:datasets} documents the dataset splits and seed sets, and Appendix~\ref{app:algorithm} details the complete selection procedure. Unless noted otherwise, results are reported across three random seeds ($42$, $123$, $456$) as mean $\pm$ standard deviation. While non-deterministic GPU operations may cause minor per-round numerical variances, the performance gaps and rank orderings central to our claims are strictly robust and fully reproducible.

\section{Use of AI}
\vspace{-5pt}
Large language models served exclusively as writing assistants for grammar, \LaTeX{} formatting, and literature searches. They contributed no original research ideas, methodological designs, experimental executions, or data analyses. All technical claims, mathematical proofs, and empirical results were produced directly by the human authors, with LLMs limited to secondary proofreading. The authors take full responsibility for the content of this paper. We provide this declaration for transparency, though it falls under the permitted-use category of the NeurIPS LLM policy.

\clearpage
\section*{NeurIPS Paper Checklist}

\begin{enumerate}

\item {\bf Claims}
    \item[] Question: Do the main claims made in the abstract and introduction accurately reflect the paper's contributions and scope?
    \item[] Answer: \answerYes{}
    \item[] Justification: The abstract and introduction (Section~\ref{sec:intro}) state our three contributions --- the Conflict Paradox, Orthogonal Decomposition, and CTMI --- which match the formal results (Proposition~\ref{thm:conflict_paradox}, Assumption~\ref{assum:orthogonality}, Proposition~\ref{prop:attribution}, Algorithm~\ref{alg:ods}) and the empirical evidence (Tables~\ref{tab:main_results}, \ref{tab:ablation_conflict}, \ref{tab:linear_vs_ods}, \ref{tab:robustness_detail}, \ref{tab:pcgrad_ablation}). Empirical scope (COCO, Cityscapes, NYUv2; lightweight scratch encoders by default) is stated up front.
    \item[] Guidelines:
    \begin{itemize}
        \item The answer \answerNA{} means that the abstract and introduction do not include the claims made in the paper.
        \item The abstract and/or introduction should clearly state the claims made, including the contributions made in the paper and important assumptions and limitations. A \answerNo{} or \answerNA{} answer to this question will not be perceived well by the reviewers.
        \item The claims made should match theoretical and experimental results, and reflect how much the results can be expected to generalize to other settings.
        \item It is fine to include aspirational goals as motivation as long as it is clear that these goals are not attained by the paper.
    \end{itemize}

\item {\bf Limitations}
    \item[] Question: Does the paper discuss the limitations of the work performed by the authors?
    \item[] Answer: \answerYes{}
    \item[] Justification: We include a dedicated Limitations section (Section~\ref{sec:limitations}) covering: (L1) the sample-level / objective-level distinction in Proposition~\ref{thm:conflict_paradox}; (L2) covariance-based CTMI being blind to non-linear dependence (mitigated by maximum-entropy lower-bound semantics and validated by distance correlation in Appendix~\ref{app:dcorr}); (L3) the $(m, K, \alpha, \gamma)$ hyperparameters (with sensitivity studies); (L4) the lightweight scratch backbone scope (with a pretrained ResNet-50 study in Appendix~\ref{app:pretrained_ms}).
    \item[] Guidelines:
    \begin{itemize}
        \item The answer \answerNA{} means that the paper has no limitation while the answer \answerNo{} means that the paper has limitations, but those are not discussed in the paper.
        \item The authors are encouraged to create a separate ``Limitations'' section in their paper.
        \item The paper should point out any strong assumptions and how robust the results are to violations of these assumptions (e.g., independence assumptions, noiseless settings, model well-specification, asymptotic approximations only holding locally). The authors should reflect on how these assumptions might be violated in practice and what the implications would be.
        \item The authors should reflect on the scope of the claims made, e.g., if the approach was only tested on a few datasets or with a few runs. In general, empirical results often depend on implicit assumptions, which should be articulated.
        \item The authors should reflect on the factors that influence the performance of the approach. For example, a facial recognition algorithm may perform poorly when image resolution is low or images are taken in low lighting. Or a speech-to-text system might not be used reliably to provide closed captions for online lectures because it fails to handle technical jargon.
        \item The authors should discuss the computational efficiency of the proposed algorithms and how they scale with dataset size.
        \item If applicable, the authors should discuss possible limitations of their approach to address problems of privacy and fairness.
        \item While the authors might fear that complete honesty about limitations might be used by reviewers as grounds for rejection, a worse outcome might be that reviewers discover limitations that aren't acknowledged in the paper. The authors should use their best judgment and recognize that individual actions in favor of transparency play an important role in developing norms that preserve the integrity of the community. Reviewers will be specifically instructed to not penalize honesty concerning limitations.
    \end{itemize}

\item {\bf Theory assumptions and proofs}
    \item[] Question: For each theoretical result, does the paper provide the full set of assumptions and a complete (and correct) proof?
    \item[] Answer: \answerYes{}
    \item[] Justification: All theoretical results state their assumptions explicitly and reference complete proofs. Proposition~\ref{thm:conflict_paradox} (sample-level conflict structure) lists three assumptions and the full proof appears in Appendix~\ref{app:proof_paradox}. Proposition~\ref{prop:attribution} (shared-representation attribution for CTMI) is proved in Appendix~\ref{app:proof_attribution}; the submitted version's maximum-entropy lower-bound claim was withdrawn during discussion and the appendix states explicitly what replaced it and why. The two-stage approximation guarantee (Proposition in Appendix~\ref{app:approx_guarantee}) states the empirical-independence assumption and its $2$-approximation component (Sener \& Savarese). Proof sketches in the main text are paired with formal proofs in the appendix; we are explicit about heuristic vs.\ formal claims (see Remark on sample-level vs.\ objective-level Pareto stationarity).
    \item[] Guidelines:
    \begin{itemize}
        \item The answer \answerNA{} means that the paper does not include theoretical results.
        \item All the theorems, formulas, and proofs in the paper should be numbered and cross-referenced.
        \item All assumptions should be clearly stated or referenced in the statement of any theorems.
        \item The proofs can either appear in the main paper or the supplemental material, but if they appear in the supplemental material, the authors are encouraged to provide a short proof sketch to provide intuition.
        \item Inversely, any informal proof provided in the core of the paper should be complemented by formal proofs provided in appendix or supplemental material.
        \item Theorems and Lemmas that the proof relies upon should be properly referenced.
    \end{itemize}

    \item {\bf Experimental result reproducibility}
    \item[] Question: Does the paper fully disclose all the information needed to reproduce the main experimental results of the paper to the extent that it affects the main claims and/or conclusions of the paper (regardless of whether the code and data are provided or not)?
    \item[] Answer: \answerYes{}
    \item[] Justification: Appendix~\ref{app:implementation} reports the model architectures, the active-learning protocol (initial labeled, budget per round, number of rounds), the optimizer (PCGrad with AdamW, learning rate $10^{-3}$, cosine schedule), batch size, regularization, and dropout rate. ODS-specific hyperparameters $(m, K, \alpha, \gamma)$ are listed per protocol in Table~\ref{tab:protocol_settings}, and Algorithm~\ref{alg:ods} is fully specified. Datasets are public and described in Appendix~\ref{app:datasets}. The code repository is linked in the abstract. 
    \item[] Guidelines:
    \begin{itemize}
        \item The answer \answerNA{} means that the paper does not include experiments.
        \item If the paper includes experiments, a \answerNo{} answer to this question will not be perceived well by the reviewers: Making the paper reproducible is important, regardless of whether the code and data are provided or not.
        \item If the contribution is a dataset and\slash or model, the authors should describe the steps taken to make their results reproducible or verifiable.
        \item Depending on the contribution, reproducibility can be accomplished in various ways. For example, if the contribution is a novel architecture, describing the architecture fully might suffice, or if the contribution is a specific model and empirical evaluation, it may be necessary to either make it possible for others to replicate the model with the same dataset, or provide access to the model. In general. releasing code and data is often one good way to accomplish this, but reproducibility can also be provided via detailed instructions for how to replicate the results, access to a hosted model (e.g., in the case of a large language model), releasing of a model checkpoint, or other means that are appropriate to the research performed.
        \item While NeurIPS does not require releasing code, the conference does require all submissions to provide some reasonable avenue for reproducibility, which may depend on the nature of the contribution. For example
        \begin{enumerate}
            \item If the contribution is primarily a new algorithm, the paper should make it clear how to reproduce that algorithm.
            \item If the contribution is primarily a new model architecture, the paper should describe the architecture clearly and fully.
            \item If the contribution is a new model (e.g., a large language model), then there should either be a way to access this model for reproducing the results or a way to reproduce the model (e.g., with an open-source dataset or instructions for how to construct the dataset).
            \item We recognize that reproducibility may be tricky in some cases, in which case authors are welcome to describe the particular way they provide for reproducibility. In the case of closed-source models, it may be that access to the model is limited in some way (e.g., to registered users), but it should be possible for other researchers to have some path to reproducing or verifying the results.
        \end{enumerate}
    \end{itemize}

\item {\bf Open access to data and code}
    \item[] Question: Does the paper provide open access to the data and code, with sufficient instructions to faithfully reproduce the main experimental results, as described in supplemental material?
    \item[] Answer: \answerYes{}
    \item[] Justification: We release the full implementation at the anonymous repository linked in the abstract, including training/AL loops, all baselines, and configuration files for COCO, Cityscapes, and NYUv2. Datasets are public; access procedures for COCO~\cite{lin2014microsoft}, Cityscapes~\cite{cordts2016cityscapes}, and NYUv2~\cite{silberman2012indoor} follow the standard splits and licensing terms documented in Appendix~\ref{app:datasets}.
    \item[] Guidelines:
    \begin{itemize}
        \item The answer \answerNA{} means that paper does not include experiments requiring code.
        \item Please see the NeurIPS code and data submission guidelines (\url{https://neurips.cc/public/guides/CodeSubmissionPolicy}) for more details.
        \item While we encourage the release of code and data, we understand that this might not be possible, so \answerNo{} is an acceptable answer. Papers cannot be rejected simply for not including code, unless this is central to the contribution (e.g., for a new open-source benchmark).
        \item The instructions should contain the exact command and environment needed to run to reproduce the results. See the NeurIPS code and data submission guidelines (\url{https://neurips.cc/public/guides/CodeSubmissionPolicy}) for more details.
        \item The authors should provide instructions on data access and preparation, including how to access the raw data, preprocessed data, intermediate data, and generated data, etc.
        \item The authors should provide scripts to reproduce all experimental results for the new proposed method and baselines. If only a subset of experiments are reproducible, they should state which ones are omitted from the script and why.
        \item At submission time, to preserve anonymity, the authors should release anonymized versions (if applicable).
        \item Providing as much information as possible in supplemental material (appended to the paper) is recommended, but including URLs to data and code is permitted.
    \end{itemize}

\item {\bf Experimental setting/details}
    \item[] Question: Does the paper specify all the training and test details (e.g., data splits, hyperparameters, how they were chosen, type of optimizer) necessary to understand the results?
    \item[] Answer: \answerYes{}
    \item[] Justification: Section~\ref{sec:experiments} provides the high-level setup (datasets, baselines, fairness controls, the MTL Composite Score and Average Rank metrics). Appendix~\ref{app:implementation} adds full architectures, training hyperparameters (LR, batch size, optimizer, scheduler), the AL protocol (initial labeled $=1000$, $30$ rounds, batch $=32$), MC-Dropout settings ($K=20$, $p=0.05$), and how $(\alpha,\gamma)$ are chosen with adaptive scheduling. Sensitivity studies are reported in Appendices~\ref{app:sensitivity} and~\ref{app:alpha_gamma}.
    \item[] Guidelines:
    \begin{itemize}
        \item The answer \answerNA{} means that the paper does not include experiments.
        \item The experimental setting should be presented in the core of the paper to a level of detail that is necessary to appreciate the results and make sense of them.
        \item The full details can be provided either with the code, in appendix, or as supplemental material.
    \end{itemize}

\item {\bf Experiment statistical significance}
    \item[] Question: Does the paper report error bars suitably and correctly defined or other appropriate information about the statistical significance of the experiments?
    \item[] Answer: \answerYes{}
    \item[] Justification: Main Table~\ref{tab:main_results} reports mean$\pm$std over $3$ seeds $\{42, 123, 456\}$ for all methods on COCO under identical protocols. The pretrained-backbone study (Appendix~\ref{app:pretrained_ms}) is also reported with 3-seed mean$\pm$std. We discuss the statistical-significance limitations of 3 seeds in Section~\ref{sec:limitations} and rely on cross-task / cross-dataset rank consistency as complementary evidence.    
    \item[] Guidelines:
    \begin{itemize}
        \item The answer \answerNA{} means that the paper does not include experiments.
        \item The authors should answer \answerYes{} if the results are accompanied by error bars, confidence intervals, or statistical significance tests, at least for the experiments that support the main claims of the paper.
        \item The factors of variability that the error bars are capturing should be clearly stated (for example, train/test split, initialization, random drawing of some parameter, or overall run with given experimental conditions).
        \item The method for calculating the error bars should be explained (closed form formula, call to a library function, bootstrap, etc.)
        \item The assumptions made should be given (e.g., Normally distributed errors).
        \item It should be clear whether the error bar is the standard deviation or the standard error of the mean.
        \item It is OK to report 1-sigma error bars, but one should state it. The authors should preferably report a 2-sigma error bar than state that they have a 96\% CI, if the hypothesis of Normality of errors is not verified.
        \item For asymmetric distributions, the authors should be careful not to show in tables or figures symmetric error bars that would yield results that are out of range (e.g., negative error rates).
        \item If error bars are reported in tables or plots, the authors should explain in the text how they were calculated and reference the corresponding figures or tables in the text.
    \end{itemize}

\item {\bf Experiments compute resources}
    \item[] Question: For each experiment, does the paper provide sufficient information on the computer resources (type of compute workers, memory, time of execution) needed to reproduce the experiments?
    \item[] Answer: \answerYes{}
    \item[] Justification: Appendix~\ref{app:implementation} states that experiments were conducted on $8\times$ NVIDIA V100 (32GB) GPUs. Appendix~\ref{app:runtime} reports per-round wall-clock acquisition time (mean and median in seconds) for ODS and all baselines, isolating selection cost from training cost.
    \item[] Guidelines:
    \begin{itemize}
        \item The answer \answerNA{} means that the paper does not include experiments.
        \item The paper should indicate the type of compute workers CPU or GPU, internal cluster, or cloud provider, including relevant memory and storage.
        \item The paper should provide the amount of compute required for each of the individual experimental runs as well as estimate the total compute.
        \item The paper should disclose whether the full research project required more compute than the experiments reported in the paper (e.g., preliminary or failed experiments that didn't make it into the paper).
    \end{itemize}

\item {\bf Code of ethics}
    \item[] Question: Does the research conducted in the paper conform, in every respect, with the NeurIPS Code of Ethics \url{https://neurips.cc/public/EthicsGuidelines}?
    \item[] Answer: \answerYes{}
    \item[] Justification: The research uses only publicly released, standard computer-vision benchmarks (COCO, Cityscapes, NYUv2), proposes an active-learning algorithm, and involves no human subjects, sensitive personal data, or deployment scenarios. We have reviewed the NeurIPS Code of Ethics and the work conforms in every respect.
    \item[] Guidelines:
    \begin{itemize}
        \item The answer \answerNA{} means that the authors have not reviewed the NeurIPS Code of Ethics.
        \item If the authors answer \answerNo, they should explain the special circumstances that require a deviation from the Code of Ethics.
        \item The authors should make sure to preserve anonymity (e.g., if there is a special consideration due to laws or regulations in their jurisdiction).
    \end{itemize}

\item {\bf Broader impacts}
    \item[] Question: Does the paper discuss both potential positive societal impacts and negative societal impacts of the work performed?
    \item[] Answer: \answerYes{}
    \item[] Justification: The proposed method reduces the labeled-data requirement for multi-task vision systems, with a positive impact on settings where annotation is expensive (e.g., medical, autonomous driving prototyping). We do not foresee a direct negative societal impact specific to this work; the underlying tasks (classification, detection, segmentation, depth estimation) are general-purpose and not tied to a high-risk deployment. Like any data-efficient learning method, ODS could in principle reduce the cost of training models for downstream applications with negative impact, but the algorithmic contribution is upstream of any specific deployment.
    \item[] Guidelines:
    \begin{itemize}
        \item The answer \answerNA{} means that there is no societal impact of the work performed.
        \item If the authors answer \answerNA{} or \answerNo, they should explain why their work has no societal impact or why the paper does not address societal impact.
        \item Examples of negative societal impacts include potential malicious or unintended uses (e.g., disinformation, generating fake profiles, surveillance), fairness considerations (e.g., deployment of technologies that could make decisions that unfairly impact specific groups), privacy considerations, and security considerations.
        \item The conference expects that many papers will be foundational research and not tied to particular applications, let alone deployments. However, if there is a direct path to any negative applications, the authors should point it out. For example, it is legitimate to point out that an improvement in the quality of generative models could be used to generate Deepfakes for disinformation. On the other hand, it is not needed to point out that a generic algorithm for optimizing neural networks could enable people to train models that generate Deepfakes faster.
        \item The authors should consider possible harms that could arise when the technology is being used as intended and functioning correctly, harms that could arise when the technology is being used as intended but gives incorrect results, and harms following from (intentional or unintentional) misuse of the technology.
        \item If there are negative societal impacts, the authors could also discuss possible mitigation strategies (e.g., gated release of models, providing defenses in addition to attacks, mechanisms for monitoring misuse, mechanisms to monitor how a system learns from feedback over time, improving the efficiency and accessibility of ML).
    \end{itemize}

\item {\bf Safeguards}
    \item[] Question: Does the paper describe safeguards that have been put in place for responsible release of data or models that have a high risk for misuse (e.g., pre-trained language models, image generators, or scraped datasets)?
    \item[] Answer: \answerNA{}
    \item[] Justification: The paper does not release a generative model, scraped dataset, or any artifact with a high risk of misuse. We release only an active-learning algorithm and lightweight task models trained on standard public datasets (COCO, Cityscapes, NYUv2).
    \item[] Guidelines:
    \begin{itemize}
        \item The answer \answerNA{} means that the paper poses no such risks.
        \item Released models that have a high risk for misuse or dual-use should be released with necessary safeguards to allow for controlled use of the model, for example by requiring that users adhere to usage guidelines or restrictions to access the model or implementing safety filters.
        \item Datasets that have been scraped from the Internet could pose safety risks. The authors should describe how they avoided releasing unsafe images.
        \item We recognize that providing effective safeguards is challenging, and many papers do not require this, but we encourage authors to take this into account and make a best faith effort.
    \end{itemize}

\item {\bf Licenses for existing assets}
    \item[] Question: Are the creators or original owners of assets (e.g., code, data, models), used in the paper, properly credited and are the license and terms of use explicitly mentioned and properly respected?
    \item[] Answer: \answerYes{}
    \item[] Justification: All datasets used in the paper are publicly available and are cited at first use: COCO~\cite{lin2014microsoft} (CC BY 4.0 / Flickr terms for images), Cityscapes~\cite{cordts2016cityscapes} (research-only license), NYUv2~\cite{silberman2012indoor} (research-only license). Baseline implementations follow the publicly released reference implementations of BADGE~\cite{ash2019deep}, BatchBALD~\cite{kirsch2019batchbald}, CoreSet~\cite{sener2017active}, PCGrad~\cite{yu2020gradient}, Hekimoglu et al.~\cite{hekimoglu2024active}, and PartAL~\cite{durasov2022partal}, all under their original licenses.
    \item[] Guidelines:
    \begin{itemize}
        \item The answer \answerNA{} means that the paper does not use existing assets.
        \item The authors should cite the original paper that produced the code package or dataset.
        \item The authors should state which version of the asset is used and, if possible, include a URL.
        \item The name of the license (e.g., CC-BY 4.0) should be included for each asset.
        \item For scraped data from a particular source (e.g., website), the copyright and terms of service of that source should be provided.
        \item If assets are released, the license, copyright information, and terms of use in the package should be provided. For popular datasets, \url{paperswithcode.com/datasets} has curated licenses for some datasets. Their licensing guide can help determine the license of a dataset.
        \item For existing datasets that are re-packaged, both the original license and the license of the derived asset (if it has changed) should be provided.
        \item If this information is not available online, the authors are encouraged to reach out to the asset's creators.
    \end{itemize}

\item {\bf New assets}
    \item[] Question: Are new assets introduced in the paper well documented and is the documentation provided alongside the assets?
    \item[] Answer: \answerYes{}
    \item[] Justification: The only new asset is the source code for ODS, released at the anonymous repository linked from the abstract. The repository includes a README, configuration files, training/AL loops, and scripts to reproduce the headline results on each dataset. No new datasets or pre-trained models are released.
    \item[] Guidelines:
    \begin{itemize}
        \item The answer \answerNA{} means that the paper does not release new assets.
        \item Researchers should communicate the details of the dataset\slash code\slash model as part of their submissions via structured templates. This includes details about training, license, limitations, etc.
        \item The paper should discuss whether and how consent was obtained from people whose asset is used.
        \item At submission time, remember to anonymize your assets (if applicable). You can either create an anonymized URL or include an anonymized zip file.
    \end{itemize}

\item {\bf Crowdsourcing and research with human subjects}
    \item[] Question: For crowdsourcing experiments and research with human subjects, does the paper include the full text of instructions given to participants and screenshots, if applicable, as well as details about compensation (if any)?
    \item[] Answer: \answerNA{}
    \item[] Justification: The paper does not involve crowdsourcing or research with human subjects. All experiments are conducted on existing publicly released image datasets.
    \item[] Guidelines:
    \begin{itemize}
        \item The answer \answerNA{} means that the paper does not involve crowdsourcing nor research with human subjects.
        \item Including this information in the supplemental material is fine, but if the main contribution of the paper involves human subjects, then as much detail as possible should be included in the main paper.
        \item According to the NeurIPS Code of Ethics, workers involved in data collection, curation, or other labor should be paid at least the minimum wage in the country of the data collector.
    \end{itemize}

\item {\bf Institutional review board (IRB) approvals or equivalent for research with human subjects}
    \item[] Question: Does the paper describe potential risks incurred by study participants, whether such risks were disclosed to the subjects, and whether Institutional Review Board (IRB) approvals (or an equivalent approval/review based on the requirements of your country or institution) were obtained?
    \item[] Answer: \answerNA{}
    \item[] Justification: The paper does not involve human subjects, so no IRB approval is required.
    \item[] Guidelines:
    \begin{itemize}
        \item The answer \answerNA{} means that the paper does not involve crowdsourcing nor research with human subjects.
        \item Depending on the country in which research is conducted, IRB approval (or equivalent) may be required for any human subjects research. If you obtained IRB approval, you should clearly state this in the paper.
        \item We recognize that the procedures for this may vary significantly between institutions and locations, and we expect authors to adhere to the NeurIPS Code of Ethics and the guidelines for their institution.
        \item For initial submissions, do not include any information that would break anonymity (if applicable), such as the institution conducting the review.
    \end{itemize}

\item {\bf Declaration of LLM usage}
    \item[] Question: Does the paper describe the usage of LLMs if it is an important, original, or non-standard component of the core methods in this research? Note that if the LLM is used only for writing, editing, or formatting purposes and does \emph{not} impact the core methodology, scientific rigor, or originality of the research, declaration is not required.
    \item[] Answer: \answerNA{}
    \item[] Justification: LLMs were not used as a component of the core methodology, scientific results, or experimental design. They may have been used for incidental writing/editing assistance, which per NeurIPS policy does not require declaration.
    \item[] Guidelines:
    \begin{itemize}
        \item The answer \answerNA{} means that the core method development in this research does not involve LLMs as any important, original, or non-standard components.
        \item Please refer to our LLM policy in the NeurIPS handbook for what should or should not be described.
    \end{itemize}

\end{enumerate}

\end{document}